\newlength{\noteWidth}
\long\def\notes#1{\ifinner
{\footnotesize #1}
\else 
\marginpar{\parbox[t]{\noteWidth}{\raggedright\footnotesize#1}}
\fi\typeout{#1}}
\def\notes#1{}
\def\mindex#1{\index{#1}}
\def\sq{\hbox{\rlap{$\sqcap$}$\sqcup$}}
\def\qed{\ifmmode\sq\else{\unskip\nobreak\hfil
\penalty50\hskip1em\null\nobreak\hfil\sq
\parfillskip=0pt\finalhyphendemerits=0\endgraf}\fi\medskip}
\long\def\defbox#1{\framebox[.9\hsize][c]{\parbox{.85\hsize}{%
\parindent=0pt
\baselineskip=12pt plus .1pt      
\parskip=6pt plus 1.5pt minus 1pt 
 #1}}}
\long\def\beginbox#1\endbox{\subsection*{}%
\hbox{\hspace{.05\hsize}\defbox{\medskip#1\bigskip}}%
\subsection*{}}
\def\endbox{}
\def\transpose{{\intercal}}
\newsavebox{\junk}
\savebox{\junk}[1.6mm]{\hbox{$|\!|\!|$}}
\def\limsup{\mathop{\rm lim\ sup}}
\def\liminf{\mathop{\rm lim\ inf}}
\def\argmax{\mathop{\rm arg\, max}}
\def\U{{\sf U}}
\def\state{{\sf X}}
\def\zstate{{\sf Z}}
\newcommand{\field}[1]{\mathbb{#1}}
\def\posRe{\field{R}_+}
\def\Re{\field{R}}
\def\ind{\field{I}}
\def\intgr{\field{Z}}
\def\Co{\field{C}}
\def\bfmath#1{{\mathchoice{\mbox{\boldmath$#1$}}%
{\mbox{\boldmath$#1$}}%
{\mbox{\boldmath$\scriptstyle#1$}}%
{\mbox{\boldmath$\scriptscriptstyle#1$}}}}
\def\bfmU{\bfmath{U}}
\def\bfmX{\bfmath{X}}
\def\bfmY{\bfmath{Y}}
\def\bfmhhaY{\bfmath{\hhaY}} 
\def\bfmhhaY{\hbox to 0pt{$\widehat{\bfmY}$\hss}\widehat{\phantom{\raise 1.25pt\hbox{$\bfmY$}}}}
\def\bfPhi{\bfmath{\Phi}}
\def\til={{\widetilde =}}
\def\tiltheta{\widetilde \theta}
\def\tilSigma{\widetilde{\Sigma}}
\def\tilG{\widetilde{G}}
\def\tiltheta{{\tilde \theta}}
\def\clA{{\cal A}}
\def\clB{{\cal B}}
\def\clD{{\cal D}}
\def\clE{{\cal E}}
\def\clG{{\cal G}}
\def\clH{{\cal H}}
\def\clN{{\cal N}}
\def\clR{{\cal R}}
\def\clU{{\cal U}}
 \def\FRAC#1#2#3{\genfrac{}{}{}{#1}{#2}{#3}}
\def\fraction#1#2{{\mathchoice{\FRAC{0}{#1}{#2}}%
{\FRAC{1}{#1}{#2}}%
{\FRAC{3}{#1}{#2}}%
{\FRAC{3}{#1}{#2}}}}
\def\ddt{{\mathchoice{\FRAC{1}{d}{dt}}%
{\FRAC{1}{d}{dt}}%
{\FRAC{3}{d}{dt}}%
{\FRAC{3}{d}{dt}}}}
\def\ddtp{{\mathchoice{\FRAC{1}{d^{\hbox to 2pt{\rm\tiny +\hss}}}{dt}}%
{\FRAC{1}{d^{\hbox to 2pt{\rm\tiny +\hss}}}{dt}}%
{\FRAC{3}{d^{\hbox to 2pt{\rm\tiny +\hss}}}{dt}}%
{\FRAC{3}{d^{\hbox to 2pt{\rm\tiny +\hss}}}{dt}}}}
\def\half{{\mathchoice{\FRAC{1}{1}{2}}%
{\FRAC{1}{1}{2}}%
{\FRAC{3}{1}{2}}%
{\FRAC{3}{1}{2}}}}
\def\eqdef{\mathbin{:=}}
\def\Expect{{\sf E}}
\def\average#1,#2,{{1\over #2} \sum_{#1}^{#2}}
\def\eye(#1){{\bf(#1)}\quad}
\def\epsy{\varepsilon}
\newtheorem{theorem}{Theorem}[section]
\newtheorem{proposition}[theorem]{Proposition}
\newtheorem{lemma}[theorem]{Lemma}
\def\Lemma#1{Lemma~\ref{#1}}
\def\Proposition#1{Prop.~\ref{#1}}
\def\Theorem#1{Theorem~\ref{#1}}
\def\Section#1{Section~\ref{#1}}
\def\Figure#1{Figure~\ref{#1}}
\def\eq#1/{(\ref{e:#1})}
\newcommand{\beqn}[1]{\notes{#1}%
\begin{eqnarray} \elabel{#1}}
\newcommand{\eeqn}{\end{eqnarray} }
\newcommand{\beq}[1]{\notes{#1}%
\begin{equation}\elabel{#1}}
\newcommand{\eeq}{\end{equation}}
\def\bdes{\begin{description}}
\def\edes{\end{description}}
\def\barc{{\overline {c}}}
\def\barf{{\overline {f}}}
\newcounter{rmnum}
\newenvironment{romannum}{\begin{list}{{\upshape (\roman{rmnum})}}{\usecounter{rmnum}
\setlength{\leftmargin}{14pt}
\setlength{\rightmargin}{8pt}
\setlength{\itemsep}{2pt}
\setlength{\itemindent}{-1pt}
}}{\end{list}}
\newcounter{anum}
\newenvironment{alphanum}{\begin{list}{{\upshape (\alph{anum})}}{\usecounter{anum}
\setlength{\leftmargin}{14pt}
\setlength{\rightmargin}{8pt}
\setlength{\itemsep}{2pt}
\setlength{\itemindent}{-1pt}
}}{\end{list}}
\def\ass(#1:#2){(#1\ref{#1:#2})}
\def\ritem#1{
\item[{\sf \ass(\current_model:#1)}]
}
\newenvironment{recall-ass}[1]{%
\begin{description}
\def\current_model{#1}}{
\end{description}
}
\newcommand{\bd}{\begin{description}}
\newcommand{\ed}{\end{description}}
\newcommand{\bt}{\begin{theorem}}
\newcommand{\et}{\end{theorem}}
\newcommand{\ba}{\begin{array}{rcl}}
\newcommand{\ea}{\end{array}}
\renewenvironment{romannum}{\begin{list}{{\upshape (\roman{rmnum})}}{\usecounter{rmnum}
\setlength{\leftmargin}{14pt}
\setlength{\rightmargin}{8pt}  
\setlength{\itemsep}{2pt}
\setlength{\itemindent}{3pt}
}}{\end{list}}
\def\Real{\text{\rm Re}\,}
\def\termState{{\cal S}}
\def\disc{\gamma}
\def\stepf{\beta}
\def\trans{\intercal}
\def\Alg#1{Alg.~\ref{a:Zapalg}}
\def\Lemma#1{Lemma~\ref{#1}}
\def\Proposition#1{Prop.~\ref{#1}}
\def\Prop#1{Prop.~\ref{#1}}
\def\Theorem#1{Thm.~\ref{#1}}   
\def\state{{\sf X}}
\def\action{{\sf U}}
\def\eqdef{\mathbin{:=}}
\def\elig{\zeta}
\def\bfelig{\bfmath{\zeta}}
\def\uQ{\underline{Q}}
\def\haA{\widehat A}
\def\haAp{\widehat A^{+}}
\def\haAn{\widehat A^{-}}
\def\haAz{\widehat A^{\zeta}}
\def\clA{\mathcal{A}}
\def\barclA{\bar{\cal A}}
\def\barclG{\bar{\cal G}}
\def\barclAp{\bar{\cal A}^+}
\def\barclAn{\bar{\cal A}^-}
\def\barclAz{\bar{\cal A}^{\zeta}} 
\def\clH{{\cal H}}
\def\ind{\field{I}}
\def\posRe{\field{R}_+}
\def\Re{\field{R}}
\def\barGamma{\overline{\Gamma}}
\def\barc{\bar c}
\def\Xt{w}
\def\barxt{\bar \Xt}
\def\dist{\text{dist}}
\def\distn{\text{dist}_{\cal N}}
\def\so{{\scriptstyle {\cal O}}}
\def\unit{\mathbf{1}}
\def\condr{\varsigma}
\title{Zap Q-learning with 
\\
Nonlinear Function Approximation}
\author{
	Shuhang Chen\thanks{Department of Mathematics at the University of Florida, Gainesville.}%
	\and
	Adithya M. Devraj\thanks{Department of ECE at the University of Florida, Gainesville.}%
	\and 
	Fan Lu\footnotemark[2]%
	\and
	Ana Bu\v{s}i\'{c}\thanks{Inria and DI ENS, \'Ecole Normale	Sup\'erieure, CNRS, PSL Research University, Paris, France.      \newline Financial support from ARO grant W911NF1810334 is gratefully acknowledged.  Additional support from EPCN 1935389 \&\ CPS~1646229, and French National Research Agency grant ANR-16-CE05-0008.}%
	\and 
	Sean  Meyn\footnotemark[2]
}
\begin{document}

\maketitle

\begin{abstract}
Zap Q-learning is a recent class of reinforcement learning algorithms, motivated primarily as a means to accelerate convergence.  Stability theory has been absent outside of two restrictive classes: the tabular setting, and optimal stopping.  This paper introduces a new framework for analysis of a more general class of recursive algorithms known as stochastic approximation. Based on this general theory, it is shown that Zap Q-learning is consistent under a non-degeneracy assumption, even when the function approximation architecture is nonlinear. Zap Q-learning with neural network function approximation emerges as a special case, and is tested on examples from OpenAI Gym.  Based on multiple experiments with a range of neural network sizes, it is found that the new algorithms converge quickly and are robust to choice of function approximation architecture.


	%
	%
	%
	%
	%

\end{abstract}

\textbf{Keywords:} Reinforcement learning, Q-learning, Stochastic optimal control

%
%
%
%
%
%
%
%

\section{Introduction}
\label{sec:intro}

A primary goal of 
reinforcement learning (RL)
is the creation of algorithms 
that are convergent, 
converge at the fastest possible rate,
and result in a policy for control that has near optimal performance. 
This paper focuses on algorithm design to ensure stability of the algorithm, consistency, and techniques to obtain qualitative insight on the rate of convergence. One framework for algorithm design is the theory of stochastic approximation (SA).  The main contribution of this work is a new class of Q-learning algorithms that are convergent even for nonlinear function approximation architectures, such as neural networks.

Consider a Markov decision process (MDP) model with state-input sequence $\{ (X_n,U_n) : n\ge 0\}$,  and let $\{ Q^\theta(x,u) : \theta\in\Re^d \}$ denote a family of approximations of the Q-function;
the vector $\theta\in\Re^d$ might correspond to weights in a neural network. 
One popular formulation of Q-learning is defined by the recursion,
\begin{equation}
\theta_{n+1} = \theta_n + \alpha_{n+1} \clD_{n+1}  \elig_n
\label{e:Q0}
\end{equation}
in which  $\{\alpha_{n+1}\} $  is the non-negative step-size sequence, $\{\clD_{n+1} \}$  is the scalar sequence of \textit{temporal differences} [recalled in eq.~\eqref{e:safn-d}],  and   $\{\elig_n\} $ the
\textit{eligibility vectors}:  a typical choice is
\begin{equation}
\elig_n = \nabla_\theta Q^\theta(X_n,U_n) \Big|_{\theta =\theta_n}
\label{e:elig}
\end{equation}
The Q-learning algorithm of Watkins can be expressed as \eqref{e:Q0}, with a linear function approximation $Q^\theta =\theta^\transpose \psi$, 
and the basis functions $\psi(x,u)$ being indicator functions of each state-input pair.   

The theory of Q-learning with function approximation has not caught up with the famous success stories in applications.    
Consistency of the Q-learning algorithm in the tabular setting was established in the seminal work of 
Watkins and Dayan \cite{watday92a}.   Counter-examples   soon followed:   
the recursion \eqref{e:Q0} may fail to converge,  even in the   linear function approximation setting   \cite{bai95, maeszebhasut10}.   
Moreover, even when convergence holds,   Q-learning can be extremely slow \cite{sze97,eveman03,devmey17b}.  

The so-called \textit{ODE method} of SA theory is typically regarded as a method of analysis for stochastic recursions.  We take the opposite view,  regarding an ODE as a first step in algorithm design.  This is motivated in part by the recent work \cite{suboycan14,shidusujor19} concerning the value of high-resolution approximations of ODEs   for applications in optimization,  and the enormous insight gained from a careful inspection of candidate   ODEs.

The Q-learning algorithms considered in the present work are designed to solve a root-finding problem of the form $\barf(\theta^*)\eqdef \Expect[\elig_n\clD_{n+1}] \big|_{\theta=\theta^*}=0$.   For ODE design,  we let $\Xt_t\in\Re^d$ denote the state of the ODE at time $t$,  and seek a vector field $\nu\colon \Re^{d+1}\to\Re^d$ to define the evolution:
\[
\ddt \Xt_t =\nu(\Xt_t,t) 
\]
The vector field is designed so that $\Xt_t\to\theta^*$ from each initial condition, and so that the ODE solutions can be efficiently approximated using a discrete-time algorithm driven by observations.

One approach is to apply gradient descent to solve the  non-convex optimization problem:
\begin{equation}
\label{e:gq-obj}
\min_\theta J(\theta)=\min_{\theta}\half \barf(\theta)^\intercal M\barf(\theta),\qquad \mbox{with} \quad M>0
\end{equation}
which results in the ODE with  time homogeneous vector field:
\begin{equation}
\ddt \Xt_t =  -   [\partial_\theta  \barf\, (\Xt_t)]^\transpose M\barf(\Xt_t)
\label{e:GQODE}
\end{equation} 
The GQ-learning algorithm of \cite{maeszebhasut10} can be regarded as a direct discrete-time translation of this ODE,  using $M = \Expect[\elig_n\elig_n^\transpose]^{-1}$.

This approach is discussed in Nesterov's monograph \cite[Section 4.4.1]{nes18} for general root finding problems,  who warns that it can lead to numerical instability: ``...if our system of equations is linear, then such a transformation squares the condition number of the problem''.   
He goes on to warn that it can lead to a ``squaring the number of iterations'' to obtain the desired error bound.   

%
%
%

The main results of the present paper are related to
the  \textit{Newton-Raphson flow},  defined by another time homogeneous vector field $\nu(\Xt) =  - [\partial_\theta  \barf\, (\Xt)]^{-1} \barf(\Xt)$:
\begin{equation}
\ddt \Xt_t = G_t  \barf(\Xt_t)\,,\qquad \textit{with}\quad  G_t = -[\partial_\theta  \barf\, (\Xt_t)]^{-1}
\label{e:ZapODE1}
\end{equation}
A change of variables
leads to the  linear dynamics,
$\ddt \barf(\Xt_t)  = - \barf(\Xt_t) $,  with solution
\begin{equation}
\barf(\Xt_t) =  \barf(\Xt_0) \, e^{-t}\,, \qquad t\ge 0
\label{e:ZapODE2}
\end{equation}
Thus,  provided solutions to \eqref{e:ZapODE1} are bounded,
the algorithm is consistent
in the sense that the limit points of $\{\Xt_t \}$ lie in the set of roots $\Theta^* \eqdef  \{ \theta : \barf(\theta) =0 \}$.

In most applications it is not possible to determine a-priori if the matrix $\partial_\theta  \barf\,(\theta)$ is   full rank, which motivates a \emph{regularized  Newton-Raphson flow}:   
\begin{equation}
\ddt \Xt_t =    - [\epsy I +A(\Xt_t) ^\intercal A(\Xt_t) ]^{-1}  A(\Xt_t) ^\intercal     \barf(\Xt_t)  \,,   \qquad  A(\Xt_t)=  \partial_\theta \barf\,  ( \Xt_t)
\label{e:ZAPODE}
\end{equation} 
It is shown in \Prop{t:odestable} that \eqref{e:ZAPODE} is stable, provided    $V = \|\barf\|^2$  is a coercive function on $\Re^d$; $V$ serves as a Lyapunov function for  \eqref{e:ZAPODE},   giving 
\begin{equation} 
\lim_{t\to\infty}  \barf(\Xt_t)   = 0
\label{e:QAf0}
\end{equation}
Hence the limit points of solutions lie in the set $\Theta^*  $.



Details of the algorithms and the  contributions of this paper require additional background.   
Consider the  
$d$-dimensional SA recursion of Robbins and Monro \cite{robmon51a,bor08a}:
\begin{equation}
\theta_{n+1} = 
\theta_n + \alpha_{n+1} f(\theta_n, \Phi_{n+1}) 
\label{e:SA}
\end{equation}
in which $\bfPhi$ is an irreducible Markov chain on a finite state space $\zstate$,     $\{\alpha_n\}$ is a     non-negative gain sequence,
and   $f\colon\Re^d\times \zstate\to\Re^d$.  
It is assumed that $\bfPhi$ has a unique invariant probability mass function (pmf) on $\zstate$.  
The algorithm is designed to approximate roots of the function  $\barf(\theta) =\Expect[ f(\theta, \Phi_{n+1}) ]$  (with expectation in steady-state).   
Under mild conditions, the  SA recursion shares the same limit points as the ODE $
\ddt\Xt_t \! = \! \barf(\Xt_t)
$  \cite{kusyin97,bor08a,benmetpri12}.  More recently it has been established that boundedness of the stochastic recursion follows from a stability condition for the ODE \cite{bormey00a,bor08a,rambha17}  (prior to this work,  stability of the stochastic recursion required separate arguments  \cite{tsi94a}).  

%


One approach to obtain a rate of convergence in SA is   through the linearization:
\begin{equation}
\clE_{n+1}  = \clE_n + \alpha_{n+1} [A_*  \clE_n  + \Delta_{n+1}  ]  \,, \qquad {\clE_0 = \theta_0 -\theta^*}
\label{e:SAlinearized}
\end{equation}
where   $A_*  = \partial_\theta \barf\,  ( \theta^*)$ is called the \emph{linearization matrix}.
The sequence   $\Delta_{n+1} \eqdef f(\theta^*, \Phi_{n+1}) $ is assumed to admit a Central Limit Theorem (CLT) in the usual sense, with asymptotic covariance
\begin{equation}
\Sigma_\Delta  = \sum_{k=-\infty}^\infty  \Expect[\Delta_k\Delta_0^\transpose]
\label{e:SigmaDelta}
\end{equation}
where the expectations are in steady state.   The approximation $\clE_n \approx  \tiltheta_n \eqdef \theta_n - \theta^*$ holds
under additional stability assumptions on the stochastic recursion \eqref{e:SA}, which in particular leads
to a CLT for the scaled error $\sqrt{n} \tiltheta_n$ \cite{kusyin97,bor08a,benmetpri12}. 

The asymptotic covariance $\Sigma_{\theta}$ in the CLT  has a simple form, subject to the eigenvalue test:
\begin{equation}
\Real(\lambda)<-\half \qquad \textit{for each eigenvalue $\lambda$ of $A_*$}
\label{e:eigTest}
\end{equation}   
Under this assumption,
$\Sigma_{\theta}$ is the unique solution of the   Lyapunov equation,
\begin{equation}
[\half I +A_*]\Sigma_\theta + \Sigma_\theta[\half I + A_*]^\transpose + \Sigma_{\Delta} = 0
\label{e:eigTestLyap}
\end{equation}  
For a fixed but arbitrary initial condition   $(\Phi_0,\clE_0)$, denote
$\Sigma_n = \Expect[\clE_n\clE_n^\transpose]$. 
The following bounds were obtained in \cite{chedevbusmey20a} for the linear recursion \eqref{e:SAlinearized}:  
\begin{romannum}
\item  If \eqref{e:eigTest} holds, then
$\Sigma_n= n^{-1}\Sigma_\theta +O(n^{-1-\delta})$ for some $\delta >0$.


\item If there exists an eigenvalue of $A_*$ with $\rho \eqdef -\Real(\lambda) < \half$,  and associated eigenvector $v$ satisfying $\Sigma_\Delta v\neq 0$, then
the convergence rate of $\Sigma_n $ to zero is no faster than $n^{-2\rho} $.
\end{romannum}


 Even though the recursion for Watkins' Q-learning is of the form \eqref{e:Q0}, with $\clD_{n+1}$ a \emph{non-linear} function of $\theta_n$, techniques of \cite{sze97} can be used to show that the estimates obtained using the non-linear recursion \emph{couple} with the estimates of a linear recursion of the form \eqref{e:SAlinearized} (see \cite{devmey20a} for a discussion).
The slow convergence for Watkins' algorithm can then be explained by the fact that we are in case (ii) for the linearized recursion,  whenever the discount factor satisfies $\disc>\half$:   for a standard step-size rule, the maximal eigenvalue of $A_*$ in Watkins' Q-learning is $\lambda=-(1-\disc)$,  and the condition $\Sigma_\Delta v\neq 0$ holds under very mild conditions on the MDP  \cite{devmey17b}.  It follows that the mean square error converges to zero at rate   $n^{-2 (1-\disc)}$.   
 For GQ-learning, it is shown in Appendix~\ref{sec:analysis-gq} that the maximal eigenvalue is greater than $ - (1-\disc)^2$, which is consistent with Nesterov's warning.



The slow convergence can be remedied by scaling the step-size by a constant $g>1$ (sufficiently large so that the matrix $\half I+ gA_* $ is Hurwitz).    
For tabular Q-learning any value satisfying $g> 1/[2(1-\disc)]$ will suffice,  while for GQ-learning the scaling must be increased beyond $1/[2(1-\disc)^2]$. 
  Unfortunately, this approach may lead to very high variance.   



%


\paragraph{Contributions}
\textbf{(i)}   A generalization of the Zap SA algorithm of \cite{devmey17b, devmey20a} is proposed.
\\
{\bf{\emph{Zap SA Algorithm:}}} Initialize $\theta_0 \in \Re^d$, $\haA_0 \in \Re^{d \times d}$, $\epsy > 0$. Update for $n \geq 0$:
\vspace{-0.02in}
\begin{subequations}
\begin{flalign}
 \haA_{n+1} & = \haA_n + \stepf_{n+1}  \bigl[  A_{n+1} (\theta_n) - \haA_n  \bigr],   \quad  &A_{n+1}(\theta) & \eqdef  \partial_\theta  f \, (\theta,\Phi_{n+1})
\\
\theta_{n+1} & = \theta_n + \alpha_{n+1}G_{n+1} f(\theta_n, \Phi_{n+1}),   \quad    & G_{n+1} & \eqdef  -  [\epsy I + \haA_{n+1}^\intercal \haA_{n+1}]^{-1}  \haA_{n+1}^\intercal 
\end{flalign}
\label{e:ZapSAalgo}
\end{subequations}
The algorithm is designed so that it approximates the ODE \eqref{e:ZAPODE},  which requires  $\alpha_n = o(\stepf_n)$.

\textbf{(ii)}   A special case of this new class of SA algorithms leads to a significant generalization of   \textit{Zap Q-learning}, for which convergence theory is obtained even in  a \emph{nonlinear} function approximation setting.    The reliability in neural network function approximation architectures is tested through simulations.
	

\textbf{(iii)}   The main technical contribution of this paper is an extension of SA theory to Zap Q-learning, and as a byproduct also GQ-learning,  by exploiting approximate convexity/concavity of the functions $f$ and $\barf$ defined implicitly in  \eqref{e:ZapSAalgo}.


Contribution~(iii) resolves a significant challenge for both Zap Q-learning and GQ-learning:   the \textit{approximation} in stochastic approximation.  
Standard theory does not apply  because   $A(\theta) \eqdef \partial_\theta  \barf(\theta)$  is not continuous.   
An ODE approximation for GQ-learning is obtained in \cite{maeszebhasut10} through the assumption that noise $\{\Delta_n\}$ defined below \eqref{e:SAlinearized} is martingale-difference.  Assumption (Q3) of  \cite{devmey17b} is introduced to obtain an ODE approximation without this restrictive assumption on noise.  However,  this     implicit assumption   cannot be tested a-priori.


\paragraph{Literature review}


The observation that many RL algorithms can be cast as  SA first appeared in \cite{tsi94a,jaajorsin94a}.
Soon after, SA theory was applied to obtain stability theory for TD-learning with linear function approximation  under minimal assumptions  \cite{tsiroy97a}; the authors   discussed challenges for nonlinear approximation architectures.  



In the case of Q-learning,    ODE approximations are nonlinear and not understood outside of a few special cases (notably tabular, and optimal stopping with linear function approximation).   There are many counterexamples showing that conditions on the function class are required in general,  even in a linear function approximation setting  \cite{baird1995residual} (also see \cite{tsiroy94, sut95, gor00}).   
There has also been progress for general linear function approximation: 
sufficient conditions for convergence of the basic Q-learning algorithm \eqref{e:Q0}  was obtained in \cite{melmeyrib08}, with finite-$n$ bounds appearing recently in \cite{chezhadoa_19}, and stability of GQ-learning  was established in  \cite{maeszebhasut10}  subject to assumptions slightly stronger than (A1)--(A3) in the present paper.  In particular, it
is assumed in \cite[Assumption~L3]{maeszebhasut10} that $\partial_\theta  \barf\, (\theta)$ is everywhere nonsingular. In \cite{chizhuzhamic19}, the authors obtained regret bounds for Q-learning in an episodic setting, under a \emph{linear MDP} (linear dynamics and linear rewards) assumption, stronger than the assumptions imposed here.


Stability theory for off-policy TD-learning faces similar challenges as Q-learning.   A consistent algorithm is introduced in  \cite{sutszemae08} for linear function approximation, using the same ideas as in  \cite{maeszebhasut10}; this theory is extended to non-linear function approximation in  \cite{shadoidav09}.

%


 To the best of our knowledge, the ODE  \eqref{e:ZapODE1} was introduced in the economics literature, which led to the comprehensive analysis  by Smale~\cite{sma76}   for   smooth $\barf $.
The term \textit{Newton-Raphson flow} for \eqref{e:ZapODE1} was introduced in the deterministic control literature \cite{shibucwarseaege18,warseaegebuc17}.  
The Zap SA algorithm   was introduced at the same time, and based on the same ODE~\cite{devmey17a}.

The motivation of \cite{devmey17a} was centered entirely on optimizing the asymptotic covariance of stochastic approximation, and in particular Q-learning with tabular basis;  see  \cite{kusyin97,benmetpri12} for history of convergence rate theory in SA,  and \cite{kon02,kontsi04} for application to actor-critic methods.
While the motivation here is stability,   results in \Section{s:var_approx_proof} strongly suggest that the asymptotic covariance is approximately optimal for the regularized Zap Q-learning algorithm  introduced here;   a   ``tightness argument'' is required to complete the proof.

The analysis   in this paper can be cast in the general framework of stochastic approximation based on differential inclusions (see \cite[Chapter 5]{bor08a} and its references).    This general framework guided the research reported here.  New in this paper is the proof of convergence of Zap~Q-learning via an ODE approximation,  made possible by the special structure of   the recursion.

\section{Zap~Q-learning with Nonlinear Function Approximation}
\label{sec:zap-q-learning}

%
\paragraph{Preliminaries}

We restrict to a discounted reward optimal control problem, with finite state space $\state$, finite input space $\action$, reward function $r:\state\times \action\rightarrow \Re$, and discount factor $\disc\in (0, 1)$.   
The Q-function is defined as the maximum over all possible input sequences $\{U_n : n \geq 1\}$ of the total discounted reward: 
\begin{equation}
Q^*(x,u)\eqdef \max_\bfmU\sum_{n=0}^\infty \disc^n\Expect[r(X_n,U_n)\mid X_0=x \,, U_0=u]\,,\qquad x\in\state, u\in\action
\label{e:Q}
\end{equation}
Extensions to other criteria are straightforward (e.g.,    average cost or weighted shortest path).

Let $P_u$ denote the state transition matrix when input $u\in \action$ is taken. It is known that the Q-function is the unique solution to the Bellman equation \cite{bertsi96a}:
\begin{equation}
\label{e:qfunction}
Q^*(x,u) = r(x,u) + \disc\sum_{x'\in\state}P_u(x,x')\uQ^*(x')        
\end{equation}
where $\uQ(x) \eqdef\max_{u\in\action}Q(x,u)$ for any function $Q:\state\times\action\rightarrow \Re$. 


Consider a (possibly nonlinear) parameterized family of candidate approximations $\{Q^\theta:\theta\in \Re^d\}$, wherein $Q^\theta:\state\times\action\rightarrow \Re$ for each $\theta$,   and  the associated family of policies $ \phi^\theta(x) \in\argmax_{u} Q^\theta(x,u)$,  $x\in\state$.   To avoid ambiguities when the maximizer is not unique, we enumerate all stationary policies as $\{\phi^{(i)}:1\leq i\leq\ell_\phi\}$,  and specify 
 \begin{equation}
 \label{e:phi}
 \phi^\theta \eqdef \phi^{(\kappa)}\,, \qquad \text{ where } \quad \kappa \eqdef \min\{i:\phi^{(i)} (x) \in\argmax_u Q^\theta(x,u), \mbox{ for all } x\in\state \} 
 \end{equation}
 The recursion \eqref{e:Q0} is designed to compute  an approximate solution of \eqref{e:qfunction},  defined as the solution to the root-finding problem:
\begin{equation}
\label{e:galerkin}
\barf(\theta^*) = 0,\qquad \mbox{with }\quad\barf(\theta)\eqdef\Expect \big [ \big(r(X_n, U_n) + \disc\uQ^{\theta}(X_{n+1}) - Q^{\theta}(X_n,U_n) \big) \elig_n\big]
\end{equation}
where  the expectation is in steady state.

It is convenient to denote $\Phi_{n+1}\eqdef (X_{n+1}, X_n,U_{n+1}, U_n)$, with state space $\zstate\eqdef \state^2\times \action^2 $.
It is assumed throughout the paper that $\elig_n = \zeta(\theta_n,\Phi_n)$ for a function $\zeta\colon\Re^d \times \zstate\to\Re^d$.

\paragraph{Algorithm}


The Zap SA algorithm \eqref{e:ZapSAalgo} to solve \eqref{e:galerkin}
is obtained on  specifying 
\begin{subequations}
\label{e:safn}
\begin{align}
\clD(\theta_n,\Phi_{n+1}) &\eqdef r(X_n, U_n) + \disc\uQ^{\theta_n}(X_{n+1}) - Q^{\theta_n}(X_n,U_n)  
  \label{e:safn-d} 
  \\
f(\theta_n,\Phi_{n+1}) &\eqdef \clD(\theta_n,\Phi_{n+1}) \elig_n 
\label{e:safn-f} 
\end{align}
\end{subequations}


At points of differentiability, the derivative of $\barf$ has a simple form:
\begin{equation}
\label{e:barfder}
A(\theta) \eqdef \partial_\theta  \barf(\theta)= \Expect[\elig_n \big(\disc\partial_\theta  Q^\theta(X_{n+1}, \phi^{\theta}(X_{n+1})) - \partial_\theta  Q^\theta(X_{n},U_{n}) \big) + \clD(\theta,\Phi_{n+1})  \partial_\theta  \elig_n ]
\end{equation}
The Zap SA algorithm for Q-learning is exactly as described in \eqref{e:ZapSAalgo} with $f$ defined in \eqref{e:safn-f},  and $A_{n+1}(\theta)$ defined to be the term inside the expectation \eqref{e:barfder}:
\begin{subequations}
\begin{align}
A_{n+1} &=  \elig_n[\disc \partial_\theta Q^{\theta_n}(X_{n+1},\phi^{\theta_n}(X_{n+1})) - \partial_\theta  Q^{\theta_n}(X_n,U_n) ] + \clD(\theta_n, \Phi_{n+1}) \partial_\theta  \elig_n 
\label{e:zap-A}\\
\haA_{n+1} &= \haA_n + \stepf_{n+1}  \bigl[  A_{n+1} - \haA_n  \bigr]
\label{e:zap-Ahat} \\
G_{n+1} &= -[\epsy I +\haA_{n+1}^\transpose\haA_{n+1} ]^{-1}\haA_{n+1}^\transpose
\label{e:zap-G} \\
\theta_{n+1}  & = \theta_n +\alpha_{n+1}G_{n+1} f(\theta_n,\Phi_{n+1})
\label{e:zap-theta}
\end{align}%
\label{e:zap}%
\end{subequations}%
Recall that $\phi^{\theta_n}$ is uniquely determined by \eqref{e:phi} in the definition of $A_{n+1}$.

The step-size sequences $\{\alpha_n\}$ and $\{\stepf_n\}$ satisfy standard requirements for two-time-scale SA algorithms \cite{bor08a}: $\stepf_n/\alpha_n\to\infty$ as $n\to \infty$.  For concreteness, in analysis we fix  
\begin{equation}
\alpha_n = 1/(n+n_0),\quad  \stepf_n = \alpha_n^\rho\,,  \qquad n\ge 1\, , \qquad \text{\it with    } n_0 \geq 1 \,, \,\, \rho\in (0.5,1)
\label{e:gains}
\end{equation}
The theory in this paper is focused on decreasing step-size mainly because theory of SA is more mature in this context.   For constant step-size, with $\alpha_n\equiv \alpha$,  $\stepf_n\equiv \stepf$, with $\stepf  = k \alpha$ for fixed $k\gg 1$,  convergence of the algorithm can proceed by viewing the joint process $(\theta_n, \haA_n,\Phi_n)$ as a time-homogeneous Markov chain.     Based on \cite[Theorem 2.3]{bormey00a} and \cite[Chapter 9]{bor08a}, it is conjectured  that there exists    $\bar{\alpha}>0$ such that 
\begin{equation}
\label{e:cs}
\lim_{n\to\infty}
\Expect[\|\theta_n - \theta^*\|^2] = O(\alpha)  \,, \qquad \alpha\in [0, \bar{\alpha}]\, .
\end{equation}
Unfortunately, the mixing time of the Markov chain will increase with decreasing $\alpha$.

\paragraph{Convergence Analysis}

Given that $f$ in \eqref{e:safn-f} is non-smooth in $\theta$, analysis is cast in the theory of generalized subgradients of non-smooth functions.  
Consider first the temporal difference term $\clD:\Re^d\times \zstate\to \Re$. For each $z\in \zstate$, the set of generalized subgradients of $\clD(\theta, z)$ at $\theta_0$ is a convex set of row vectors, denoted by $\partial_\theta \clD(\theta_0, z)$ \cite[Chapter 10]{fra13}.
A vector $\vartheta \in \partial_\theta \clD(\theta_0, z)$ has the defining property,
\begin{equation}
\label{e:subg-s}
  \vartheta v \leq \lim_{s \downarrow 0}\frac{\clD(\theta_0+  s v, z) - \clD(\theta_0, z)}{s} \,, \qquad v\in\Re^d 
\end{equation}
The limit exists because $\clD(\theta, z)$ is the pointwise maximum   of smooth functions~\cite[Theorem 10.22]{fra13}.  
The generalized subgradient of $f(\theta,z)$ exists under additional assumptions.

Recall that $\elig_n = \zeta(\theta_n,\Phi_n)$ for each $n$.  It is assumed henceforth that $\zeta$ is  differentiable in $\theta$.   In the presentation here we impose the additional assumption that  the vector-valued function $\zeta$ has   \textit{non-negative entries}.  We then obtain a version of the chain rule:  
\begin{equation}
\label{e:subg-s-f}
  \partial_\theta f(\theta_0, z) = \{   \zeta(\theta_0,z)  \vartheta  + \clD(\theta_0, z) \partial_\theta  \zeta(\theta_0,z)  :  \vartheta\in  \partial_\theta \clD(\theta_0, z)\} 
\end{equation}
We obtain in \Lemma{t:chain_rule+}
a similar representation for the set of generalized subgradients of $\barf$:
\begin{equation}
\clA(\theta) \eqdef \Bigl\{  A\in \Re^{d\times d}  : 
Av \leq \lim_{s\downarrow 0} \frac{\barf(\theta + sv) - \barf(\theta)}{s} \,, \quad v\in \Re^d  \Bigr \}
\label{e:der_barf1}
\end{equation}
Non-negativity is relaxed in the appendix,  based on a signed decomposition of $\zeta$.
 



\smallskip
\noindent
\textbf{Assumptions:}
\begin{itemize}

\item[{\textbf{(A1)}}] The joint process $(\bfmX, \bfmU)$ is an irreducible Markov chain with unique invariant pmf $\varpi$.

\item[{\textbf{(A2)}}]   $Q$ and $\zeta$ are Lipschitz continuous and twice continuously differentiable in $\theta$; $f(\theta,z)$ is Lipschitz continuous for each $z\in\zstate$;     $\|\barf\|$ is coercive;
$A^\transpose \barf(\theta) \neq 0$ for   $\theta \not\in \Theta^*$, $   A\in \clA(\theta)$.


\item[{\textbf{(A3)}}]  The set $\Theta^*$ is a singleton, so that there is a unique $\theta^*\in \Re^d$ satisfying $\barf(\theta^*)=0$.


\end{itemize}
Assumption A1 rules out $\epsilon$-greedy policies and other parameter-dependent choices.   It is likely that the theory can be extended using the general theory in \cite[Sections~6.2 and 6.3]{karbha16,bor08a}.  

The second and third assumptions are first applied to the ODE  \eqref{e:ZAPODE}:
the coercive assumption in A2 implies boundedness of solutions,  and this with A3   implies global asymptotic stability of \eqref{e:ZAPODE}.


 

It is also assumed throughout the paper that $\{\theta_n\}$ is bounded.  This can be verified under additional assumptions  (which are easily satisfied for linear function approximation), based on an extension of the ODE approximation used to prove \Theorem{t:ZapQ-main}.  Details may be found in Appendix \ref{s:bdd_theta}.  
The following summarizes the main results of this paper: 

\begin{theorem}
\label{t:ZapQ-main}

Let  $\{\theta_n\}$ be the parameter sequence obtained from the
 Zap Q-learning algorithm \eqref{e:zap}, with some fixed $\epsy>0$.   If this sequence is bounded, then
\begin{romannum}

\item If Assumptions~A1--A2 hold, then  $\lim_{n \to \infty}  \barf(\theta_n)=0$   
  a.s.. 
\item If Assumptions~A1--A3 hold, then 
 $\lim_{n \to \infty} \theta_n = \theta^*$ a.s.. 

 \qed
\end{romannum}
\end{theorem}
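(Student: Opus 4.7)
The plan is to set up a two-timescale stochastic approximation analysis in the spirit of Borkar, adapted to handle the non-smoothness of $\barf$ through the generalized subgradient calculus already developed in the excerpt. Since $\alpha_n = o(\stepf_n)$, the matrix recursion \eqref{e:zap-Ahat} lives on the fast timescale and the parameter recursion \eqref{e:zap-theta} on the slow timescale. The regularization $\epsy I$ in \eqref{e:zap-G} is crucial: on any bounded set of $\haA_{n+1}$, the gain matrix $G_{n+1}$ satisfies $\|G_{n+1}\| \le \|\haA_{n+1}\|/\epsy$, and since $\{\theta_n\}$ is bounded by hypothesis and $A(\theta)$ is bounded on bounded sets under (A2), the sequence $\{G_{n+1}\}$ is uniformly bounded. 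This ensures the slow recursion has bounded increments and the ODE approximation is well-defined.

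Treating $\theta_n$ as quasi-static, the fast recursion for $\haA_n$ is driven by $A_{n+1}(\theta_n)$ of \eqref{e:zap-A}. Under (A1), a Poisson-equation decomposition reduces the Markov-sample noise to a martingale-difference term plus asymptotically negligible telescoping terms, so that $\haA_n$ asymptotically tracks a limit in the convex set $\clA(\theta_n)$ defined in \eqref{e:der_barf1}; the chain rule \eqref{e:subg-s-f}, together with \Lemma{t:chain_rule+}, identifies the expectation of $A_{n+1}(\theta)$ (for $\theta$ frozen) with an element of $\clA(\theta)$. Substituting this into the slow recursion and interpolating $\theta_n$ into continuous time yields the differential inclusion
\[
\dot{\Xt}_t \in \bigl\{-[\epsy I + A^\intercal A]^{-1} A^\intercal \barf(\Xt_t) : A \in \clA(\Xt_t)\bigr\},
\]
which is precisely the set-valued version of \eqref{e:ZAPODE}. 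Applying the ODE method for differential inclusions (Borkar, Ch.~5) to the interpolated process gives convergence of $\theta_n$ to the internally chain-transitive invariant sets of this inclusion.

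To finish, invoke \Prop{t:odestable}: with $V(\theta) = \tfrac12\|\barf(\theta)\|^2$ and any element $A \in \clA(\Xt_t)$, a direct computation gives $\dot V = -\barf(\Xt_t)^\intercal A[\epsy I + A^\intercal A]^{-1} A^\intercal \barf(\Xt_t) \le 0$, with equality only when $A^\intercal \barf(\Xt_t) = 0$, i.e.\ (by (A2)) only when $\Xt_t \in \Theta^*$. LaSalle's theorem for differential inclusions then forces the chain-transitive sets to lie inside $\Theta^*$, giving $\barf(\theta_n) \to 0$ a.s.\ and proving (i). Under (A3), $\Theta^* = \{\theta^*\}$ is a singleton, and coercivity of $\|\barf\|$ plus boundedness of $\{\theta_n\}$ upgrades this to $\theta_n \to \theta^*$ a.s., proving (ii).

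The principal obstacle is the non-smoothness of $\barf$: the single-valued derivative $A(\theta)$ breaks down precisely where the argmax in \eqref{e:phi} is non-unique, and on any such exceptional set the fast-timescale drift need not be a continuous function of $\theta$. This is why the analysis must be carried out in the set-valued framework, with $\clA(\theta)$ in place of $A(\theta)$, and why verification of upper semicontinuity and integrability of the generalized subgradient map — implicitly packaged into \Lemma{t:chain_rule+} via the non-negativity assumption on $\zeta$ and the signed decomposition promised in the appendix — is the key technical step enabling the two-timescale argument above.
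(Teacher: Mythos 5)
Your high-level architecture --- two timescales, the set-valued drift $\clA(\theta)$ in place of the discontinuous $A(\theta)$, and a Lyapunov argument with $V=\half\|\barf\|^2$ --- is the framework the authors themselves say guided them, and your endgame (drift inequality forces $\barf(\theta_n)\to 0$; then A3 plus coercivity gives $\theta_n\to\theta^*$) matches the paper's. But there is a genuine gap at the crux. You assert that a Poisson-equation decomposition of the fast recursion shows $\haA_n$ ``asymptotically tracks a limit in the convex set $\clA(\theta_n)$.'' Poisson decomposition handles the Markov sampling noise for a \emph{frozen} $\theta$, but it does not address the real difficulty, which is that the frozen-$\theta$ mean drift $\Expect[A_{n+1}(\theta)]$ (computed with the tie-breaking selection \eqref{e:phi}) is a \emph{discontinuous} function of $\theta$. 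The standard two-timescale tracking lemma requires continuity of this map; without it, the claim that $\haA_n$ converges to the set $\clA(\theta_n)$ --- rather than to some unidentified average of nearby values --- does not follow, and consequently neither does your differential-inclusion limit. This is exactly the obstruction you name in your final paragraph, but naming it is not the same as resolving it, and \Lemma{t:chain_rule+} (a representation formula for $\clA(\theta)$) does not resolve it.

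The paper's resolution is a one-sided, \emph{averageable} surrogate for the derivative: because $\clD(\cdot,z)$ is a pointwise maximum of smooth functions, one has the approximate-convexity inequality $f(\theta_n+v,\Phi_{n+1})\ge f(\theta_n,\Phi_{n+1})+A_{n+1}v-b_T\|v\|^2\unit$ (\Lemma{t:taylor-f}). Summing this over a fast-timescale block and invoking the law of large numbers for the Markov chain yields $\haA_n v\le \barf(\theta_n+v)-\barf(\theta_n)+b_T\|v\|^2\unit+o(1)$ uniformly in $\|v\|\le 1$ (\Prop{t:sum-int}, \Prop{t:At-bound}), i.e.\ $\distn(\haA_n,\theta_n)\to 0$, which is what certifies $\dist(\haA_n,\clA(\theta_n))\to 0$. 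You would need this inequality (or an equivalent device) before any inclusion-limit machinery can be applied. A second, smaller discrepancy: the paper explicitly does \emph{not} establish a true ODE or differential-inclusion limit; it extracts subsequential limits of the matrix processes only in the weak $L_2$ topology (\Prop{t:sa-sub-seq}) and preserves just the drift inequality $\ddt V(\Xt_t)\le -b_V\|\clA_t^\transpose c_t\|^2$, concluding via the elementary Lyapunov lemma (\Lemma{t:odeFoster}) rather than LaSalle for inclusions. Your invocation of the Borkar Chapter~5 limit theorem would additionally require verifying upper semicontinuity and compact convexity of the full set-valued vector field $\theta\mapsto\{-[\epsy I+A^\intercal A]^{-1}A^\intercal\barf(\theta):A\in\clA(\theta)\}$, which you do not do.
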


\paragraph{Convergence Rate}

 Establishing a CLT for the scaled error sequence $\{\sqrt{n} \tiltheta_n\}$ requires a ``tightness bound'' \cite[Chapter 8, Lemma 5]{bor08a} and the following:
\begin{itemize}
\item [\textbf{(A4)}] $f(\theta,z)$ is smooth in a neighborhood of $\theta^*$ for   $z\in\zstate$, and $A(\theta^*) = \partial_\theta \barf(\theta^*)$ is non-singular.
\end{itemize}
Tightness is used to justify an approximation of the algorithm with its linearization \eqref{e:SAlinearized}.
The proof of tightness is left to future work.    In  Appendix \ref{s:var_approx_proof} we consider the linearization, and show that the asymptotic covariance satisfies
  $\Sigma_\theta  = \Sigma_\theta^* +  \epsy^2\Sigma_\theta^{(2)} + O(\epsy^3)$,    where $ \Sigma_\theta  $ is the asymptotic covariance obtained for Zap Q-learning \eqref{e:zap},  $ \Sigma_\theta^* $ is the optimal covariance,   and  $\Sigma_\theta^{(2)} $ is identified in \Prop{t:zapq}.

\paragraph{Overview of Proof of \Theorem{t:ZapQ-main}}  [complete proofs are found in the appendix]

The first step is analysis of the ODE \eqref{e:ZAPODE}  that $\{\theta_n\}$ aims to approximate. It is shown in \Lemma{t:odestable} that  it admits at least one solution from each initial condition.  If in addition (A3) holds, then the ODE is globally asymptotically stable.   These conclusions are obtained through a uniform approximation based on a smooth vector field.    

A significant challenge is establishing solidarity between the ODE and the stochastic recursion.   To illustrate the main ideas, consider the linear parameterization   $Q^\theta = \psi^\transpose \theta$, with $\elig_n$ non-negative so that the chain rule \eqref{e:subg-s-f} holds.
We then obtain subgradients of the components of   $f$:   
\begin{equation}
\label{e:linear-sub}
\begin{aligned}
f(\theta_n+v, \Phi_{n+1}) &= \max_u\big\{  r(X_n, U_n) + [ \disc \psi(X_{n+1}, u) - \psi(X_n,U_n) ]^\trans  (\theta_n + v)\big\}  \elig_n\\
&\geq f(\theta_n, \Phi_{n+1}) + A_{n+1}v \,, \qquad v\in\Re^d
\end{aligned}
\end{equation}
The update equation for $\haA_{n+1}$ in \eqref{e:zap-Ahat} is used to obtain the averaged version of \eqref{e:linear-sub}:
\begin{equation}
\label{e:linear-sub-mean}
\barf(\theta_n+v) \geq \barf(\theta_n) + \haA_{n+1}v + o(1) 
	\,, \qquad v\in\Re^d\,, \  \|v\|\le 1
\end{equation}
where $o(1)\to 0$ as $n\to\infty$, uniformly in $v$. 
 
The arguments are considerably more complex when $Q^\theta$ is non-linear, and the positivity assumption on $\elig_n$ is relaxed.  In particular, without positivity,   neither $f$ nor $\barf$ admit the generalized subgradients.    Fortunately, the techniques developed for the special case can be adapted to demonstrate that $\haA_{n+1}v$   approximates the directional derivative $\barf'(\theta_n; v)$ for each  $v$ and all large $n$.   This leads to a proof that  $\barf(\theta_n)$ shares many attractive properties of $\barf(\Xt_t)$ from ODE \eqref{e:ZAPODE}.   Though we have not been able to establish a true ODE approximation as defined in the traditional sense,  these properties lead to the convergence results in  \Theorem{t:ZapQ-main}.



\section{Numerical Results}
\label{s:numerics}

The   Zap Q-learning algorithm was tested on three examples from OpenAI gym: Mountain Car, Acrobot, and Cartpole \cite{openAIgym}. 
The   approximation of $Q^\theta$ was obtained based on a neural network, so that  the parameter $\theta\in \Re^d$ represents weights in the neural network. 
Rather than achieving the best score for specific tasks, the objective of the experiments surveyed in this section was to investigate the stability and consistency of the Zap Q-learning algorithm across different domains, and  varying neural network sizes.   Common in each experiment:  a feedforward neural network that is 
fully connected, 
using the Leaky ReLU activation function.

\begin{figure}[ht]
	\centering
	\includegraphics[width=\textwidth]{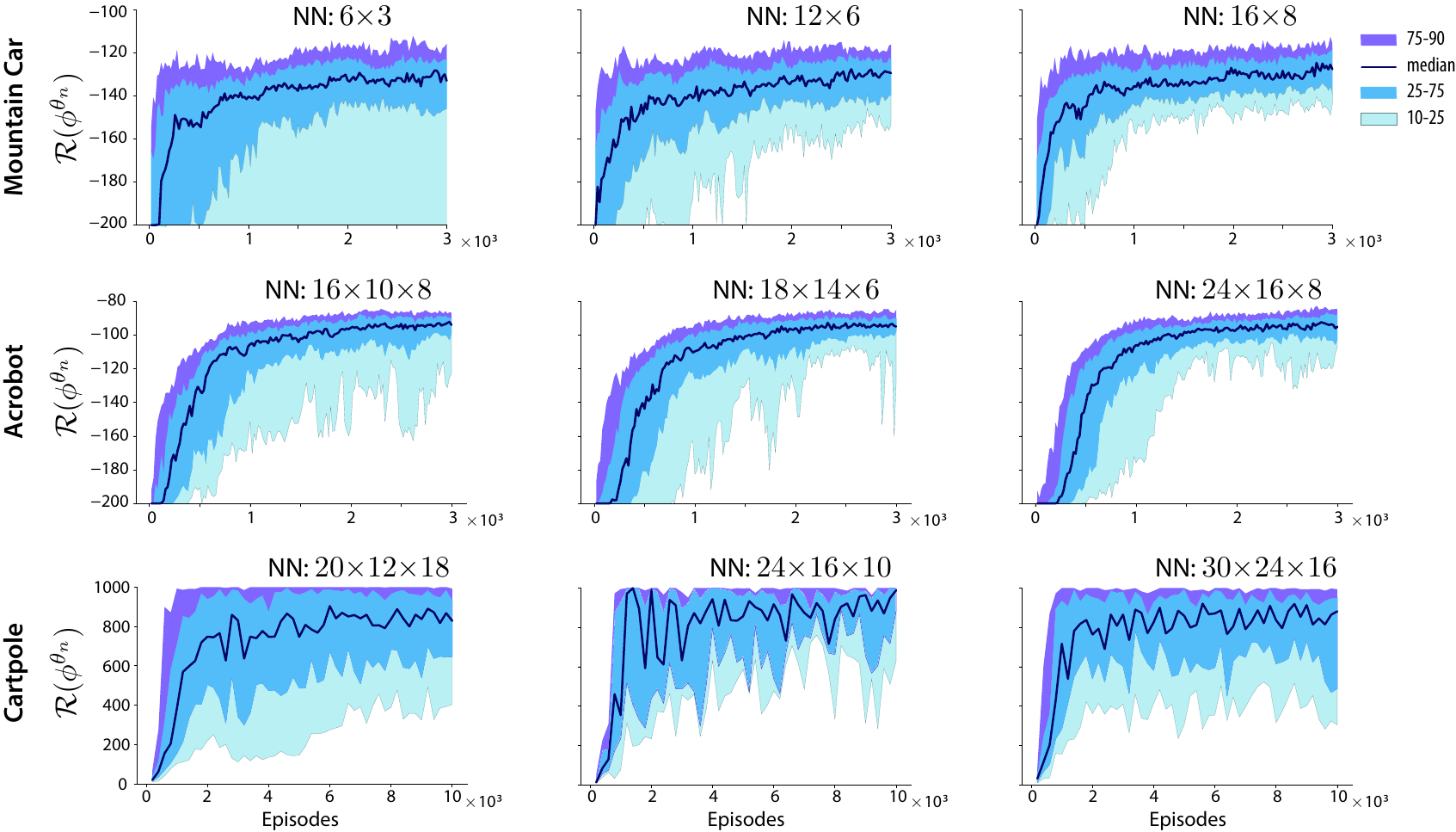}

	\caption{Average rewards for the three examples, shown by percentile. }
	\label{f:examples}
\end{figure}


The goal in each of the three examples is to collect as many rewards as possible before the state reaches a terminal set denoted $\termState \subset \state$.  
To avoid infinite values we introduce a deterministic upper bound $\bar{\tau}\ge 1$,
and consider the bounded horizon $\tau = \min(\bar{\tau}, \tau_\termState)$ with
$\tau_\termState = \min\{n\ge 1 :X_n \in \termState\}$.
The   Q-function is denoted
\[
Q^*(x,u) \eqdef \max \Expect[\sum_{n=0}^{\tau -1} r (X_n, U_n)\mid  X_0=x \,, U_0 = u]\, , 
\]
which for $\bar{\tau}=\infty$ solves the Bellman equation
\begin{equation}
\label{e:bellman-sp}
Q^*(x,u) =  r(x,u) + \Expect[\ind\{X_1 \notin \termState\} \uQ^*(X_1)\mid X_0=x, U_0 = u]
\end{equation}
Following the    roadmap outlined in \Section{sec:zap-q-learning},  we seek an approximate solution to \eqref{e:bellman-sp}
 based on a root-finding problem analogous to \eqref{e:galerkin}:  find $\theta^*$ such that
\begin{equation}
\label{e:galerkin-ssp}
\barf(\theta^*) = 0\,, \quad \text{with}\quad \barf(\theta) \eqdef \Expect \bigl[\elig_n\bigl(r(X_n, U_n) +\ind\{X_{n+1} \notin \termState\} \uQ^{\theta}(X_{n+1}) - Q^{\theta}(X_n,U_n)\bigr) \bigr]
\end{equation}
where the distribution of $X_0$ is given. 
The Zap-Q algorithm \eqref{e:zap} is easily adapted to the modified definition of $\barf$ in \eqref{e:galerkin-ssp}. 

The performance of  the greedy policy $\phi^\theta$ induced by $Q^{\theta}$ is denoted
\begin{equation}
\label{e:avr-reward}
\clR(\phi^\theta) \eqdef \Expect \Big[\sum_{n=0}^{\tau-1} r(X_n, \phi^\theta(X_n)) \Big]
\end{equation}
Specifics of the meta-parameters and the details of how \eqref{e:avr-reward} was estimated are contained in \Section{s:numDetails}.  

\begin{figure}[h!]
	
	\centering
	\includegraphics[width=\textwidth]{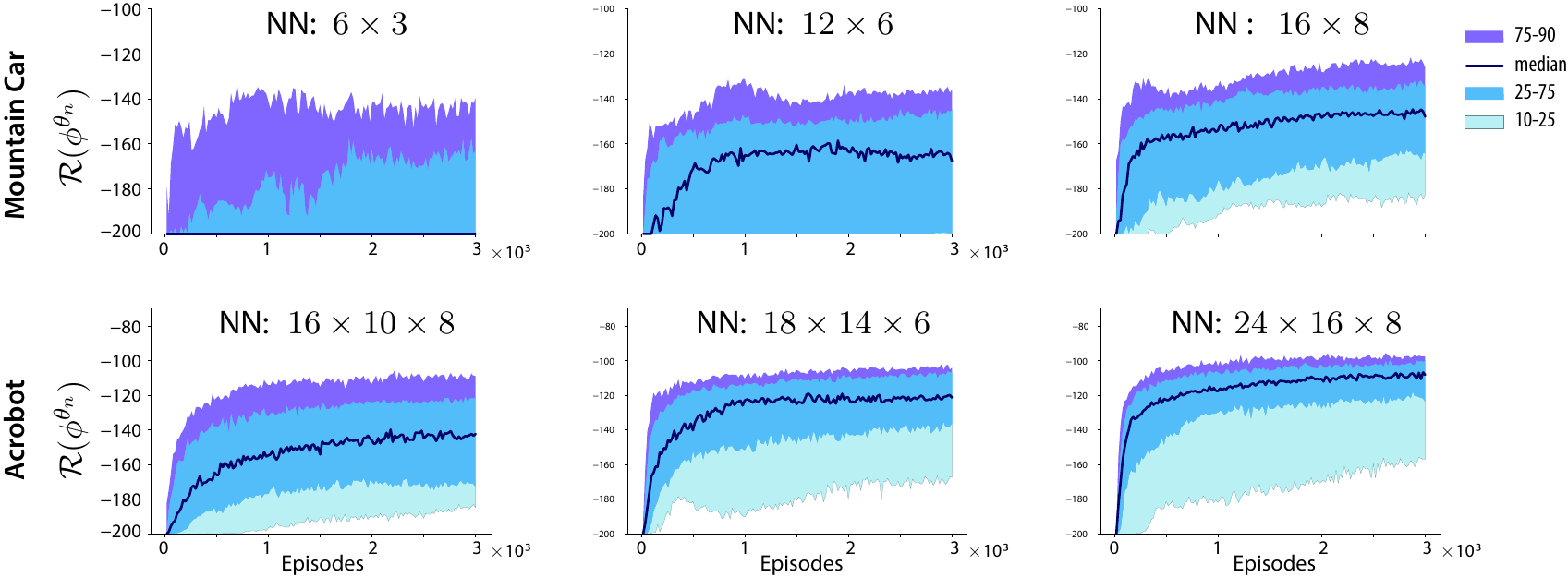}
	\caption{Average rewards with decreasing step-size, shown by percentile. }
	\label{f:ds}
\end{figure}

Two minor modifications of the algorithm were used in these experiments to reduce complexity:  
\\
 \textit{1.  Periodic gain update.}
 An integer $N_d > 1$ was fixed,   and the gain $G_{n}$ appearing in  \eqref{e:zap-theta} was updated only for   integer multiples of $N_d $.
 In particular, the matrix inversion step was only performed at these iterations.   
Letting $N$ denote the total number of iterations, the overall complexity of running this algorithm for $N$ iterations is in the worst-case $O(Nd^{3}/N_d + Nd^2)$  (see  Appendix \ref{s:numDetails} for further discussion on complexity).  
We observed that $N_d=50$ worked well for all experiments, and the performance was unchanged from $N_d = 1$.   

 \textit{2.  Periodic eligibility update.}
The definition of the eligibility vector in \eqref{e:elig} was modified to break $\theta$-dependency: $\elig_n \eqdef \nabla_\theta Q^{\theta_{i(n)}}(X_n, U_n)$,  with   $i(n) \eqdef (\lfloor n/N_\zeta + 1\rfloor - 1)N_{\zeta} $,  and $N_\zeta = 2000$. 
  We then ignored the term $\clD(\theta_n,\Phi_{n+1})\partial_\theta \elig_n$   when computing $A_{n+1}$ based on    \eqref{e:zap-A}.   For comparison, we performed experiments in which this term was included  (increasing complexity considerably since  second derivatives are required \cite{bunwei94}), and saw no improvement in performance.


Experiments were performed with both decreasing step-size (defined in \eqref{e:gains}), and constant step-size. We found that constant step-size implementations were more reliable for the Mountain car and Acrobot examples, and the diminishing step-size gave better results for Cartpole.  	\Figure{f:examples} shows  results obtained for these choices, and selected results with diminishing step-sizes are contained in \Figure{f:ds}. The size of neural networks indicated in the figure
  refers only to hidden layers.  To obtain the quantiles shown,  each experiment was repeated 50 times, with parameters randomly initialized by the Kaiming uniform method \cite{hezharensun15, pasgro17}. The first column   shows results from the smallest network for which we obtained reliable results for the particular example.







\section{Conclusions}

Zap Q-learning   is provably consistent with nonlinear function approximation, under very general conditions.   Theoretical questions remain, such as extension to more general exploration strategies, and convergence properties for more general step-size rules.  There are also architectural questions.  For example, the definition of the eligibility vector \eqref{e:elig} is not sacred.   Better overall performance, and simpler conditions for convergence may be achieved through alternatives (there are obvious choices for deterministic control systems rather than MDPs). 

Better algorithm design requires a better theory for function approximation in Q-learning.   The parameter estimates in both Zap Q-learning and the basic algorithm  \eqref{e:Q0} are solutions to the root-finding problem \eqref{e:galerkin}.   How do we bound the Bellman error,  or the absolute error $|Q^{\theta^*} - Q^*|$?     The goal is to create a theory as complete as TD-learning with linear function approximation,  for which vector space concepts bring crisp answers \cite[Theorem 1]{tsiroy97a}.





The matrix-inversion step in the algorithm  may be a barrier to application of Zap-Q in some problems.  A simple approach to reduce complexity is described in the numerical results, and we expect  to obtain much more efficient implementations, perhaps by applying a distributed implementation \cite{anigupkorregsin20},  and adapting techniques from stochastic optimization.

We are currently exploring the application of the techniques in this paper to analyze Deep Q-learning \cite{volkordav15},  and application of Zap-SA to actor-critic methods.

\clearpage 

\bibliographystyle{abbrv}
\bibliography{strings,markov,q,NIPS19extras}  

\def\cprime{$'$}\def\cprime{$'$}
\begin{thebibliography}{10}

\bibitem{openAIgym}
{OpenAI Gym website}.
\newblock {\small\url{github.com/openai/gym/wiki}}.

\bibitem{anigupkorregsin20}
R.~Anil, V.~Gupta, T.~Koren, K.~Regan, and Y.~Singer.
\newblock Second order optimization made practical.
\newblock {\em arXiv preprint arXiv:2002.09018}, 2020.

\bibitem{bai95}
L.~Baird.
\newblock Residual algorithms: Reinforcement learning with function
  approximation.
\newblock In A.~Prieditis and S.~Russell, editors, {\em Machine Learning
  Proceedings 1995}, pages 30 -- 37. Morgan Kaufmann, San Francisco (CA), 1995.

\bibitem{baird1995residual}
L.~Baird.
\newblock Residual algorithms: Reinforcement learning with function
  approximation.
\newblock In {\em Machine Learning Proceedings 1995}, pages 30--37. Elsevier,
  1995.

\bibitem{benmetpri90}
A.~Benveniste, M.~M\'etivier, and P.~Priouret.
\newblock {\em Adaptive algorithms and stochastic approximations}, volume~22 of
  {\em Applications of Mathematics (New York)}.
\newblock Springer-Verlag, Berlin, 1990.
\newblock Translated from the French by Stephen S. Wilson.

\bibitem{benmetpri12}
A.~Benveniste, M.~M{\'e}tivier, and P.~Priouret.
\newblock {\em Adaptive algorithms and stochastic approximations}.
\newblock Springer, 2012.

\bibitem{bertsi96a}
D.~Bertsekas and J.~N. Tsitsiklis.
\newblock {\em Neuro-Dynamic Programming}.
\newblock Atena Scientific, Cambridge, Mass, 1996.

\bibitem{shadoidav09}
S.~Bhatnagar, D.~Precup, D.~Silver, R.~S. Sutton, H.~R. Maei, and
  C.~Szepesv{\'a}ri.
\newblock Convergent temporal-difference learning with arbitrary smooth
  function approximation.
\newblock In {\em Advances in neural information processing systems}, pages
  1204--1212, 2009.

\bibitem{bor08a}
V.~S. Borkar.
\newblock {\em Stochastic Approximation: A Dynamical Systems Viewpoint}.
\newblock {Hindustan Book Agency and Cambridge University Press (jointly)},
  {Delhi, India and Cambridge, UK}, 2008.

\bibitem{bormey00a}
V.~S. Borkar and S.~P. Meyn.
\newblock The {ODE} method for convergence of stochastic approximation and
  reinforcement learning.
\newblock {\em SIAM J. Control Optim.}, 38(2):447--469, 2000.
\newblock (see also {\it IEEE CDC}, 1998).

\bibitem{bunwei94}
W.~L. Buntine and A.~S. Weigend.
\newblock Computing second derivatives in feed-forward networks: a review.
\newblock {\em IEEE Transactions on Neural Networks}, 5(3):480--488, 1994.

\bibitem{chedevbusmey20a}
S.~Chen, A.~Devraj, A.~Busic, and S.~Meyn.
\newblock Explicit mean-square error bounds for monte-carlo and linear
  stochastic approximation.
\newblock In {\em Proceedings of the Twenty Third International Conference on
  Artificial Intelligence and Statistics}, pages 4173--4183, Online, 26--28 Aug
  2020.

\bibitem{chezhadoa_19}
Z.~Chen, S.~Zhang, T.~T. Doan, S.~T. Maguluri, and J.-P. Clarke.
\newblock Performance of {Q}-learning with linear function approximation:
  Stability and finite-time analysis.
\newblock {\em arXiv: Optimization and Control}, 2019.

\bibitem{fra13}
F.~Clarke.
\newblock {\em Functional analysis, calculus of variations and optimal
  control}, volume 264.
\newblock Springer Science \& Business Media, 2013.

\bibitem{devmey17a}
A.~M. Devraj and S.~P. Meyn.
\newblock Fastest convergence for {Q-learning}.
\newblock {\em ArXiv e-prints}, July 2017.

\bibitem{devmey17b}
A.~M. Devraj and S.~P. Meyn.
\newblock Zap {Q-learning}.
\newblock In {\em Proceedings of the 31st International Conference on Neural
  Information Processing Systems}, 2017.

\bibitem{devmey20a}
A.~M. Devraj and S.~P. Meyn.
\newblock {Q-learning with Uniformly Bounded Variance: Large Discounting is Not
  a Barrier to Fast Learning}.
\newblock {\em arXiv e-prints}, page arXiv:2002.10301, Feb. 2020.

\bibitem{eva90}
L.~C. Evans.
\newblock {\em Weak convergence methods for nonlinear partial differential
  equations}.
\newblock Number~74 in CBMS Regional Conference Series. American Mathematical
  Soc., 3rd edition, 1990.

\bibitem{eveman03}
E.~Even-Dar and Y.~Mansour.
\newblock Learning rates for {Q}-learning.
\newblock {\em Journal of Machine Learning Research}, 5(Dec):1--25, 2003.

\bibitem{gor00}
G.~J. Gordon.
\newblock Reinforcement learning with function approximation converges to a
  region.
\newblock In {\em Proc. of the 13th International Conference on Neural
  Information Processing Systems}, pages 996--1002, Cambridge, MA, USA, 2000.
  MIT Press.

\bibitem{hezharensun15}
K.~He, X.~Zhang, S.~Ren, and J.~Sun.
\newblock Delving deep into rectifiers: Surpassing human-level performance on
  imagenet classification.
\newblock In {\em Proceedings of the IEEE international conference on computer
  vision}, pages 1026--1034, 2015.

\bibitem{horjoh12}
R.~A. Horn and C.~R. Johnson.
\newblock {\em Matrix analysis}.
\newblock Cambridge university press, 2012.

\bibitem{jaajorsin94a}
T.~Jaakola, M.~Jordan, and S.~Singh.
\newblock On the convergence of stochastic iterative dynamic programming
  algorithms.
\newblock {\em Neural Computation}, 6:1185--1201, 1994.

\bibitem{chizhuzhamic19}
C.~Jin, Z.~Yang, Z.~Wang, and M.~I. Jordan.
\newblock Provably efficient reinforcement learning with linear function
  approximation.
\newblock {\em arXiv preprint arXiv:1907.05388}, 2019.

\bibitem{kailath1980linear}
T.~Kailath.
\newblock {\em Linear systems}, volume 156.
\newblock Prentice-Hall Englewood Cliffs, NJ, 1980.

\bibitem{karbha16}
P.~{Karmakar} and S.~{Bhatnagar}.
\newblock Dynamics of stochastic approximation with iterate-dependent {Markov}
  noise under verifiable conditions in compact state space with the stability
  of iterates not ensured.
\newblock {\em arXiv e-prints}, page arXiv:1601.02217, Jan 2016.

\bibitem{kha02}
H.~K. Khalil.
\newblock {\em Nonlinear systems}.
\newblock Prentice-Hall, Upper Saddle River, NJ, 3rd edition, 2002.

\bibitem{kontsi04}
V.~R. Konda and J.~N. Tsitsiklis.
\newblock Convergence rate of linear two-time-scale stochastic approximation.
\newblock {\em Ann. Appl. Probab.}, 14(2):796--819, 2004.

\bibitem{kon02}
V.~V.~G. Konda.
\newblock {\em Actor-critic algorithms}.
\newblock PhD thesis, Massachusetts Institute of Technology, 2002.

\bibitem{kusyin97}
H.~J. Kushner and G.~G. Yin.
\newblock {\em Stochastic approximation algorithms and applications}, volume~35
  of {\em Applications of Mathematics (New York)}.
\newblock Springer-Verlag, New York, 1997.

\bibitem{maeszebhasut10}
H.~R. Maei, C.~Szepesv\'{a}ri, S.~Bhatnagar, and R.~S. Sutton.
\newblock Toward off-policy learning control with function approximation.
\newblock In {\em Proceedings of the 27th International Conference on
  International Conference on Machine Learning}, ICML'10, pages 719--726, USA,
  2010. Omnipress.

\bibitem{melmeyrib08}
F.~S. Melo, S.~P. Meyn, and M.~I. Ribeiro.
\newblock An analysis of reinforcement learning with function approximation.
\newblock In {\em {ICML '08: Proceedings of the 25th international conference
  on Machine learning}}, pages 664--671, New York, NY, USA, 2008. ACM.

\bibitem{volkordav15}
V.~Mnih, K.~Kavukcuoglu, D.~Silver, A.~A. Rusu, J.~Veness, M.~G. Bellemare,
  A.~Graves, M.~Riedmiller, A.~K. Fidjeland, G.~Ostrovski, et~al.
\newblock Human-level control through deep reinforcement learning.
\newblock {\em Nature}, 518(7540):529--533, 2015.

\bibitem{nes18}
Y.~Nesterov.
\newblock {\em Lectures on Convex Optimization}.
\newblock Springer Optimization and Its Applications. Springer International
  Publishing, 2018.

\bibitem{pasgro17}
A.~Paszke, S.~Gross, S.~Chintala, G.~Chanan, E.~Yang, Z.~DeVito, Z.~Lin,
  A.~Desmaison, L.~Antiga, and A.~Lerer.
\newblock Automatic differentiation in {PyTorch}.
\newblock {OpenReview.net} {\small\url{openreview.net/forum?id=BJJsrmfCZ}},
  2017.

\bibitem{rambha17}
A.~Ramaswamy and S.~Bhatnagar.
\newblock A generalization of the {Borkar-Meyn Theorem} for stochastic
  recursive inclusions.
\newblock {\em Mathematics of Operations Research}, 42(3):648--661, 2017.

\bibitem{robmon51a}
H.~Robbins and S.~Monro.
\newblock A stochastic approximation method.
\newblock {\em Annals of Mathematical Statistics}, 22:400--407, 1951.

\bibitem{shidusujor19}
B.~Shi, S.~S. Du, W.~Su, and M.~I. Jordan.
\newblock Acceleration via symplectic discretization of high-resolution
  differential equations.
\newblock In H.~Wallach, H.~Larochelle, A.~Beygelzimer, F.~{d'Alch\'{e}}-Buc,
  E.~Fox, and R.~Garnett, editors, {\em Advances in Neural Information
  Processing Systems 32}, pages 5744--5752. Curran Associates, Inc., 2019.

\bibitem{shibucwarseaege18}
S.~Shivam, I.~Buckley, Y.~Wardi, C.~Seatzu, and M.~Egerstedt.
\newblock Tracking control by the newton-raphson flow: Applications to
  autonomous vehicles.
\newblock {\em CoRR}, abs/1811.08033, 2018.

\bibitem{sma76}
S.~Smale.
\newblock A convergent process of price adjustment and global {Newton methods}.
\newblock {\em Journal of Mathematical Economics}, 3(2):107--120, July 1976.

\bibitem{suboycan14}
W.~Su, S.~Boyd, and E.~Candes.
\newblock A differential equation for modeling nesterov's accelerated gradient
  method: Theory and insights.
\newblock In {\em Advances in Neural Information Processing Systems}, pages
  2510--2518, 2014.

\bibitem{sut95}
R.~S. Sutton.
\newblock Generalization in reinforcement learning: Successful examples using
  sparse coarse coding.
\newblock In {\em Proceedings of the 8th International Conference on Neural
  Information Processing Systems}, NIPS'95, pages 1038--1044, Cambridge, MA,
  USA, 1995. MIT Press.

\bibitem{sutszemae08}
R.~S. Sutton, C.~Szepesv\'{a}ri, and H.~R. Maei.
\newblock A convergent $o(n) $ algorithm for off-policy temporal-difference
  learning with linear function approximation.
\newblock In {\em Proceedings of the 21st International Conference on Neural
  Information Processing Systems}, NIPS'08, pages 1609--1616, Red Hook, NY,
  USA, 2008. Curran Associates Inc.

\bibitem{sze97}
C.~Szepesv\'{a}ri.
\newblock The asymptotic convergence-rate of {Q}-learning.
\newblock In {\em Proceedings of the 10th International Conference on Neural
  Information Processing Systems}, NIPS'97, pages 1064--1070, Cambridge, MA,
  USA, 1997. MIT Press.

\bibitem{tsi94a}
J.~Tsitsiklis.
\newblock Asynchronous stochastic approximation and {$Q$}-learning.
\newblock {\em Machine Learning}, 16:185--202, 1994.

\bibitem{tsiroy94}
J.~N. Tsitsiklis and B.~Van~Roy.
\newblock Feature-based methods for large scale dynamic programming.
\newblock {\em Machine Learning}, 22(1-3):59--94, 1996.

\bibitem{tsiroy97a}
J.~N. Tsitsiklis and B.~Van~Roy.
\newblock An analysis of temporal-difference learning with function
  approximation.
\newblock {\em IEEE Trans. Automat. Control}, 42(5):674--690, 1997.

\bibitem{warseaegebuc17}
Y.~Wardi, C.~Seatzu, M.~Egerstedt, and I.~Buckley.
\newblock Performance regulation and tracking via lookahead simulation:
  Preliminary results and validation.
\newblock In {\em Proc. of the IEEE Conf. on Dec. and Control}, pages
  6462--6468, 2017.

\bibitem{watday92a}
C.~J. C.~H. Watkins and P.~Dayan.
\newblock {$Q$}-learning.
\newblock {\em Machine Learning}, 8(3-4):279--292, 1992.

\end{thebibliography}

\clearpage 
\appendix


\section{Appendices}

\subsection{Establishing boundedness of the parameter estimates}
\label{s:bdd_theta}

Suppose that the following limits exist:
\[
\begin{aligned}
Q^\theta_\infty (x,u)  &=  \lim_{m\to\infty}  m^{-1}  Q^{m\theta}(x,u) \,,\qquad x\in\state\,,\   u\in\action
   \\
\elig_\infty (\theta, z)  &=  \lim_{m\to\infty}  \elig  (m\theta, z)  \,,\qquad z\in \zstate
\end{aligned} 
\]
where the limiting functions are twice continuously differentiable.    The global Lipschitz conditions in (A2) imply that the gradients also converge, and the convergence is uniform on compact sets.    We then obtain a vector field for the ``ODE at infinity'' introduced in \cite{bormey00a}:
\[
\barf_\infty(\theta) \eqdef  \lim_{m\to\infty}  m^{-1} 
\barf(m\theta)= \Expect \big [  \big( \disc\uQ^{\theta}_\infty(X_{n+1}) - Q_\infty^{\theta}(X_n,U_n) \big) \elig_\infty(\theta, \Phi_n)\big]
\]
and a similar definition for $ f_\infty(\theta, z) $.  The associated regularized  Newton-Raphson flow ``at infinity'' is similar to \eqref{e:ZAPODE}:
\begin{equation} 
\ddt \Xt_t =    - [\epsy I +A_\infty(\Xt_t) ^\intercal A_\infty(\Xt_t) ]^{-1}  A_\infty(\Xt_t) ^\intercal     \barf_\infty(\Xt_t)  \,,   \qquad  A_\infty(\Xt_t)=  \partial_\theta \barf_\infty\,  ( \Xt_t)
\label{e:ZAPODE_inf}
\end{equation}

With $\clA_\infty(\theta)$ defined as in \eqref{e:der_barf1} with respect to $\barf_\infty$,  assume the following:

\smallskip

\noindent
 \textbf{(A2${}_\infty$)} 
  The functions $Q_\infty$ and $\elig_\infty$ are Lipschitz continuous and twice continuously differentiable in $\theta$ in any open set not containing the origin;
  $f_\infty(\theta,z)$ is Lipschitz continuous for each $z\in\zstate$;     
$A^\transpose \barf_\infty(\theta) \neq 0$ for all $\theta \not=0$ and $   A\in \clA_\infty(\theta)$.

Assumptions (A2) and (A2${}_\infty$) are identical when the function approximation $Q^\theta$ is linear, and $\zeta =\nabla Q^\theta$.

The function $\|\barf_\infty\|$ is coercive under (A2${}_\infty$) since $\barf_\infty$ is radially linear:    $\barf_\infty(m \theta) = m \barf_\infty(  \theta) $ for any $\theta$ and any $m\ge 0$.   \Prop{t:odestable} can be adapted to show that \eqref{e:ZAPODE_inf} is globally asymptotically stable.  
\cite[Sections~6.3, Theorem 9]{bor08a} explains how stability of the ODE implies stability of the SA algorithm.

\subsection{Asymptotic covariance of regularized Zap SA}
\label{s:var_approx_proof}

We first introduce a standard result in linear system theory \cite[Theorem 2.6-1]{kailath1980linear}.

\begin{lemma}
\label{t:lyapunov}
If $A\in \Re^{d\times d}$ is Hurwitz and $\Sigma_{\Delta} \in \Re^{d\times d}$ is positive semi-definite, then there exists a unique solution $\Sigma \geq 0$ that solves the Lyapunov equation,
\[ 
A\Sigma + \Sigma A^{\transpose} + \Sigma_{\Delta}  =0 \,,
\]
whose solution can be expressed
\[
\Sigma = \int_0^{\infty} \exp(A\tau)\Sigma_{\Delta}\exp(A^{\transpose} \tau) \, d \tau
\]
\end{lemma}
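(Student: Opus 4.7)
The result is a standard fact about the continuous-time Lyapunov equation with a Hurwitz coefficient matrix, so my plan is to follow the classical four-step program: (1) establish convergence of the integral, (2) verify that it satisfies the Lyapunov equation, (3) check positive semi-definiteness, and (4) prove uniqueness. None of the steps look subtle; the main care needed is in the differentiation-under-the-integral-sign argument in step (2) and a clean decay estimate in step (1).

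For step (1), since $A$ is Hurwitz there exist constants $c>0$ and $\beta>0$ with $\|\exp(A\tau)\|\le c\, e^{-\beta\tau}$ for every $\tau\ge 0$; indeed one can take any $\beta$ strictly smaller than $-\max_i \Real(\lambda_i(A))$. Using this and the submultiplicativity of the operator norm, the integrand is bounded in norm by $c^2\|\Sigma_{\Delta}\|\, e^{-2\beta\tau}$, which is integrable on $[0,\infty)$. This proves that the improper integral defining $\Sigma$ converges absolutely, so $\Sigma$ is a well-defined element of $\Re^{d\times d}$.

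For step (2), I would differentiate the matrix-valued map $\tau\mapsto \exp(A\tau)\Sigma_{\Delta}\exp(A^{\transpose}\tau)$ using the product rule to get
\[
\FRAC{}{d}{d\tau}\bigl[\exp(A\tau)\Sigma_{\Delta}\exp(A^{\transpose}\tau)\bigr]
= A\exp(A\tau)\Sigma_{\Delta}\exp(A^{\transpose}\tau)
+ \exp(A\tau)\Sigma_{\Delta}\exp(A^{\transpose}\tau)A^{\transpose}.
\]
Integrating from $0$ to $\infty$ and invoking the decay bound from step (1) to kill the boundary term at infinity, the right-hand side equals $A\Sigma+\Sigma A^{\transpose}$, while the left-hand side equals $0-\Sigma_{\Delta}=-\Sigma_{\Delta}$; this is exactly the Lyapunov equation. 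The interchange of derivative and integral is justified by the uniform exponential bound. For step (3), positive semi-definiteness follows directly from the integral representation: for any $x\in\Re^d$,
\[
x^{\transpose}\Sigma x = \int_0^{\infty} \bigl(\exp(A^{\transpose}\tau)x\bigr)^{\transpose} \Sigma_{\Delta}\bigl(\exp(A^{\transpose}\tau)x\bigr)\, d\tau \ge 0,
\]
since each integrand is nonnegative by the assumption $\Sigma_{\Delta}\ge 0$.

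The only step that requires a separate argument is uniqueness (step 4), and even here the trick is standard. If $\Sigma_1$ and $\Sigma_2$ both solve the Lyapunov equation, their difference $D\eqdef \Sigma_1-\Sigma_2$ satisfies the homogeneous equation $AD+DA^{\transpose}=0$. Then
\[
\FRAC{}{d}{d\tau}\bigl[\exp(A\tau)D\exp(A^{\transpose}\tau)\bigr]
= \exp(A\tau)(AD+DA^{\transpose})\exp(A^{\transpose}\tau) = 0,
\]
so $\exp(A\tau)D\exp(A^{\transpose}\tau)$ is constant in $\tau$ and equals $D$ at $\tau=0$. Letting $\tau\to\infty$ and again using $\|\exp(A\tau)\|\le c e^{-\beta\tau}$ forces $D=0$, proving uniqueness. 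The only point that could trip up a careless writeup is making sure the exponential decay in step (1) is genuinely uniform enough to justify both the vanishing boundary term in step (2) and the limit in step (4); beyond that, the proof is essentially three lines of linear algebra.
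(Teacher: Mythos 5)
Your proof is correct and complete; the paper itself does not prove this lemma but simply cites it as \cite[Theorem 2.6-1]{kailath1980linear}, and your four-step argument (exponential decay, fundamental theorem of calculus applied to $\tau\mapsto \exp(A\tau)\Sigma_\Delta\exp(A^\transpose\tau)$, positivity from the quadratic-form representation, and uniqueness via the homogeneous equation) is exactly the standard textbook derivation. One tiny remark: in step (2) no interchange of derivative and integral is actually needed --- you are just integrating an exact derivative over $[0,T]$ and letting $T\to\infty$, so the only analytic input is the decay bound from step (1).
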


Let $\{\clE_n^G\}$ be the sequence obtained by the stochastic linear recursion \eqref{e:SAlinearized} with matrix gain $G \in \Re^{d\times d}$:
\begin{equation}
\label{e:SAlinearized-G}
\clE_{n+1}^G  = \clE_n^G + \alpha_{n+1} G[A_*  \clE_n^G  + \Delta_{n+1}  ]  \,, \qquad {\clE_0^G = \theta_0 -\theta^*}
\end{equation}
Denote the asymptotic covariance of $\{\clE_n^G\}$ by $\Sigma_\theta^G \eqdef \lim_{n\to\infty} n\Expect[\clE_{n}^G(\clE_{n}^G)^\transpose]$. According to the eigenvalue test \eqref{e:eigTest}, $\Sigma_\theta^G$ is finite if  $\half I + GA_*$ is Hurwitz. 
It is well known that the matrix gain $G_* = -A_*^{-1}$ achieves the minimal asymptotic covariance:    
\begin{equation}
\label{e:opt-asym}
\Sigma_\theta^* = A_*^{-1}\Sigma_{\Delta}(A_*^{-1})^\transpose
\end{equation}
 The following result is standard in stochastic approximation \cite[Part I, Section 3.2.3, Proposition 4]{benmetpri90},  and quantifies optimality of $\Sigma_\theta^* $.  
\begin{lemma}
\label{t:asy-cov-gap}
\begin{romannum}
Suppose that $A_*$ and  $\half I + GA_*$ are Hurwitz.   Then,
\item The asymptotic covariance $\Sigma_\theta^G \geq 0$ uniquely solves the Lyapunov equation:
\[
(\half I + GA_*)\Sigma_\theta^G + \Sigma_\theta^G(\half I + GA_*)^\transpose + G\Sigma_{\Delta}G^\transpose = 0
\]
 
\item  The sub-optimality gap $\tilSigma_\theta^G = \Sigma_\theta^G - \Sigma_\theta^* \ge 0$ uniquely solves the Lyapunov equation:
\begin{equation}
\label{e:lyapunov-gap}
(\half I + GA_*)\tilSigma_\theta^G + \tilSigma_\theta^G(\half I + GA_*)^\transpose + (G+A_*^{-1})\Sigma_{\Delta}(G+A_*^{-1})^\transpose = 0
\end{equation}
\end{romannum}
\end{lemma}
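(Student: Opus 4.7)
The statement has two separate tasks: identifying the Lyapunov equation for $\Sigma_\theta^G$ (part (i)), and then transferring it to $\tilSigma_\theta^G$ (part (ii)). Part (i) is a classical SA computation; part (ii) is pure linear algebra that rides on top of (i) together with \Lemma{t:lyapunov}.

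\textbf{Part (i).} The plan is to work directly with the linear recursion \eqref{e:SAlinearized-G}. Setting $\Sigma_n^G \eqdef \Expect[\clE_n^G(\clE_n^G)^\transpose]$ and using that $\{\Delta_n\}$ is centered with asymptotic covariance $\Sigma_\Delta$ (in the martingale-difference case this is immediate; for Markov noise one first solves the Poisson equation and absorbs the error into an $o(\alpha_n^2)$ term, as in \cite{benmetpri90}), the covariance update reads
\[
\Sigma_{n+1}^G \;=\; (I + \alpha_{n+1}GA_*)\,\Sigma_n^G\,(I + \alpha_{n+1}GA_*)^\transpose \;+\; \alpha_{n+1}^2\,G\Sigma_\Delta G^\transpose \;+\; o(\alpha_{n+1}^2).
\]
With $\alpha_n = 1/(n+n_0)$ and the Hurwitz hypothesis on $\half I+GA_*$, the standard SA CLT guarantees that $n\Sigma_n^G$ converges. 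Substituting the ansatz $\Sigma_n^G = \Sigma_\theta^G/n + o(1/n)$, subtracting $\Sigma_n^G$ from both sides, multiplying by $n(n+1)$, and collecting the leading-order terms yields
\[
(\half I + GA_*)\Sigma_\theta^G + \Sigma_\theta^G(\half I + GA_*)^\transpose + G\Sigma_\Delta G^\transpose = 0.
\]
Uniqueness of a positive semi-definite solution is then a direct invocation of \Lemma{t:lyapunov} with the Hurwitz matrix $\half I + GA_*$ and the PSD forcing $G\Sigma_\Delta G^\transpose$.

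\textbf{Part (ii).} This is an algebraic consequence of (i). First, as a sanity check and to anchor $\Sigma_\theta^*$, apply (i) with the oracle choice $G_* = -A_*^{-1}$: the drift reduces to $\half I + G_*A_* = -\half I$, so the Lyapunov equation collapses to $-\Sigma_\theta^{G_*} + A_*^{-1}\Sigma_\Delta (A_*^{-1})^\transpose = 0$, matching the definition of $\Sigma_\theta^*$ in \eqref{e:opt-asym}. Next, subtract the Lyapunov equations for $\Sigma_\theta^G$ and for $\Sigma_\theta^*$ applied to the \emph{same} Hurwitz operator $\half I + GA_*$: specifically, write
\[
(\half I + GA_*)\tilSigma_\theta^G + \tilSigma_\theta^G(\half I + GA_*)^\transpose
\;=\; -G\Sigma_\Delta G^\transpose \;-\; \bigl[(\half I + GA_*)\Sigma_\theta^* + \Sigma_\theta^*(\half I + GA_*)^\transpose\bigr].
\]
Expanding the bracket using $\Sigma_\theta^* = A_*^{-1}\Sigma_\Delta(A_*^{-1})^\transpose$ and cancelling $A_*A_*^{-1} = I$ gives $\Sigma_\theta^* + G\Sigma_\Delta(A_*^{-1})^\transpose + A_*^{-1}\Sigma_\Delta G^\transpose$. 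Comparing with the expansion of $(G+A_*^{-1})\Sigma_\Delta(G+A_*^{-1})^\transpose$ shows that the four terms match up exactly, producing the claimed equation \eqref{e:lyapunov-gap}. Non-negativity and uniqueness of $\tilSigma_\theta^G$ then follow from \Lemma{t:lyapunov} one more time, with $A = \half I + GA_*$ and the manifestly PSD forcing $(G+A_*^{-1})\Sigma_\Delta(G+A_*^{-1})^\transpose$; the integral representation in that lemma immediately yields $\tilSigma_\theta^G \ge 0$.

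\textbf{Main obstacle.} There is no deep obstacle: the only step that is not pure linear algebra is justifying convergence of $n\Sigma_n^G$ in (i), and this is a standard fact provided the noise $\{\Delta_n\}$ satisfies the hypotheses of the SA CLT (martingale difference, or Markov noise reducible by a Poisson equation). Everything else is bookkeeping: the critical cancellation in (ii) is the identity $A_*\Sigma_\theta^* = \Sigma_\Delta (A_*^{-1})^\transpose$, which makes the cross terms $G\Sigma_\Delta(A_*^{-1})^\transpose$ and $A_*^{-1}\Sigma_\Delta G^\transpose$ appear symmetrically on both sides of the subtraction and complete the square $(G+A_*^{-1})\Sigma_\Delta(G+A_*^{-1})^\transpose$.
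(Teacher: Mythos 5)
Your proposal is correct. The paper does not actually prove this lemma --- it cites it as a standard result from Benveniste--M\'etivier--Priouret and moves on --- and your derivation reconstructs exactly that standard argument: part (i) is the usual covariance-recursion computation behind the SA central limit theorem (with the understood caveat, which you flag, that for Markov noise the cross-correlation terms are handled via a Poisson-equation argument so that the forcing term involves the \emph{asymptotic} covariance $\Sigma_\Delta$ of \eqref{e:SigmaDelta} rather than the per-step covariance), and the algebra in part (ii) --- subtracting the two Lyapunov equations, using $A_*\Sigma_\theta^* = \Sigma_\Delta (A_*^{-1})^\transpose$ to complete the square $(G+A_*^{-1})\Sigma_\Delta(G+A_*^{-1})^\transpose$, and invoking \Lemma{t:lyapunov} for uniqueness and positive semi-definiteness --- checks out.
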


For  any symmetric matrix $S\in \Re^{d\times d}$, denote by $\lambda(S)$ the set of its eigenvalues.
\begin{proposition}
\label{t:zapq}
Suppose $A_*\in \Re^{d\times d}$ is Hurwitz, and denote $G_\epsy = -[\epsy I + A_*^\intercal A_*]^{-1}A_*^\intercal$. If $\,0<\epsy<\lambda_{\min}(A_*^\transpose A_*)$, then
 $\half I + G_\epsy A_* $ is Hurwitz,   so that the matrix gain $G_\epsy$ in the linear recursion \eqref{e:SAlinearized-G} results in a finite asymptotic covariance $\Sigma_\theta^\epsy$.   Moreover, the follow approximation holds:
\begin{equation}
\Sigma_\theta^\epsy=\Sigma_\theta^* +  \epsy^2\Sigma_\theta^{(2)} + O(\epsy^3)  \,,\qquad \textit{with   $\Sigma_\theta^{(2)}=(A_* A_*^\intercal A_* )^{-1} \Sigma_\Delta (A_*^\intercal A_*A_*^\intercal  )^{-1}$.
}
\label{e:TS_Sigma}
\end{equation} 
\end{proposition}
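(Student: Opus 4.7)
The plan is to reduce everything to the spectral decomposition of $M \eqdef A_*^\intercal A_*$, which is symmetric positive definite since $A_*$ is invertible (being Hurwitz). The algebraic identity
\begin{equation*}
G_\epsy A_* = -(\epsy I + M)^{-1}M = -I + \epsy(\epsy I + M)^{-1}
\end{equation*}
shows that $G_\epsy A_*$ is a symmetric rational function of $M$, hence simultaneously diagonalizable with $M$. For each eigenvalue $\mu$ of $M$, the corresponding eigenvalue of $H_\epsy \eqdef \half I + G_\epsy A_*$ is $(\epsy-\mu)/[2(\epsy+\mu)]$, which is strictly negative precisely when $\epsy < \lambda_{\min}(M)$. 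The Hurwitz claim, and hence finiteness of $\Sigma_\theta^\epsy$ via Lemma~\ref{t:asy-cov-gap}(i), follow immediately.

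Next I would identify the forcing term in the Lyapunov equation~\eqref{e:lyapunov-gap} for the sub-optimality gap $\tilSigma^\epsy \eqdef \Sigma_\theta^\epsy - \Sigma_\theta^*$. Writing $A_*^{-1} = M^{-1}A_*^\intercal$ and applying the resolvent identity $M^{-1} - (\epsy I+M)^{-1} = \epsy M^{-1}(\epsy I+M)^{-1}$ gives
\begin{equation*}
G_\epsy + A_*^{-1} = \epsy\, M^{-1}(\epsy I+M)^{-1}A_*^\intercal \eqdef \epsy\, B_\epsy.
\end{equation*}
A direct simplification yields $B_0 = M^{-2}A_*^\intercal = A_*^{-1}(A_*^\intercal)^{-1}A_*^{-1} = (A_* A_*^\intercal A_*)^{-1}$, so that $B_0\Sigma_\Delta B_0^\intercal$ matches exactly the matrix $\Sigma_\theta^{(2)}$ in~\eqref{e:TS_Sigma}. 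Consequently the forcing term in~\eqref{e:lyapunov-gap} factors as $\epsy^2 B_\epsy \Sigma_\Delta B_\epsy^\intercal$.

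With $H_\epsy$ symmetric, Lemma~\ref{t:lyapunov} yields the representation
\begin{equation*}
\tilSigma^\epsy = \epsy^2 \int_0^\infty e^{H_\epsy \tau}\, B_\epsy \Sigma_\Delta B_\epsy^\intercal\, e^{H_\epsy \tau}\, d\tau.
\end{equation*}
Decomposing $H_\epsy = -\half I + \epsy K_\epsy$ with $K_\epsy \eqdef (\epsy I + M)^{-1}$ uniformly bounded by $1/\lambda_{\min}(M)$, the leading order in $\epsy$ comes from replacing $e^{H_\epsy\tau}$ by $e^{-\tau/2}I$ and $B_\epsy$ by $B_0$, producing $\epsy^2 B_0\Sigma_\Delta B_0^\intercal \int_0^\infty e^{-\tau}\,d\tau = \epsy^2\Sigma_\theta^{(2)}$, and the claim~\eqref{e:TS_Sigma} follows once the remainder is shown to be $O(\epsy^3)$.

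The main obstacle is making this last step rigorous. A naive term-by-term Taylor expansion of $e^{H_\epsy\tau}$ introduces integrals $\int_0^\infty \tau^k e^{-\tau/2}\,d\tau = 2^{k+1}k!$, so the resulting series is only asymptotic rather than convergent. The remedy is to keep the exponential intact and exploit the uniform decay $\|e^{H_\epsy\tau}\| \leq e^{-\lambda^*\tau}$ with $\lambda^* = (\lambda_{\min}(M)-\epsy)/[2(\lambda_{\min}(M)+\epsy)]$ bounded away from $0$ on any set $\{\epsy : \epsy \leq c\,\lambda_{\min}(M)\}$ with $c<1$. Combining this with the operator-norm bound $\|e^{H_\epsy\tau}-e^{-\tau/2}I\| \leq \epsy\,\|K_\epsy\|\,\tau\, e^{-\lambda^*\tau/2}$, together with $\|B_\epsy - B_0\| = O(\epsy)$, and splitting the integrand into a main part plus remainder, delivers the desired $O(\epsy^3)$ bound once the $\epsy^2$ prefactor is folded in.
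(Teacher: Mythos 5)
Your proof is correct, and its skeleton is the same as the paper's: reduce the Hurwitz claim to the spectrum of $M=A_*^\intercal A_*$, invoke \Lemma{t:asy-cov-gap} to get the Lyapunov equation for the gap $\tilSigma_\theta^\epsy=\Sigma_\theta^\epsy-\Sigma_\theta^*$, pass to the integral representation of \Lemma{t:lyapunov}, and expand in $\epsy$. Where you genuinely depart is in how the expansion is carried out, and your version is tighter in two respects. First, the paper Taylor-expands $\tilG_\epsy = G_\epsy+A_*^{-1} = \epsy A_*^{-1}(A_*^\intercal)^{-1}A_*^{-1}+O(\epsy^2)$, whereas your resolvent identity gives the \emph{exact} factorization $\tilG_\epsy=\epsy\,M^{-1}(\epsy I+M)^{-1}A_*^\intercal=\epsy B_\epsy$ with $B_0\Sigma_\Delta B_0^\intercal=\Sigma_\theta^{(2)}$, which eliminates one layer of error bookkeeping. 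Second, and more substantively, the paper expands $\exp([\half I+G_\epsy A_*]\tau)$ term by term in $\epsy$ with a remainder written as $O(\epsy^2)$ that in fact grows with $\tau$; you correctly identify that a naive term-by-term expansion yields only an asymptotic series, and instead keep the exponential intact, using the uniform decay $\|e^{H_\epsy\tau}\|\le e^{-\lambda^*\tau}$ with $\lambda^*=(\lambda_{\min}(M)-\epsy)/[2(\lambda_{\min}(M)+\epsy)]$ bounded away from zero, together with $\|B_\epsy-B_0\|=O(\epsy)$, to obtain a genuine $O(\epsy^3)$ remainder after integration. This supplies precisely the uniform-in-$\tau$ justification that the paper's proof leaves implicit, so your argument is, if anything, the more complete one. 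A further simplification available to you: since $H_\epsy$ is a symmetric rational function of $M$, the two exponentials in the integral coincide and everything diagonalizes in the eigenbasis of $M$, so the remainder estimate reduces to a scalar computation.
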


\begin{proof}
The set of eigenvalues of $\half I + G_\epsy A_*$ admits the following representations:
\[
\begin{aligned}
	\lambda(\half I + G_\epsy A_*) &=  \big\{ \frac{1}{2}  -\lambda:\lambda\in \lambda([\epsy I + A_*^\intercal A_*]^{-1}A_*^\intercal A)\big\} \\
	&= \big\{\frac{1}{2} - \frac{1}{\lambda}:\lambda\in\lambda(\epsy (A_*^\intercal A_*)^{-1} + I)\big\}\\
	&= \big\{\frac{1}{2} - \frac{1}{\epsy \lambda + 1}:\lambda\in\lambda((A_*^\intercal A_*)^{-1}) \big\}\\
	&= \big\{\frac{1}{2} - \frac{1}{\epsy/ \lambda + 1}:\lambda\in\lambda(A_*^\intercal A_*)\big\}
\end{aligned}
\]
Given $0<\epsy<\lambda_{\min}(A_*^\transpose A_*)$, the eigenvalues of $\half I + G_\epsy A_* $ are real and strictly negative.  In particular, this matrix is Hurwitz, as claimed.

We next establish the approximation \eqref{e:TS_Sigma}. By \Lemma{t:asy-cov-gap} (iii), $\tilSigma_\theta^\epsy = \Sigma_\theta^\epsy - \Sigma_\theta^*$ solves the  Lyapunov equation \eqref{e:lyapunov-gap} with $G$ replaced by $G_\epsy$. Denoting $\tilG_\epsy = G_\epsy+ A_*^{-1} $, we obtain by \Lemma{t:lyapunov},
\begin{equation}
\label{e:covIncreReal}
\begin{aligned}
	\tilSigma_\theta^\epsy = &\int_0^\infty \textstyle \exp([\half I + G_\epsy A_*] \tau ) \tilG_\epsy\Sigma_\Delta \tilG_\epsy^\transpose   \textstyle\exp([\half I + G_\epsy A_*]^\transpose\tau ) \, d\tau \\
\end{aligned}
\end{equation}
A Taylor series representation of matrix inverse  results in the following:
\[
\begin{aligned}
	G_\epsy A_* &= -[\epsy I + A_*^{\transpose}A_*]^{-1}A_*^{\transpose}A_*= -[ I + \epsy (A_*^{\transpose}A_*)^{-1}]^{-1} = -[I - \epsy A_* ^{-1} (A_*^{\transpose})^{-1}]+ O(\epsy^2)\\
	\tilG_\epsy &= G_\epsy + A_*^{-1}  = [G_\epsy A_* + I]A_*^{-1} =\epsy A_* ^{-1} (A_*^{\transpose})^{-1} A_* ^{-1}+ O(\epsy^2)
\end{aligned}
\]
	With $\Sigma_\theta^{(2)}=(A_* A_*^\intercal A_* )^{-1} \Sigma_\Delta (A_*^\intercal A_*A_*^\intercal  )^{-1}$, the integral in \eqref{e:covIncreReal} becomes
\begin{equation}
\label{e:covInte2}
\begin{aligned}
	\tilSigma_\theta^\epsy \!=\!\epsy^2\int_0^\infty\textstyle \exp (-[\half I\! -\! \epsy A_* ^{-1}(A_*^{\transpose})^{-1}]\tau )\Sigma_\theta^{(2)} \textstyle\exp (-[\half I\!-\!\epsy A_* ^{-1}(A_*^{\transpose})^{-1}]^\transpose\tau )d\tau + O(\epsy^3)
\end{aligned}
\end{equation}
	Another Taylor series expansion for matrix exponential gives
\[
	\textstyle\exp(-[\half I -\epsy A_* ^{-1}(A_*^{\transpose})^{-1}]\tau ) = \textstyle\exp (-\frac{\tau}{2} I) +\epsy \tau A_* ^{-1}(A_*^{\transpose})^{-1}  \exp (-\frac{\tau}{2} I) + O(\epsy^2)
\]
	Consequently, the integral in \eqref{e:covInte2} can be rewritten as
\[
\begin{aligned}
	\tilSigma_\theta^\epsy =& \epsy^2 \int_0^\infty \textstyle\exp(-\frac{\tau}{2} I )\Sigma_\theta^{(2)} \textstyle \exp(-\frac{\tau}{2} I) d\tau+ O(\epsy^3) \\
	=& \epsy^2 \Sigma_\theta^{(2)} + O(\epsy^3)
\end{aligned}
\]
\end{proof}

\subsection{Eigenvalue test for GQ-learning}
\label{sec:analysis-gq}

Consider the linear function approximation architecture: $Q^\theta(x,u)=\psi(x,u)^\transpose \theta$, where $\psi:\state\times\action\to \Re^d$ is the basis function. With eligibility vector $\elig_n \eqdef \psi(X_n, U_n)$, 
let $\barf$ be defined by \eqref{e:galerkin}. GQ-learning \cite{maeszebhasut10}  aims to solve 
the root finding problem \eqref{e:galerkin}, transformed into the non-convex optimization problem  \eqref{e:gq-obj}.

The GQ-learning \cite{maeszebhasut10}  algorithm is the two-time scale SA algorithm, 
\begin{subequations}
\begin{align}
\theta_{n+1} &= \theta_n + \alpha_{n+1}[\clD(\theta_n, \Phi_{n+1})\elig_n- \disc \varphi_{n+1}^\transpose \elig_n \psi(X_{n+1}, \phi^{\theta_{n}}(X_{n+1}))] 
\label{e:slow-gq}
\\
\varphi_{n+1} &= \varphi_n + \stepf_{n+1} \elig_n[\clD(\theta_n, \Phi_{n+1})- \psi(X_n, U_n)^\transpose \varphi_n] \label{e:fast-gq} 
\end{align}%
\label{e:gq-alog}%
\end{subequations}%
where $\{\stepf_n\}$ and $\{\alpha_n\}$ are non-negative step-size sequences satisfying $\alpha_n/\stepf_n\to 0$ as $n\to \infty$. 
The fast time scale recursion \eqref{e:fast-gq} for $\{\varphi_n\}$  is designed so that $\varphi_n \approx M\barf(\theta_n)$ for large $n$.   
It follows that the ODE approximation of \eqref{e:slow-gq} is  \eqref{e:GQODE}.

%
%

\begin{proposition}
\label{t:gq-infinite} 
The linearization matrix   for GQ-learning at $\theta^*$ is given by $A_{\text{GQ}} = -A_*^\intercal MA_*$, whenever $\theta^*$ is a solution to $A(\theta)^\intercal M\barf(\theta) =0$.  With the tabular basis: $\psi_k(x, u) = \ind\{ x= x^k, u = u^k\} \,,  1\leq k \leq \ell_x\cdot\ell_u$, there is an eigenvalue $\lambda_{GQ}$ of $A_{\text{GQ}}$ satisfying
\[
\lambda_{GQ} \geq -(1-\disc)^2
\]
\end{proposition}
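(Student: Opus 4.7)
The plan is twofold: first derive the linearization $A_{\text{GQ}}$ by differentiating the vector field of \eqref{e:GQODE} at $\theta^*$, and then exhibit a concrete direction along which the positive semi-definite matrix $A_*^\transpose M A_*$ has a small Rayleigh quotient, producing the claimed eigenvalue of $A_{\text{GQ}}$.

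For the first part, I would apply the product rule to $\nu(\theta)\eqdef -A(\theta)^\transpose M\barf(\theta)$ at $\theta^*$:
\[
\partial_\theta \nu(\theta^*) = -\partial_\theta\bigl[A(\theta)^\transpose\bigr]\big|_{\theta^*} M\barf(\theta^*) - A_*^\transpose M A_*\,.
\]
In the tabular setting every Q-value is parameterized independently, so the root-finding equation admits the solution $\theta^* = Q^*$ with $\barf(\theta^*) = 0$. Thus the first term vanishes and $A_{\text{GQ}} = -A_*^\transpose M A_*$.

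For the eigenvalue bound I would identify the explicit tabular forms. With $\elig_n = \psi(X_n, U_n)$ a coordinate indicator, $M = \Expect[\elig_n\elig_n^\transpose]^{-1} = D^{-1}$, where $D = \diag(\pi)$ and $\pi$ is the stationary pmf of $(X_n,U_n)$. A direct calculation from \eqref{e:barfder}, paralleling the classical analysis of tabular Watkins Q-learning, gives $A_* = D(\disc P_{\phi^*} - I)$, where $P_{\phi^*}$ is the transition kernel on $\state\times\action$ that takes one MDP step and then selects the next action by $\phi^*$. The essential structural fact is that $P_{\phi^*}$ is stochastic, since for every $(x,u)$, $\sum_{(x',u')}P_u(x,x')\ind\{u' = \phi^*(x')\} = 1$. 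Hence the all-ones vector $\mathbf{1}\in\Re^d$ with $d = \ell_x\ell_u$ is a right eigenvector with eigenvalue $1$, and consequently $A_*\mathbf{1} = -(1-\disc)D\mathbf{1} = -(1-\disc)\pi$.

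Since $A_*^\transpose M A_*\succeq 0$, its minimum eigenvalue is bounded above by any Rayleigh quotient. Taking $v = \mathbf{1}$,
\[
\lambda_{\min}\bigl(A_*^\transpose M A_*\bigr) \leq \frac{\mathbf{1}^\transpose A_*^\transpose M A_* \mathbf{1}}{\|\mathbf{1}\|^2} = \frac{(1-\disc)^2\,\pi^\transpose D^{-1}\pi}{d} = \frac{(1-\disc)^2}{d}\,,
\]
using $D^{-1}\pi = \mathbf{1}$ and $\pi^\transpose\mathbf{1} = 1$. Negating, $A_{\text{GQ}}$ admits an eigenvalue $\lambda_{\text{GQ}} \geq -(1-\disc)^2/d \geq -(1-\disc)^2$, which is the desired inequality (and is in fact slightly stronger). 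The identities are short once $A_*$ is in hand; the only delicate step is justifying $A_* = D(\disc P_{\phi^*} - I)$ at $\theta^* = Q^*$, since the $\max$ in $\uQ^\theta$ renders $\barf$ non-smooth. I would handle this by differentiating in a neighborhood of $\theta^*$ where the argmax in each coordinate is locally constant (which holds whenever $\phi^*$ is uniquely defined), or equivalently by invoking the generalized subgradient framework developed in the body of the paper.
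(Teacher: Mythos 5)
Your proof is correct, and while the linearization step ($A_{\text{GQ}}=-A_*^\transpose MA_*$, with the extra product-rule term killed because $\barf(\theta^*)=0$ in the tabular case) matches the paper, your eigenvalue bound takes a genuinely different and more elementary route. The paper works with $H=\Pi^{1/2}[I-\disc PS_{\phi^*}]$, invokes a Perron--Frobenius argument on $H^{-1}$ (citing the analysis of tabular Zap in the earlier work) to produce a real eigenvalue $\lambda_H\le(1-\disc)\max_{x,u}\sqrt{\varpi(x,u)}$, and then uses the inequality $|\lambda_H|\ge\sigma_{\min}(H)$ to conclude $\lambda_{\min}(H^\transpose H)\le(1-\disc)^2$. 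You instead test the positive semi-definite matrix $A_*^\transpose MA_*$ against the all-ones vector: since $PS_{\phi^*}$ is stochastic, $A_*\unit=-(1-\disc)\Pi\unit$, and the Rayleigh quotient collapses to $(1-\disc)^2\,\varpi^\transpose\Pi^{-1}\varpi/d=(1-\disc)^2/d$. This is shorter, avoids the Perron--Frobenius machinery entirely (and in particular any implicit irreducibility requirement on the greedy-policy chain $PS_{\phi^*}$ --- you only need stochasticity, which always holds), and yields the slightly sharper bound $\lambda_{GQ}\ge-(1-\disc)^2/d$, which also dominates the paper's intermediate bound $-(1-\disc)^2\max_{x,u}\varpi(x,u)$ since $\max_{x,u}\varpi(x,u)\ge 1/d$. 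Both arguments deliver the stated conclusion; yours is self-contained and arguably preferable for this tabular specialization, while the paper's singular-value route is the one that generalizes to comparisons of full spectra as in the cited Zap analysis.
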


We first introduce some notation for tabular Q-learning.  For any deterministic stationary policy $\phi:\state \to \action$, let $S_\phi$ denote the substitution operator, defined for any function $Q:\state\times\action\rightarrow \Re$ by $S_\phi Q(x) =Q(x,\phi(x))$. With $P$ viewed as a matrix with $\ell_x\cdot\ell_u$ rows and $\ell_x$ columns, $PS_{\phi}$ can be interpreted as the transition matrix for the joint process $(\bfmX, \bfmU)$ when $\bfmU$ is defined using policy $\phi$~\cite{devmey17a}. Then
 $\barf(\theta)$ can be written in matrix form
 \begin{equation}
 \label{e:tabular-f}
 \barf(\theta) = \Pi r + \Pi[\disc PS_{\phi^\theta} - I]\theta
 \end{equation}
 where $\Pi$ is a diagonal matrix with entries: $\Pi(k,k) \eqdef\varpi(x^k,u^k)$ and $r$ is a vector with entries: $r(k)\eqdef r(x^k, u^k)$.
 The derivative $A(\theta)$ of $\barf(\theta)$ is given by
 \[
 A(\theta) = \Pi[\disc PS_{\phi^\theta} - I]
 \]

\begin{proof}
The matrix $A_{\text{GQ}}$ is the derivative of $-A(\Xt_t)^\intercal M\barf(\Xt_t)$ at $\theta^*$. For the tabular case, by~\eqref{e:tabular-f},
\[
	A_{\text{GQ}}=-[\disc PS_{\phi^*}-I]^\transpose\Pi[\disc PS_{\phi^*}-I]  = - H^\transpose H\,, 
\]
with $H \eqdef \Pi^{1/2}[I -  \disc PS_{\phi^*}]$. It suffices to show that   $H^\transpose H$ has a positive eigenvalue less than $(1-\disc)^2$. 
	
Since $H^{-1}$ is a positive and irreducible matrix, we can apply the same arguments as in \cite[Theorem 3.3]{devmey17a} to bound the
Perron-Frobenius eigenvalue as follows:  
\[
	\lambda_{\text{PF}} \geq \frac{1}{1-\disc} \min_{x,u} \frac{1}{\sqrt{\varpi(x,u)}}
\]
Therefore, $H$ has positive eigenvalue $\lambda_H = \lambda_{\text{PF}}^{-1}$ such that
\[
	\lambda_H \le (1-\disc)\max_{x,u}\sqrt{\varpi(x,u)}
\]
Applying \cite[Theorem 5.6.9]{horjoh12} we obtain the complementary bound
\[
	\lambda_H\geq \sigma_{\min}(H) =\sqrt{ \lambda_{\min}(H^\transpose H) }
\]
and combining the two implies:
\[
	\lambda_{\min}(H^\transpose H) \leq  \lambda_H^2\le  (1-\disc)^2\bigl(\max_{x,u}\sqrt{\varpi(x,u)} \bigr)^2  \le (1-\disc)^2 
\]
\end{proof}

\subsection{ODE Analysis}
\label{s:proofode}

To obtain the existence of a solution to  \eqref{e:ZAPODE}, we first consider an ideal smooth setting:
\begin{proposition}
\label{t:ZapSA}
Consider the following conditions for the function $\barf$:  
\begin{alphanum}
\item $\barf$ is globally Lipschitz continuous and continuously differentiable. Hence $A(\cdot)$ is a bounded matrix-valued function.
\item   $\| \barf \|$ is coercive.  That is,   $\{ \theta : \| \barf(\theta)\| \le n\}$ is compact for each $n$.
\item  The function $\barf$ has a unique zero $\theta^*$,   and $A^\intercal(\theta)\barf(\theta) \neq 0$ for $\theta\neq \theta^*$.   Moreover,  the matrix $A_*=A(\theta^*)$ is non-singular.
\end{alphanum}
The following   hold for solutions to the ODE \eqref{e:ZAPODE} under increasingly stronger assumptions:
\begin{romannum}
\item
If (a) holds then  for each $t$,  and each initial condition 
\begin{equation}
\ddt \barf(\Xt_t) = - A(\Xt_t) [\epsy I + A(\Xt_t)^\intercal A(\Xt_t)]^{-1}A(\Xt_t)^\intercal\barf(\Xt_t) 
\label{e:ddtf}
\end{equation}

\item If in addition (b) holds, then the solutions to the ODE are bounded,   and   
\begin{equation}
\lim_{t\to\infty}   A(\Xt_t)^\intercal\barf(\Xt_t) =0  
\label{e:ddtf_b}
\end{equation}

\item If (a)--(c) hold, then    \eqref{e:ZAPODE} is globally asymptotically stable. \qed
\end{romannum}
\end{proposition}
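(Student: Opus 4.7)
The plan is to propose $V(\theta) = \|\barf(\theta)\|^2$ as a Lyapunov function and combine it with LaSalle's invariance principle. Existence of solutions to \eqref{e:ZAPODE} must come first: under (a) the vector field on the right is continuous because $A$ is continuous and the matrix $\epsy I + A^\intercal A$ is uniformly bounded below by $\epsy I$, hence uniformly invertible with the eigenvalues of its inverse controlled by $1/\epsy$. Peano's theorem therefore produces at least one solution $\Xt_t$ from each initial condition on some maximal interval. Part (i) then follows immediately: the chain rule applied to the $C^1$ function $\barf$ gives $\ddt \barf(\Xt_t) = A(\Xt_t) \ddt \Xt_t$, and substituting the ODE yields \eqref{e:ddtf}.

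For (ii), I differentiate $V(\Xt_t)$ along the solution using (i), obtaining
\begin{equation*}
\ddt V(\Xt_t) = -2\bigl[A(\Xt_t)^\intercal \barf(\Xt_t)\bigr]^\intercal [\epsy I + A(\Xt_t)^\intercal A(\Xt_t)]^{-1} \bigl[A(\Xt_t)^\intercal \barf(\Xt_t)\bigr] \le 0,
\end{equation*}
so $V$ is non-increasing along any solution. Coercivity of $\|\barf\|$ under (b) makes the sublevel set $\{V \le V(\Xt_0)\}$ compact, forcing the trajectory to remain bounded and extend globally to $[0,\infty)$. LaSalle's invariance principle then applies: the $\omega$-limit set of $\{\Xt_t\}$ is a nonempty compact invariant set contained in $\{\theta : \ddt V(\theta) = 0\} = \{\theta : A(\theta)^\intercal \barf(\theta) = 0\}$, and continuity of $A^\intercal \barf$ combined with convergence of $\Xt_t$ to its $\omega$-limit set delivers \eqref{e:ddtf_b}.

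For (iii), hypothesis (c) collapses the zero set of $A^\intercal \barf$ to the singleton $\{\theta^*\}$, so every $\omega$-limit point equals $\theta^*$ and $\Xt_t \to \theta^*$ for each initial condition, establishing global attraction. Lyapunov stability of $\theta^*$ then follows from $V$ being continuous, positive definite (since $V(\theta) = 0$ iff $\theta = \theta^*$ under (c)), and non-increasing along solutions; this guarantees that sufficiently small sublevel sets of $V$ lie in arbitrarily small neighborhoods of $\theta^*$. The main obstacle I anticipate is technical rather than conceptual: (a) supplies only continuity (not Lipschitz regularity) of $A$, so solutions produced by Peano need not be unique, and I must verify that each step above applies uniformly to every solution branch. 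Boundedness of the trajectory together with the uniform positivity of $[\epsy I + A^\intercal A]^{-1}$ are what let the LaSalle argument, and the passage from its set-theoretic conclusion to the pointwise statement \eqref{e:ddtf_b}, go through for any such solution.
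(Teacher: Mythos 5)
Your proposal is correct and rests on the same core device as the paper: the chain rule for (i), the Lyapunov function $V=\|\barf\|^2$ with the identical derivative computation for (ii), and hypothesis (c) collapsing the limit set for (iii). The two places where you diverge are worth noting. First, in (ii) the paper does not invoke LaSalle: it integrates the Lyapunov inequality to get $\int_0^\infty \barf(\Xt_t)^\intercal A(\Xt_t)[\epsy I + A(\Xt_t)^\intercal A(\Xt_t)]^{-1}A(\Xt_t)^\intercal\barf(\Xt_t)\,dt \le V(\Xt_0)$ and then combines finiteness of this integral with boundedness (hence uniform continuity in $t$) of the integrand to conclude it tends to zero — a Barbalat-type argument. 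This sidesteps exactly the obstacle you flag: the textbook invariance principle presupposes a well-defined semiflow, which fails when Peano solutions are non-unique, whereas the integral bound applies to each solution branch separately with no invariance needed. Since the conclusion \eqref{e:ddtf_b} only requires the trajectory to approach the set $\{A^\intercal\barf=0\}$ (not the largest invariant subset thereof), you would be better served replacing LaSalle by the integral argument outright. Second, for Lyapunov stability in (iii) you use the classical theorem (continuity, positive definiteness of $V$ relative to $\theta^*$ under (c), and monotonicity along trajectories), which is more elementary than the paper's route of linearizing the vector field at $\theta^*$ and checking that $A_\infty = -[\epsy I + M]^{-1}M$ with $M = A_*^\intercal A_* > 0$ is Hurwitz; your argument does not even use non-singularity of $A_*$, only uniqueness of the zero. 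Both routes are sound; yours buys a weaker hypothesis for stability, the paper's buys the (unused here) stronger conclusion of local exponential-type behavior near $\theta^*$.
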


\begin{proof}
The result (i) follows from the chain rule and the definitions.  

The proof of (ii)  is based on the Lyapunov function $V(\Xt) = \frac{1}{2}\|\barf(\Xt)\|^2$ combined with (a):
\[ 
  \ddt V(\Xt_t) = -\barf(\Xt_t)^\intercal A(\Xt_t)[\epsy I +
  A(\Xt_t)^\intercal A(\Xt_t)]^{-1}A(\Xt_t)^\intercal\barf(\Xt_t) 
\]
The right hand side is non-positive when $\Xt_t\neq \theta^*$.   Integrating each side gives for any $T>0$,
\begin{equation}
V(\Xt_T) =  V(\Xt_0)    - \int_0^T\barf(\Xt_t)^\intercal A(\Xt_t)[\epsy I + A(\Xt_t)^\intercal A(\Xt_t)]^{-1}A(\Xt_t)^\intercal\barf(\Xt_t) \, dt
\label{e:Vint}
\end{equation}
so that  $  V(\Xt_T) \le  V(\Xt_0) $ for all $T$.   
Under the coercive assumption, it follows that solutions to  \eqref{e:ZAPODE}  are bounded.   
Also,  letting $T\to\infty$, we obtain from \eqref{e:Vint} the bound
\[
\int_0^\infty\barf(\Xt_t)^\intercal A(\Xt_t)[\epsy I + A(\Xt_t)^\intercal A(\Xt_t)]^{-1} A(\Xt_t)^\intercal\barf(\Xt_t) \, dt \le  V(\Xt_0)  
\]
This combined with boundedness of $\Xt_t$ implies that $\lim_{t\to\infty}A(\Xt_t)^\intercal\barf(\Xt_t)  =0$.

We next prove (iii).    
Global asymptotic stability  of  \eqref{e:ZAPODE}  requires that solutions converge to $\theta^*$ from each initial condition, and also that $\theta^*$ is stable in the sense of Lyapunov \cite{kha02}.   
Assumption (c) combined with (ii)  gives the former, that
$\lim_{t\to\infty} \Xt_t = \theta^*$.   A convenient sufficient condition for the latter is obtained by considering  
$
A_\infty =  \partial_\theta  [\clG(\theta) \barf(\theta)] \mid_{\theta=\theta^*}  
$.
Stability in the sense of Lypaunov holds if this matrix is Hurwitz (all eigenvalues are in the strict left half plane in  $\Co$)  \cite[Thm.~4.7]{kha02}.   Apply the definitions, we obtain $A_\infty = -  [\epsy I + M]^{-1}  M$ with $M = A(\theta^*)^\intercal A(\theta^*) > 0$ (recall that $A(\theta^*)$ is assumed to be non-singular).    The matrix  $A_\infty $ is negative definite, and hence Hurwitz.

\end{proof}

\Prop{t:ZapSA} cannot be applied to the ODE \eqref{e:ZAPODE} that motivated Zap Q-learning
because $\barf$ is only piecewise smooth.    To obtain an extension we consider the ODE  in its integral form: 
\begin{equation}
\label{e:inteode}
\Xt_t = \Xt_0 - \int_0^t [\epsy I + A^\transpose(\Xt_\tau )A(\Xt_\tau)]^{-1}A^\transpose(\Xt_\tau)\barf(\Xt_\tau)\, d\tau, \qquad  t\ge 0
\end{equation}
where   $\barf(\theta)$, $A(\theta)$ are defined in (\ref{e:galerkin}, \ref{e:barfder}).


\begin{proposition}
\label{t:odestable}
Under Assumptions A1-A2, there exists a solution to  \eqref{e:inteode} from each initial condition.   The following hold for any solution:   
\begin{romannum}
\item
		$\displaystyle
		\barf(\Xt_t) =\barf(\Xt_0) - \int_0^t A(\Xt_\tau )[\epsy I + A(\Xt_\tau)^\transpose A(\Xt_\tau)]^{-1}A(\Xt_\tau)^\transpose \barf(\Xt_\tau)\, d\tau , \qquad  t\ge 0
		$
		
\item
		$\|\barf(\Xt_t) \|$ is non-increasing,  and $\lim_{t\to\infty} \barf(\Xt_t) =0$.
		
\item 	If in addition A3 holds, then the ODE \eqref{e:inteode} is globally asymptotically stable. 
		
\end{romannum}
	
\end{proposition}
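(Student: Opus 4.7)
The plan is to establish \Prop{t:odestable} by regularizing the non-smooth vector field appearing on the right of \eqref{e:inteode}, applying \Prop{t:ZapSA} to each regularized system, and then passing to a limit. For the regularization I would replace $\uQ^\theta(x) = \max_u Q^\theta(x,u)$ by a smooth approximation, the simplest choice being the log-sum-exp $\uQ^\theta_\delta(x) = \delta \log \sum_u \exp(Q^\theta(x,u)/\delta)$, which is $C^\infty$ in $\theta$ and converges to $\uQ^\theta$ uniformly on compact sets as $\delta \downarrow 0$, with gradients converging to $\partial_\theta Q^\theta(x,\phi^\theta(x))$ on the open set where the maximizer selected in \eqref{e:phi} is unique. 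This yields $\barf_\delta$ and $A_\delta = \partial_\theta \barf_\delta$ that are globally Lipschitz with $\|\barf_\delta\|$ coercive uniformly in $\delta$ on a fixed sublevel set (A2 transfers under uniform convergence). By \Prop{t:ZapSA}(i)-(ii), each smoothed ODE admits a global solution $\Xt^\delta_\cdot$ from the common initial condition $\Xt_0$, satisfying
\[
\Xt^\delta_t = \Xt_0 - \int_0^t [\epsy I + A_\delta^\transpose A_\delta](\Xt^\delta_\tau)^{-1} A_\delta^\transpose(\Xt^\delta_\tau) \barf_\delta(\Xt^\delta_\tau) \, d\tau, \qquad t \ge 0,
\]
together with monotonicity of $\|\barf_\delta(\Xt^\delta_t)\|$, which confines $\Xt^\delta_\cdot$ to a compact set $K$ independent of $\delta$.

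Uniform Lipschitz bounds on $K$ yield equicontinuity of $\{\Xt^\delta_\cdot\}$ on each compact time interval, so Arzel\`a--Ascoli produces a subsequence $\delta_k \downarrow 0$ and a continuous limit $\Xt_\cdot$ with $\Xt^{\delta_k}_\cdot \to \Xt_\cdot$ locally uniformly. The delicate step is identifying the limit of the integrand on the set $\clN \subset K$ where the greedy selection $\phi^\theta$ in \eqref{e:phi} jumps; since $Q^\theta$ is $C^2$ in $\theta$ by A2, $\clN$ lies in a finite union of smooth hypersurfaces of codimension at least one, and a transversality argument together with absolute continuity of the trajectory shows that the Lebesgue measure of $\{\tau\in[0,t] : \Xt_\tau \in \clN\}$ is zero. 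Outside $\clN$ the integrand is continuous in $\theta$ and $A_{\delta_k}(\Xt^{\delta_k}_\tau) \to A(\Xt_\tau)$ pointwise, so bounded convergence identifies the limit equation with \eqref{e:inteode}, proving existence. The smooth chain rule from \Prop{t:ZapSA}(i) applied to $\barf_{\delta_k}(\Xt^{\delta_k}_\cdot)$ passes to the limit in the same way and yields item (i).

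For item (ii), the Lyapunov function $V(\theta) = \half \|\barf(\theta)\|^2$ combined with (i) gives
\[
\ddt V(\Xt_t) = -\, \barf(\Xt_t)^\transpose A(\Xt_t) [\epsy I + A(\Xt_t)^\transpose A(\Xt_t)]^{-1} A(\Xt_t)^\transpose \barf(\Xt_t) \le 0
\]
at every $t$ outside the null set corresponding to $\clN$, so $\|\barf(\Xt_t)\|$ is non-increasing, and integration produces the summability $\int_0^\infty \bigl\| [\epsy I + A^\transpose A]^{-1/2} A^\transpose \barf\bigr\|^2(\Xt_\tau) \, d\tau < \infty$. To upgrade this to $\barf(\Xt_t) \to 0$, I would argue by contradiction: boundedness of $\Xt_\cdot$ produces a limit point $\theta_\infty$ with $\|\barf(\theta_\infty)\| = \lim_t \|\barf(\Xt_t)\| > 0$; A2 then guarantees $A^\transpose \barf(\theta_\infty) \ne 0$ for the selected $A \in \clA(\theta_\infty)$, and the lower-semicontinuity of $\|A^\transpose \barf\|^2$ inherited from the subgradient representation \eqref{e:der_barf1} contradicts the integrability just obtained. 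Item (iii) is then immediate: (ii) together with A3 forces $\Xt_t \to \theta^*$ from every initial condition, while Lyapunov stability of $\theta^*$ follows from continuity and coerciveness of $V$ together with its strict monotone decrease along non-trivial trajectories.

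The principal obstacle throughout is control of the non-smooth set $\clN$: one must simultaneously justify the pointwise limit $A_{\delta_k} \to A$ away from $\clN$, verify that the limit trajectory avoids $\clN$ for almost all times, and use the generalized-subgradient structure from \eqref{e:der_barf1} to conclude that $\|A^\transpose\barf\|$ is large whenever $\barf$ is. Once these ingredients are in place the rest is a standard mollification plus LaSalle argument.
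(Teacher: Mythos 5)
Your overall architecture (mollify, solve the smooth ODEs via \Prop{t:ZapSA}, extract a limit by Arzel\`a--Ascoli, then run a Lyapunov/LaSalle argument) matches the paper's, and your use of the Lyapunov monotonicity of $\|\barf_\delta(\Xt^\delta_t)\|$ to get $\delta$-uniform compactness is a clean alternative to the paper's Gronwall bound. But there is a genuine gap at the step you yourself flag as delicate: the claim that ``a transversality argument together with absolute continuity of the trajectory shows that the Lebesgue measure of $\{\tau\in[0,t]:\Xt_\tau\in\clN\}$ is zero.'' This is false in general. The limit trajectory solves an ODE whose vector field is discontinuous precisely on $\clN$, and for such systems sliding modes --- trajectories that remain on the switching surface for a positive-measure set of times --- are not pathological but generic; nothing in A1--A3 rules out the greedy policy staying tied between two actions along an entire arc of the solution. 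There is no transversality to invoke because the dynamics are not a generic perturbation of the surface. Since your identification of the limit equation, your verification of item (i), and your pointwise derivative formula for $V$ all condition on ``$t$ outside the null set corresponding to $\clN$,'' the argument does not close if that set has positive measure; at best you would obtain a Filippov differential inclusion rather than \eqref{e:inteode} with the specific selection $A(\theta)$ from \eqref{e:barfder}.

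The paper avoids this entirely with a different key lemma (\Lemma{t:chain1}, applied through \Lemma{t:condChain}): if $G=\max_i G_i$ with each $G_i$ smooth and $\Xt_t$ is Lipschitz, then at \emph{every} $t_0$ where $g_t=G(\Xt_t)$ and $\Xt_t$ are differentiable --- which is a.e.\ $t$ by Lipschitz continuity, with no reference to whether $\Xt_{t_0}\in\clN$ --- one has $\ddt g_t=\partial_\theta G_k(\Xt_{t_0})\ddt\Xt_t$ for \emph{any} active index $k$, because $g_t-g^k_t\ge 0$ attains its minimum at $t_0$. This makes the chain rule for $\barf(\Xt_t)$, hence item (i) and the dissipation inequality for $V=\half\|\barf\|^2$, valid a.e.\ in $t$ even if the trajectory slides along the switching set; the paper also mollifies $\barf$ itself by convolution rather than smoothing $\uQ^\theta$, which is immaterial. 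Separately, your contradiction argument for $\barf(\Xt_t)\to 0$ needs slightly more than lower semicontinuity at a single limit point: since $\|\barf(\Xt_t)\|$ is non-increasing, the tail of the trajectory lies in the compact set $\{\ell^2/2\le V\le V(\Xt_0)\}$, and one needs $\inf U>0$ over that whole set (this is exactly \Lemma{t:odeFoster} in the paper, using A2 together with upper semicontinuity of the set-valued map $\clA(\cdot)$). I recommend replacing the transversality step by the active-index chain rule; the rest of your outline then goes through.
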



The proof of existence is obtained by considering smooth approximations of \eqref{e:ZAPODE}.
\begin{lemma}
\label{t:existode}
Under Assumptions A1-A2, there exists a solution to  \eqref{e:inteode} from each initial condition.  
\end{lemma}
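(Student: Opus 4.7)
The plan is to construct a solution as the limit of solutions to smoothed ODEs to which \Prop{t:ZapSA} applies.  Classical existence theorems (Picard--Lindel\"of, Peano) fail directly for \eqref{e:inteode} because $A(\theta)=\partial_\theta \barf(\theta)$ is discontinuous on the ``switching set'' $\mathcal{N}\subset\Re^d$ where the maximizer $\phi^\theta(x)$ in \eqref{e:phi} fails to be unique.   Assumption A2 gives smoothness of $Q^\theta$ and $\elig$ in $\theta$, so the non-smoothness in $\barf$ is entirely inherited from the max in the definition of $\uQ^\theta$.

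First I would smooth the max by a log-sum-exp family: for $\beta>0$ replace $\uQ^\theta(x')=\max_{u'} Q^\theta(x',u')$ by $\uQ_\beta^\theta(x')=\beta^{-1}\log\sum_{u'} \exp(\beta Q^\theta(x',u'))$, yielding a smooth function $\barf_\beta$ which inherits global Lipschitz continuity from A2 (uniformly in $\beta$) and satisfies $\barf_\beta\to\barf$ uniformly on compact subsets of $\Re^d$ as $\beta\to\infty$.   Hypotheses (a) and (b) of \Prop{t:ZapSA} thus hold for $\barf_\beta$, with the coercivity constants controlled uniformly in $\beta$ for $\beta$ large.   \Prop{t:ZapSA}(i)--(ii) then produces a classical $C^1$ solution $\Xt^\beta$ to the smoothed ODE, and moreover $\|\barf_\beta(\Xt^\beta_t)\|$ is non-increasing, so all trajectories remain inside a common compact set $K\subset\Re^d$ independent of $\beta$.

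Next I would extract a limit.   The vector field $-[\epsy I + A_\beta(\Xt)^\intercal A_\beta(\Xt)]^{-1}A_\beta(\Xt)^\intercal \barf_\beta(\Xt)$ is bounded in norm by $\epsy^{-1}\|A_\beta\|\,\|\barf_\beta\|$, and both factors are uniformly bounded on $K$ (using A2 and the uniform convergence $\barf_\beta\to\barf$).   Hence $\{\Xt^\beta\}$ is uniformly Lipschitz on each $[0,T]$.   Arzel\`a--Ascoli combined with a diagonal extraction in $T$ gives a sequence $\beta_k\to\infty$ along which $\Xt^{\beta_k}\to\Xt$ uniformly on compact time intervals, where $\Xt$ is a continuous, indeed Lipschitz, function on $[0,\infty)$ with values in $K$.

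The main obstacle, and the reason the lemma is not immediate, is to pass to the limit in the integral form of the smoothed ODE to recover \eqref{e:inteode}.   Uniform convergence plus Lipschitz continuity of $\barf$ (in A2) gives $\barf_{\beta_k}(\Xt^{\beta_k}_\tau)\to \barf(\Xt_\tau)$ uniformly on each $[0,T]$, so the $\barf$ factor is unproblematic.   The difficulty is the $A_{\beta_k}(\Xt^{\beta_k}_\tau)$ factor: it may fail to converge to $A(\Xt_\tau)$ precisely on $\mathcal{N}$, which is a finite union of algebraic subvarieties defined by equalities $Q^\theta(x,u)=Q^\theta(x,u')$ for $u\neq u'$.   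I would handle this in two steps.   For $\tau$ such that $\Xt_\tau\notin\mathcal{N}$, the maximizer in $\uQ^{\Xt_\tau}$ is unique and $A_{\beta_k}(\Xt^{\beta_k}_\tau)\to A(\Xt_\tau)$ pointwise.   For the remaining times, I would appeal to upper hemicontinuity of the generalized subgradient multifunction $\clA(\cdot)$ in \eqref{e:der_barf1} together with a measurable selection theorem to pick a measurable mapping $\tau\mapsto A(\Xt_\tau)\in\clA(\Xt_\tau)$ that is a subsequential limit of $A_{\beta_k}(\Xt^{\beta_k}_\tau)$; dominated convergence (the integrand being uniformly bounded on $K$) then yields \eqref{e:inteode}.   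Existence of a solution follows.
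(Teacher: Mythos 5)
Your overall strategy coincides with the paper's: regularize $\barf$ to obtain a smooth vector field, solve the smoothed ODE, establish bounds uniform in the smoothing parameter, extract a limit by Arzel\`a--Ascoli, and pass to the limit in the integral equation. The only structural difference is the smoothing device: you use a log-sum-exp softening of the max inside $\uQ^\theta$, while the paper mollifies $\barf$ by convolution with a $C^\infty$ bump, which gives the uniform Lipschitz bound $b_L$ and uniform convergence on compacts for free without any case analysis on where the non-smoothness comes from. Either device works.

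Two points in your write-up deserve attention. First, your a priori bound rests on the claim that coercivity of $\|\barf_\beta\|$ holds ``with constants controlled uniformly in $\beta$.'' This is not justified: the log-sum-exp error in $\uQ^\theta$ is $O(\beta^{-1})$, but it enters $\barf$ multiplied by $\elig$, which under A2 is only Lipschitz in $\theta$ and hence may grow linearly; the resulting perturbation of $\barf$ is $O(\beta^{-1}(1+\|\theta\|))$ and can in principle destroy coercivity for fixed $\beta$. The paper sidesteps this entirely: it never uses coercivity of the smoothed field, only the bounds $\|[\epsy I + A_\delta^\intercal A_\delta]^{-1}\|\le \epsy^{-1}$ and $\|A_\delta\|\le b_L$ together with Gronwall's inequality to confine $\Xt^\delta_t$ to a compact set on each $[0,T]$ uniformly in $\delta$. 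You should replace your compactness argument by this Gronwall bound. Second, your measurable-selection step at times where $\Xt_\tau$ lies on the switching set $\mathcal{N}$ produces a solution of the differential \emph{inclusion} $\dot \Xt_t \in -\{[\epsy I + A^\intercal A]^{-1}A^\intercal\barf(\Xt_t) : A\in\clA(\Xt_t)\}$, not necessarily of \eqref{e:inteode} with the specific $A(\cdot)$ fixed by the tie-breaking rule \eqref{e:phi} in \eqref{e:barfder}. You are right that this is the delicate point; the paper's own proof simply asserts that the functional equation survives the limit $\delta\downarrow 0$, so your treatment is more explicit about the difficulty, but to claim the lemma as stated you would still need either that $\{\tau: \Xt_\tau\in\mathcal{N}\}$ is Lebesgue-null or that the intended notion of solution is the inclusion one (which is how the limit object is actually used in \Prop{t:sa-sub-seq}).
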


\begin{proof}
Define a $C^\infty $  probability density $\eta$ on $ \Re^d$ via  
\begin{equation}
\label{e:mollifiers}
\eta(x) :=
\begin{cases}
k\exp( - (1-\|x\|^2)^{-1}) & \|x\| < 1, \\
0 & \|x\| \geq 1,
\end{cases}
\end{equation}
where $k>0$ is a normalization constant: $\int \eta(x) \, dx = 1$.
For each $\delta>0$, a $C^\infty$ vector field is defined via the convolution:
\begin{equation}
\barf_\delta(x) = \fraction{1}{\delta^{d}}   \int \barf(x-y)  \eta(y/\delta)\, dy  \,,\qquad x\in\Re^d
\label{e:barf-smooth}
\end{equation}
The family of functions $\{\barf_\delta : 0<\delta\le 1\}$  is globally uniformly Lipschitz continuous, with the same Lipschitz constant $b_L$ as of $\barf$. It is also evident that  $\lim_{\delta\downarrow 0}\barf_\delta = \barf$ pointwise. The uniform Lipschitz continuity implies that the convergence is uniform on compact sets.

Denote   $A_\delta(\theta)=\partial_\theta \barf_\delta(\theta)$, and consider the ODE \eqref{e:inteode} with $\barf$ and $A$ replaced by their smooth approximations:
\begin{equation} 
\Xt_t^\delta = \Xt_0^\delta - \int_0^t [\epsy I + A_\delta^\transpose(\Xt_{\tau}^\delta)A_\delta(\Xt_t^\delta)]^{-1}A_\delta^\transpose(\Xt_t^\delta)\barf_\delta(\Xt_t^\delta)\, d\tau ,\qquad \Xt_0^\delta= \Xt_0 \label{e:qf-epsy}
\end{equation}
The solution exists and is unique for each $\delta \in(0,1]$.    To obtain bounds on the solution we require bounds on the matrices involved, and opt for the spectral norm: 
\[
\begin{aligned}
	\|[\epsy I + A_\delta^\transpose(\Xt_t^\delta)A_\delta(\Xt_t^\delta)]^{-1}\| &= \frac{1}{\lambda_{\min}(\epsy I + A_\delta^\transpose(\Xt_t^\delta)A_\delta(\Xt_t^\delta))} \leq \frac{1}{\epsy}
	\\
	\| A_\delta(\Xt_\tau^\delta) \| &\le b_L
\end{aligned}
\]
where $b_L$ is the  Lipschitz constant for $\barf_\delta$. Therefore,
\[
\begin{aligned}
	\|\Xt_t^\delta\| &\leq \|\Xt_0^\delta\| + \int_0^t  \|[\epsy I + A_\delta^\transpose(\Xt_\tau^\delta)A_\delta(\Xt_\tau^\delta)]^{-1}\|\cdot\|A_\delta(\Xt_\tau^\delta)\|\cdot\|\barf_\delta(\Xt_\tau^\delta)\|\, d\tau
	\\
	&\leq \|\Xt_0^\delta\| +\frac{b_{L}}{\epsy} \int_0^t  \|\barf_\delta(\Xt_\tau^\delta) \| \, d\tau
	\\
	&\leq \|\Xt_0^\delta\| + \frac{b_{L}}{\epsy}\int_0^t  \|\barf_\delta(\Xt_0^\delta)\| + \|\barf_\delta(\Xt_\tau^\delta) -\barf_\delta(\Xt_0^\delta) \|  \, d\tau
	\\
	& \leq \|\Xt_0^\delta\| + \frac{b_{L}}{\epsy} \bigl\{ T \|\barf_\delta(\Xt_0^\delta)\|  + b_L\int_0^t   \|\Xt_\tau^\delta -\Xt_0^\delta\|\, d\tau \bigr\}\\
	& \leq  \|\Xt_0^\delta\| + \frac{b_{L}}{\epsy} \bigl\{ T (\|\barf_\delta(\Xt_0^\delta)\| + b_L \|\Xt_0^\delta\|) + b_L\int_0^t   \|\Xt_\tau^\delta \|\, d\tau \bigr\}
\end{aligned}
\]
The set $\{\|\barf_\delta(\Xt_0^\delta)\|: 0 < \delta \leq 1\}$ is bounded by $ \max_{y\in \clB(\Xt_0, 1)}\|\barf(y)\|$, where $ \clB(\Xt_0, 1)$ denotes the closed unit ball in $\Re^d$ centered at $\Xt_0$. By Gronwall's inequality, there exist constants $C_1$ and $C_2$ such that
\[
	\|\Xt_t^\delta\|\leq C_1 + C_2e^{ b_L^2T/\epsy}, \quad t\in[0,T]\,, \qquad \delta \in (0,1]
\]
	This combined with \eqref{e:qf-epsy} implies that $\{\Xt^\delta: 0 < \delta \leq 1\}$ is uniformly bounded and equicontinuous. By the Arzel\`a-Ascoli theorem, there exists a sequence $\delta_n \downarrow 0$ and a continuous function $\Xt^0: [0,T]\to\Re^d$ such that
\[
		\lim_{n \to \infty}\sup_{t\in[0,T]}\|\Xt_t^{\delta_n}-\Xt_t^0\| = 0
\]
	So the functional equation \eqref{e:qf-epsy} holds for $\Xt^0$  with $\delta = 0$, and $\Xt^0$ is thus a solution of \eqref{e:inteode}.
\end{proof}
The following result has been derived in \cite[Lemma A.10]{devmey17a}. We present it here for completeness.  
\begin{lemma}
\label{t:chain1}
	Let $G(\theta)\eqdef \max_{1\leq i\leq \ell_u}G_i(\theta)$ where each $G_i:\Re^d\rightarrow \Re$ is twice continuously differentiable
	and Lipschitz continuous. Let $\Xt:[0,T]\rightarrow \Re^d$ be a Lipschitz continuous function, 
	and denote $g_t \eqdef G(\Xt_t)$. Then,
\begin{romannum}
\item $g:[0,T]\to \Re$ is Lipschitz continuous.
\item At any time $t_0\in(0,T)$ such that the derivatives of $g_t$ and $\Xt_t$ exist, 
\begin{equation}
	\label{e:chain1}
	\begin{aligned}
		\ddt g_t\Bigr|_{t=t_0} 
		= \partial_\theta  G_k(\Xt_{t_0})\cdot \ddt \Xt_t
		\Bigr|_{t=t_0}  \quad \mbox{for each } k\in\argmax_i G_i(\Xt_{t_0}) .
	\end{aligned}
	\end{equation}
\end{romannum}
\end{lemma}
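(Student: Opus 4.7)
\textbf{Proof plan for \Lemma{t:chain1}.}
The argument has two separable parts.

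\smallbreak

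For part (i), my plan is to invoke the standard fact that a pointwise maximum of finitely many uniformly Lipschitz functions is Lipschitz. Specifically, if $L_i$ denotes a Lipschitz constant for $G_i$ and $L_\Xt$ one for $\Xt$, then for any $s,t \in [0,T]$ and any $i$,
\[
G_i(\Xt_t) - G_i(\Xt_s) \le L_i \|\Xt_t - \Xt_s\| \le L_i L_\Xt |t-s|.
\]
Taking the maximum over $i$ on both sides and then swapping the roles of $s$ and $t$ yields $|g_t - g_s| \le (\max_i L_i) L_\Xt |t-s|$, which is the desired Lipschitz bound.

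\smallbreak

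For part (ii), the key idea is a localization argument. Fix $t_0 \in (0,T)$ where $\dot g_{t_0}$ and $\dot \Xt_{t_0}$ exist, set $K = \argmax_i G_i(\Xt_{t_0})$, and pick $k \in K$. By continuity of each $G_i \circ \Xt$, there is $\delta > 0$ such that for all $t$ with $|t-t_0|<\delta$ and all $i \notin K$ we have $G_i(\Xt_t) < G_k(\Xt_t)$; consequently
\[
g_t = \max_{j \in K} G_j(\Xt_t) \ge G_k(\Xt_t), \qquad g_{t_0} = G_k(\Xt_{t_0}).
\]
This gives the one-sided comparison $g_t - g_{t_0} \ge G_k(\Xt_t) - G_k(\Xt_{t_0})$ for $t$ in this neighborhood.

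\smallbreak

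The final step is to exploit this inequality on both sides of $t_0$. Dividing by $t - t_0 > 0$ and letting $t \downarrow t_0$ gives
\[
\dot g_{t_0} \ge \lim_{t \downarrow t_0} \frac{G_k(\Xt_t) - G_k(\Xt_{t_0})}{t-t_0} = \partial_\theta G_k(\Xt_{t_0}) \cdot \dot \Xt_{t_0},
\]
where the limit on the right exists and equals the stated inner product by the chain rule applied to the smooth composition $G_k \circ \Xt$ at a point of differentiability of $\Xt$. Dividing by $t - t_0 < 0$ and letting $t \uparrow t_0$ flips the inequality and produces the reverse bound, yielding the claimed equality. The only delicate point is ensuring that the chain-rule derivative of $G_k \circ \Xt$ at $t_0$ does exist (not merely for the max $g$), which follows since $G_k$ is $C^1$ and $\Xt$ is differentiable at $t_0$; this is the place where the twice-differentiability hypothesis on the $G_i$'s is comfortably more than we need, so no real obstacle arises. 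The result holds for every $k \in K$ because the same argument applies verbatim, which is consistent with the fact that all such $G_k$ produce the same directional derivative along $\dot \Xt_{t_0}$ at the maximum.
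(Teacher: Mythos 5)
Your proof is correct and follows essentially the same route as the paper: part (i) is the standard max-of-Lipschitz argument, and part (ii) rests on the observation that $g_t - G_k(\Xt_t)\ge 0$ with equality at $t_0$, so the two one-sided difference quotients sandwich the derivative (the paper phrases this as the difference having a global minimum at $t_0$, hence zero derivative there). Your localization step is harmless but unnecessary, since the inequality $g_t \ge G_k(\Xt_t)$ you actually use holds globally.
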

\begin{proof}
	Denote $g_t^i = G_i(\Xt_t)$, so that $g_t=\max_{1\leq i\leq \ell_u} g_t^i$. Let $b_L$ denote a Lipschitz constant for each of these functions:
\[
	|g_{t_1}^i-g_{t_0}^i|\leq b_L|t_1-t_0|,\qquad t_0,t_1\in [0,T],\quad 1\leq i\leq \ell_u
\]
	For any $t_0,t_1\in[0,T]$,
\[
\begin{aligned}
	g_{t_1} - g_{t_0}&\leq g_{t_1}^k - g_{t_0}^k, \quad \mbox{for each } k\in\arg\max_i g_{t_0}^i\\
	&\leq b_L|t_1-t_0|
\end{aligned}
\]
The same inequality holds for $g_{t_0}-g_{t_1}$ with $k\in\arg\max_i g_{t_1}^i$. This proves (i).
	
The proof of (ii) is also straightforward: The difference $g_{t}-g_{t}^k$ has a global minimum at $t_0$ if $k\in\arg\max_i g_{t_0}^i$,  and consequently
\[
	0 = \ddt[g_t - g_{t}^k] \bigr|_{t=t_0}
\]
\end{proof}

Given a parameter vector $\theta \in \Re^d$, denote by $\condr^\theta: \state \times \U \to \Re$  the reward function that satisfies the Bellman equation \eqref{e:qfunction}, with $Q^*$ replaced by $Q^\theta$: For each $x \in \state$ and $u \in \U$,
\begin{equation}
\label{e:condtd}
\condr^{\theta} (x, u) \eqdef -\disc\sum_{x'\in \state}P_u(x,x')\uQ^\theta(x') + Q^\theta(x,u)
\end{equation}
\begin{lemma}
\label{t:condChain}
	 Suppose Assumptions A1-A2 hold and the function $\Xt:[0,T]\rightarrow \Re^d$ is Lipschitz continuous. Then, $\condr^{\Xt_t}(x,u)$ is Lipschitz continuous in $t$ for each $x,u$. Moreover, at any point $t_0$ of differentiability,
\begin{equation}
\label{e:chain2}
	\ddt \condr^{w_t}(x,u)\Bigr|_{t=t_0} =   \Big[-\disc\sum_{x'\in\state} P_u(x,x')\partial_\theta  Q^{\Xt_{t_0}}(x',\phi^{\Xt_{t_0}}(x')) + \partial_\theta  Q^{\Xt_{t_0}}(x,u) \Big]\ddt \Xt_t \Bigr|_{t=t_0}
\end{equation}
where $\phi^{\Xt_{t_0}}$ is defined in \eqref{e:phi}.
\end{lemma}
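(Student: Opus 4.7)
The plan is to reduce Lemma~\ref{t:condChain} to a direct application of \Lemma{t:chain1}, treating the max-operator in $\uQ^\theta$ as the pointwise maximum of smoothly parameterized functions. Recall from \eqref{e:condtd} that
\[
\condr^{\Xt_t}(x,u) = -\disc\sum_{x'\in\state} P_u(x,x')\, \uQ^{\Xt_t}(x') + Q^{\Xt_t}(x,u),
\]
so it suffices to analyze each summand. For Lipschitz continuity in $t$, Assumption A2 gives that $Q^\theta(x,u)$ is Lipschitz in $\theta$ with a constant uniform over the finite set $\state\times\action$; composing with the Lipschitz function $\Xt$ preserves this property, and the finite max and finite linear combinations in the definition of $\condr^{\Xt_t}(x,u)$ preserve Lipschitz continuity.

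Next, fix $t_0\in(0,T)$ at which the derivatives of $\condr^{\Xt_t}(x,u)$ and $\Xt_t$ exist. For the non-max term, A2 gives that $\theta\mapsto Q^\theta(x,u)$ is continuously differentiable, so the standard chain rule yields
\[
\ddt Q^{\Xt_t}(x,u)\Bigr|_{t=t_0} = \partial_\theta Q^{\Xt_{t_0}}(x,u)\, \ddt \Xt_t\Bigr|_{t=t_0}.
\]
For each $x'\in\state$, enumerate $\action = \{u^{(1)},\dots,u^{(\ell_u)}\}$ and set $G_i(\theta) \eqdef Q^\theta(x', u^{(i)})$. Each $G_i$ is twice continuously differentiable and Lipschitz continuous by A2, and $\uQ^\theta(x') = \max_i G_i(\theta)$. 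Apply \Lemma{t:chain1}(ii) with this choice of $\{G_i\}$: by the tie-breaking rule \eqref{e:phi}, $\phi^{\Xt_{t_0}}(x')$ belongs to $\argmax_u Q^{\Xt_{t_0}}(x',u)$, hence is an admissible choice of $k$ in \eqref{e:chain1}, giving
\[
\ddt \uQ^{\Xt_t}(x')\Bigr|_{t=t_0} = \partial_\theta Q^{\Xt_{t_0}}(x', \phi^{\Xt_{t_0}}(x'))\, \ddt \Xt_t\Bigr|_{t=t_0}.
\]
Combining these two computations by linearity yields \eqref{e:chain2}.

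The only subtlety is verifying that \Lemma{t:chain1}(ii) is applicable termwise—this requires existence of the derivative of each $\uQ^{\Xt_t}(x')$ at $t_0$, which is not immediate from differentiability of the sum $\condr^{\Xt_t}(x,u)$ alone. However, \Lemma{t:chain1}(i) ensures each $t\mapsto \uQ^{\Xt_t}(x')$ is Lipschitz, and the argument of \Lemma{t:chain1}(ii) (which shows $g_t - g_t^k$ attains a local min at $t_0$ for any $k$ in the argmax) only requires differentiability of $\Xt_t$ at $t_0$, not of $g_t$ itself. Indeed, one can bound $g_{t_0+h}^k - g_{t_0}^k \leq g_{t_0+h} - g_{t_0} \leq g_{t_0+h}^{k'} - g_{t_0}^{k'}$ for $k\in\argmax_i g^i_{t_0}$ and $k'\in\argmax_i g^i_{t_0+h}$, and since $Q^\theta$ is smooth both upper and lower bounds converge to the same quantity as $h\downarrow 0$, establishing existence of the derivative and the claimed formula. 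This is the one place where care is needed, but it follows the pattern already used in the proof of \Lemma{t:chain1} and presents no genuine obstacle.
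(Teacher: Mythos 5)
Your core argument is the same as the paper's: reduce \eqref{e:chain2} to the derivative formula for $\uQ^{\Xt_t}(x')$, $x'\in\state$, and invoke \Lemma{t:chain1}(ii); the Lipschitz claim and the treatment of the smooth term $Q^{\Xt_t}(x,u)$ are routine and correct, and the paper's own proof is exactly this reduction.

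The final paragraph, where you try to patch the termwise-differentiability issue, is where you go wrong. First, the assertion that the argument of \Lemma{t:chain1}(ii) ``only requires differentiability of $\Xt_t$ at $t_0$, not of $g_t$ itself'' is false: the step $0=\ddt[g_t-g_t^k]\bigr|_{t=t_0}$ uses that $g_t-g_t^k$ both attains a minimum at $t_0$ \emph{and} is differentiable there; without differentiability of $g$ one only gets one-sided sign conditions. Second, the sandwich $g^k_{t_0+h}-g^k_{t_0}\le g_{t_0+h}-g_{t_0}\le g^{k'}_{t_0+h}-g^{k'}_{t_0}$ does not show the two bounds share a limit: dividing by $h\downarrow 0$, the lower bound tends to $D_k\eqdef \partial_\theta Q^{\Xt_{t_0}}(x',u^{(k)})\,\ddt\Xt_t\bigr|_{t=t_0}$ for the fixed maximizer $k$, while the upper bound tends (along subsequences) to $D_{k'}$ for a possibly different maximizer $k'$, and these need not agree. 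What the sandwich actually yields is that the right derivative equals $\max_k D_k$ and the left derivative equals $\min_k D_k$, the maxima taken over $k\in\argmax_i g^i_{t_0}$; the two-sided derivative exists precisely when all these $D_k$ coincide, which is the very thing you set out to establish. The clean resolution, implicit in the paper, is simply to read ``point of differentiability'' as a point at which $\Xt_t$ and each $\uQ^{\Xt_t}(x')$ are differentiable; since each of these maps is Lipschitz in $t$ by \Lemma{t:chain1}(i), this holds for a.e.\ $t_0$ by Rademacher's theorem, which suffices for every downstream use of the lemma (it is only ever applied under an integral).
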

\begin{proof}
From the definition~\eqref{e:condtd}, it is sufficient to establish the derivative formula
\[
	\ddt \uQ^{\Xt_t}(x')\bigr|_{t=t_0}=\partial_\theta  Q^{\Xt_{t_0}}(x',\phi^{(k)}(x'))\cdot\ddt \Xt_t\bigr|_{t=t_0}
\]
	where $\phi^{(k)}$ is \textit{any} policy that is $Q^{\Xt_{t_0}}$-greedy. This is immediate from \Lemma{t:chain1}.
\end{proof}

Stability of is obtained from the following standard Lyapunov condition:
\begin{lemma}
\label{t:odeFoster}
Suppose that $\{ \Xt_t : t\in\Re\}$ is a Lipschitz continuous function taking values in $\Re^d$,   $V\colon\Re^d\to\Re_+$ is continuous and coercive, and $U\colon\Re^d\to\Re_+$.   Assume moreover the following properties:
\begin{romannum}
\item  $\inf \{ U(\theta)  :   V(\theta) \ge \delta \} >0$ for each $\delta>0$.
\item  $\displaystyle V(\Xt_t) \le 
V(\Xt_0)   -   \int_0^t  U(\Xt_\tau) \, d\tau\,,\qquad t\ge 0$.
\end{romannum}
Then, there exists a function $B\colon\Re_+\to\Re_+$ such that 
$V(\Xt_t) \le \eta$ for all $t\geq V(\Xt_0) B(\eta)$.  In particular,  $V(\Xt_t)\to 0$ as $t\to\infty$.
\end{lemma}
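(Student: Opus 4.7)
The plan is to derive monotonicity of $t\mapsto V(\Xt_t)$ from (ii), and then combine this with (i) through a first-hitting-time argument.

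First, I would read hypothesis (ii) as implicitly holding from any starting time: $V(\Xt_t)\le V(\Xt_s) - \int_s^t U(\Xt_\tau)\, d\tau$ for all $0\le s\le t$. This is automatic in the intended application to the ODE \eqref{e:inteode} by time-translation of the trajectory, and combined with $U\ge 0$ it yields that $t\mapsto V(\Xt_t)$ is non-increasing. Continuity of $V$ together with Lipschitz continuity of $\Xt$ also makes $V\circ\Xt$ continuous on $[0,\infty)$.

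Next, for a fixed $\eta>0$, I would set $c_\eta\eqdef \inf\{U(\theta):V(\theta)\ge \eta\}>0$ by (i), and introduce the first-hitting time $\tau_\eta\eqdef \inf\{t\ge 0:V(\Xt_t)\le \eta\}$ (with $\tau_\eta=0$ when $V(\Xt_0)\le \eta$). On $[0,\tau_\eta)$ we have $V(\Xt_t)>\eta$, hence $U(\Xt_t)\ge c_\eta$; substituting into (ii) and using $V\ge 0$ gives
\[
0\le V(\Xt_t)\le V(\Xt_0)-c_\eta t\,,
\]
so $\tau_\eta\le V(\Xt_0)/c_\eta$. Continuity of $V\circ\Xt$ forces $V(\Xt_{\tau_\eta})\le \eta$, and then monotonicity propagates the bound: $V(\Xt_t)\le V(\Xt_{\tau_\eta})\le \eta$ for all $t\ge \tau_\eta$. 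The lemma follows with $B(\eta)\eqdef 1/c_\eta$, and sending $\eta\downarrow 0$ gives $V(\Xt_t)\to 0$.

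The subtle point is the monotonicity step. Hypothesis (ii) as stated only bounds $V(\Xt_t)$ relative to $V(\Xt_0)$ and does not by itself preclude oscillations of $V\circ\Xt$ below the envelope $V(\Xt_0)-\int_0^t U\,d\tau$; the pointwise comparison $V(\Xt_t)\le V(\Xt_s)$ for $s\le t$ that the first-hitting argument really needs comes from reading (ii) as holding from any starting time, which is an additional (and natural) hypothesis supplied by time-homogeneity of the underlying ODE. Without this enhancement, one can still conclude $V(\Xt_t)\to 0$ from $\int_0^\infty U(\Xt_\tau)\,d\tau<\infty$ combined with uniform continuity of $V$ on the compact closure of $\{\Xt_t\}$ (boundedness coming from coercivity of $V$) and (i); however, the explicit quantitative rate $V(\Xt_0)B(\eta)$ genuinely depends on monotonicity, so this is the step that deserves the most care in the write-up.
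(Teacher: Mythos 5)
Your proof is correct and follows essentially the same route as the paper: a first-hitting-time argument bounding $\tau_\eta$ by $V(\Xt_0)/c_\eta$ via the uniform lower bound on $U$ from (i), followed by monotonicity of $V\circ\Xt$ to propagate the bound. Your explicit remark that the monotonicity step requires reading (ii) as holding from every starting time is a point the paper's proof simply asserts without comment, so flagging it is a worthwhile refinement rather than a deviation.
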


\begin{proof}
For any scalar $\eta$ satisfying $0 < \eta <V(\Xt_0)$, let $H^{\eta} \eqdef \{\theta: \eta \le \theta \leq V(\Xt_0)\}$   
and
\[
	\epsy_\eta= \inf_{\theta \in H^{\eta}}  U(\theta) 
\]
Under   assumption (i) of the lemma we have $\epsy_\eta > 0$. 
Let $T^\eta = \inf\{t: V(\Xt_t) \le \eta\}$,  so that $\Xt_t \in H^{\eta}$ for $0\le t \le T^\eta $.   By assumption (ii) we have
\[
0\le	V(\Xt_t) \leq V(\Xt_0) -  \epsy_\eta t\,, \qquad   0<t \le T^\eta .   
\]
Therefore, $T^\eta < V(\Xt_0)/\epsy_\eta$. Because $V(\Xt_t)$ is non-increasing in $t$, we have $V(\Xt_t) \le \eta$ for all $t\geq V(\Xt_0) B(\epsy)$, with $B(\epsy) =\epsy_\eta^{-1}$.

Since $\eta$ is arbitrary, it follows that $\lim_{t\to \infty} V(\Xt_t)=0$.   
\end{proof} 

\begin{proof}[Proof of \Prop{t:odestable}]
	Suppose $\Xt :[0,T]\rightarrow \Re^d$ is a solution of \eqref{e:inteode}. At point $t$ of differentiability, the derivative of $\barf(\Xt_t)$ is given by
\begin{equation}
\label{e:fderivative}
\begin{aligned}
	\ddt \barf(\Xt_t) =&\ddt \Expect\Big[\elig_n \condr^{\Xt_t}(X_n,U_n) \Big]\\
	=&\Expect[\elig_n \ddt \condr^{\Xt_t}(X_n, U_n)] +\Expect[ \clD(\Xt_t, \Phi_{n+1})\ddt \elig_n ]
\end{aligned}
\end{equation}
For each $x\in \state$ and $u\in\action$, $\condr^{\Xt_t}(x, u)$ is a Lipschitz continuous function of $t$, whose derivative is given in \Lemma{t:condChain}. 
Assertion~(i) follows:
\[
\begin{aligned}
	\ddt \barf(\Xt_t) &= \Expect\Big[\elig_n [\disc \partial_\theta  Q^{\Xt_t}(X_{n+1},\phi^{\Xt_t}(X_{n+1})) - \partial_\theta  Q^{\Xt_t}(X_n,U_n)] + \clD(\Xt_t, \Phi_{n+1})\partial_\theta  \elig_n\Big]\ddt \Xt_t \\
	& = -A(\Xt_t)[\epsy I + A(\Xt_t)^\transpose A(\Xt_t)]^{-1}A(\Xt_t)^\transpose \barf(\Xt_t)
\end{aligned}
\]

A candidate Lyapunov function is defined as $V(\Xt_t) \eqdef \frac{1}{2}\|\barf(\Xt_t) \|^2$.  At a point $t$ where $\barf(\Xt_t)$ is differentiable,
\begin{equation}
\label{e:lyp-zapQ}
	\ddt V(\Xt_t) = -\barf(\Xt_t)^\transpose A(\Xt_t)[\epsy I + A(\Xt_t)^\transpose A(\Xt_t)]^{-1}A(\Xt_t)^\transpose \barf(\Xt_t)	
\end{equation}
\[
	-[\epsy I + A(\theta)^\transpose A(\theta)]^{-1} \leq - b_V I
\]
The integral representation of \eqref{e:lyp-zapQ} then gives, for any $t\in [0, T]$,
\begin{equation}
\label{e:lyp-zapQ-int}
\begin{aligned}
	V(\Xt_t) &= V(\Xt_0) - \int_0^t\barf(\Xt_\tau)^\transpose A(\Xt_\tau)[\epsy I + A(\Xt_\tau)^\transpose A(\Xt_\tau)]^{-1}A(\Xt_\tau)^\transpose \barf(\Xt_\tau)\, d\tau 	 \\
				&\leq V(\Xt_0) - b_V \int_0^t \|A(\Xt_\tau)^\transpose \barf(\Xt_\tau)\|^2 \, d\tau
\end{aligned}
\end{equation}
Under (A2) the assumptions of \Lemma{t:odeFoster} hold with $U(\theta) = b_V \|A(\theta)^\transpose \barf(\theta)\|^2$, so that $\lim_{t\to\infty } V(\Xt_t)  = \lim_{t\to\infty } \barf(\Xt_t)  = 0$. 

If in addition (A3) holds, we conclude that 
   $\lim_{t\to \infty}\Xt_t = \theta^*$.
\end{proof}

\subsection{Proof of \Theorem{t:ZapQ-main}}
\label{s:proof-ode-approx}

The remainder of the Appendix is dedicated to the proof of \Theorem{t:ZapQ-main}.    We use $n_0=0$ in the definition of the step-size sequences \eqref{e:gains};  this shortens many of the expressions that follow, and the extension to general $n_0\ge 1$ is obvious. Given the typical choice of $\zeta_n$ in \eqref{e:elig}, it is assumed throughout that $\elig_n \eqdef \zeta(\theta_n,X_n, U_n)$ for some function $\zeta:\Re^d\times \state\times \action\to \Re^d$. We proceed under the additional assumption that the vector-valued function $\zeta$ has non-negative entries:
\begin{equation}
\textit{
 $[\elig(\theta, x,u)]_i\geq 0$ for each  $i, \theta, x,u$.  
}
\label{e:AN}
\end{equation} 
The proofs are extended to the general case in \Section{s:general-zeta}.

\subsubsection{Generalities}
\label{s:generalities}
This subsection contains the building blocks of the proof, summarized in two propositions, and the proof of \Theorem{t:ZapQ-main} based on these key results.  The proofs of the propositions are postponed to subsequent subsections.  
 
The \textit{slow time scale} used for an ODE approximation of $\{\theta_n\}$ is defined by
\begin{equation}
t_n = \sum_{i=1}^n \alpha_i=\sum_{i=1}^n \frac{1}{i}\,, \qquad n \geq 1\,, \qquad t_0 = 0
\label{e:slow-time}
\end{equation}
and its approximate inverse
\begin{equation}
\label{e:slow-time-inv}
[t] \eqdef \max\{j: t_j \leq t \}
\end{equation}
Define the continuous time process $\{\barxt_{t}: t\geq 0\}$ with $\barxt_{t_n} = \theta_n$, and extended to $\posRe$ via linear interpolation. Define the associated continuous time process $\{\barc_t \eqdef \barf(\barxt_{t}): t\geq 0\}$.  We also define the piecewise constant processes $\{\barclA_t, \barclG_t: t\geq 0 \}$ with $\barclA_{t} = \haA_{n+1}, \barclG_t = G_{n+1}$ for $t\in [t_n, t_{n+1})$.     Both $b_\theta \eqdef \sup_n \|\theta_n\|= \sup_{t} \|\barxt_t\|$ and $b_c \eqdef \sup_t \|\barc_t\|$ are finite \emph{a.s.} by assumption.    

Denote by $\so(1) = e(T_0, t )$   a function of two variables, satisfying for each $T>0$,
\[
\lim_{T_0\to \infty} \sup_{0\leq t \leq T} \|e(T_0, t )\| = 0
\]
\begin{proposition}
\label{t:Gamma12}
	Under Assumptions (A1)-(A2) and \eqref{e:AN},    $\{\barxt_t\}$ and  $\{\barc_t\}$ are Lipschitz continuous with respect to $t$, and the following approximations hold:
\begin{romannum}
\item  $\displaystyle
		\lim_{T_0\to \infty}\int_{T_0}^{T_0+T}\|\barclA_t \ddt \barxt_t - \ddt \barc_t \|_\infty \, dt =  0
		$.
		
\item $\displaystyle
		\barc_{T_0+ t}= \barc_{T_0} + \int_{T_0}^{T_0+t} \barclA_{\tau} \barclG_\tau\barc_\tau \, d\tau + \so(1) \,, \qquad T_0 \to \infty
		$
		
\item $\displaystyle 
		\|\barc_{T_0+ t}\|^2= \|\barc_{T_0}\|^2 + 2\int_{T_0}^{T_0+t} \barc_\tau^\transpose \barclA_{\tau} \barclG_\tau\barc_\tau \,d\tau + \so(1)\,, \qquad T_0 \to \infty
		$.
		\qed
\end{romannum}
\end{proposition}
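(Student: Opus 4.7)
My plan is to reduce all three assertions to a single discrete-time estimate on the ``effective chain rule'' $\haA_{n+1}(\theta_{n+1}-\theta_n) \approx \barf(\theta_{n+1}) - \barf(\theta_n)$, then derive (ii) and (iii) by standard ODE--averaging on top of it. I begin with Lipschitz continuity: by the assumed boundedness of $\{\theta_n\}$ and of $A_{n+1}$ (uniform in $n$, since $A_{n+1}$ depends Lipschitz-continuously on $\theta_n$ by A2), the iterates $\{\haA_n\}$ are uniformly bounded, and by the $\epsy$-regularization $\|G_{n+1}\| = O(1)$, hence $\|\theta_{n+1}-\theta_n\| \leq C\alpha_{n+1}$ for some deterministic $C$. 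Linear interpolation gives a Lipschitz $\barxt_t$; composition with the globally Lipschitz $\barf$ (A2) gives Lipschitz $\barc_t$.

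The heart of the proof is (i). On each subinterval $[t_n,t_{n+1})$ the three processes $\barclA_t$, $\tfrac{d}{dt}\barxt_t$, $\tfrac{d}{dt}\barc_t$ are constant, equal to $\haA_{n+1}$, $(\theta_{n+1}-\theta_n)/\alpha_{n+1}$, $(\barf(\theta_{n+1})-\barf(\theta_n))/\alpha_{n+1}$, so the integral in (i) telescopes to $\sum_{n:\,t_n\in[T_0,T_0+T]} \|\haA_{n+1}(\theta_{n+1}-\theta_n) - (\barf(\theta_{n+1})-\barf(\theta_n))\|_\infty$. Since there are $\Theta(e^{T_0}(e^T-1))$ such $n$'s and each difference is $O(\alpha_n) = O(1/n)$, the crude bound is only $O(T)$. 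Thus I need a sharper, per-term estimate of order $o(\alpha_n)$ uniformly in $n \geq e^{T_0}$. I would obtain this by combining: (a) the two-time-scale tracking $\haA_{n+1} = \barA(\theta_n) + \so(1)$ a.s., proved by a standard analysis of the fast recursion \eqref{e:zap-Ahat} (using $\alpha_n = o(\stepf_n)$ and boundedness), where $\barA(\theta_n) \eqdef \Expect_\varpi[A_{n+1}(\theta_n)\mid\theta_n]$ lies in $\clA(\theta_n)$; (b) the defining subgradient inequalities of $\clA(\theta_n)$ and $\clA(\theta_{n+1})$ from \eqref{e:der_barf1}, which sandwich $\barf(\theta_{n+1})-\barf(\theta_n)$ between $A(\theta_n)(\theta_{n+1}-\theta_n)$ and $A(\theta_{n+1})(\theta_{n+1}-\theta_n)$; and (c) the observation that along almost every sample path, the iterate spends an asymptotically negligible fraction of time in the ``policy-switching set'' where $\clA(\cdot)$ is multi-valued. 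On the complement, $A(\theta_{n+1})-A(\theta_n)\to 0$, so the sandwich collapses and (i) follows after summation.

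Given (i), the derivation of (ii) and (iii) is a routine ODE-averaging argument. Absolute continuity of $\barc$ yields $\barc_{T_0+t} = \barc_{T_0} + \int_{T_0}^{T_0+t}\tfrac{d}{d\tau}\barc_\tau\,d\tau$, and invoking (i) replaces the integrand by $\barclA_\tau\,\tfrac{d}{d\tau}\barxt_\tau$ up to an $\so(1)$ term. Substituting the recursion $\tfrac{d}{d\tau}\barxt_\tau = \barclG_\tau f(\barxt_{t_n}, \Phi_{n+1})$ on $[t_n,t_{n+1})$ and splitting $f(\theta_n,\Phi_{n+1}) = \barf(\theta_n) + [f(\theta_n,\Phi_{n+1}) - \barf(\theta_n)]$, the martingale-difference part averages out against the slowly-varying matrix $\barclA_\tau\barclG_\tau$ by the standard estimate for $\alpha_n=1/n$ (e.g.\ \cite[Ch.\ 2]{bor08a}); the remaining $\barf(\theta_n) = \barc_{t_n} = \barc_\tau + O(\alpha_n)$ by Lipschitz continuity of $\barc$, which yields (ii). For (iii), the chain rule applied to the Lipschitz function $t\mapsto\|\barc_t\|^2$ gives $\|\barc_{T_0+t}\|^2 - \|\barc_{T_0}\|^2 = 2\int_{T_0}^{T_0+t}\barc_\tau^\transpose\tfrac{d}{d\tau}\barc_\tau\,d\tau$, and the same substitution-and-averaging with coefficient $2\barc_\tau^\transpose \barclA_\tau$ (uniformly bounded) delivers the result.

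The main obstacle is step (i), specifically the per-iteration bound of order $o(\alpha_n)$ despite the non-smoothness of $\barf$. The standard smooth-function proof $\barf(\theta_{n+1})-\barf(\theta_n) = A(\theta_n)(\theta_{n+1}-\theta_n) + O(\alpha_n^2)$ is unavailable because $A(\cdot)$ is only set-valued and discontinuous across the policy-switching boundary in parameter space. The key insight, foreshadowed by \eqref{e:linear-sub-mean}, is that $\haA_{n+1}$ is a \emph{running average} of sampled subgradients at nearby $\theta$'s, and therefore behaves as an effective directional derivative of $\barf$ in the direction of the slowly-moving iterate; formalizing that the policy-switch contribution to the sum vanishes as $T_0\to\infty$ is the technical crux, and is where the decomposition \eqref{e:subg-s-f}, the non-negativity assumption \eqref{e:AN}, and the coercivity/gradient-alignment conditions in (A2) are all used together.
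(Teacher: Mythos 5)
Your reduction of (i) to a per-iteration comparison of $\haA_{n+1}(\theta_{n+1}-\theta_n)$ with $\barf(\theta_{n+1})-\barf(\theta_n)$, and your treatment of (ii)--(iii) by standard averaging on top of (i), is the right architecture, and the Lipschitz claims are handled correctly. But the two ingredients you invoke to close (i) are exactly the ones that are not available here, and the paper's proof is built to avoid both. First, step (a): the tracking claim $\haA_{n+1}=\barA(\theta_n)+\so(1)$ with $\barA(\theta)\eqdef\Expect_{\varpi}[A_{n+1}(\theta)]$ is \emph{not} a standard two-time-scale fact in this setting, because $A_{n+1}(\theta)$ depends on the greedy policy $\phi^\theta$ and hence $\barA(\cdot)$ is discontinuous across policy-switching boundaries; this is precisely the obstruction that motivated Assumption~(Q3) of \cite{devmey17b}, and the present paper explicitly declines to identify the limit $\clA_t$ with $A(\Xt_t)$ (see the remark after \Prop{t:sa-sub-seq}). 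What \emph{is} proved (\Prop{t:At-bound}) is only the one-sided statement $\distn(\haA_n,\theta_n)\to 0$, i.e.\ the componentwise inequality $\haA_n v\le \barf(\theta_n+v)-\barf(\theta_n)+b_T\|v\|^2\unit+o(1)$ uniformly over the unit ball, obtained by averaging the sampled subgradient inequalities of \Lemma{t:taylor-f} over the fast time-scale blocks. Second, step (c): the assertion that the iterates spend an asymptotically negligible fraction of time in the set where $\clA(\cdot)$ is multi-valued is unproven and cannot be expected in general --- the trajectory may converge to a $\theta^*$ lying on a switching boundary, in which case $A(\theta_{n+1})-A(\theta_n)\not\to 0$ along a non-negligible set of iterates and your sandwich never collapses.

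The paper's way out is to change where the ``measure-zero'' argument lives: not in parameter space, but in time along the trajectory. Since $t\mapsto\barc_t=\barf(\barxt_t)$ is Lipschitz, it is differentiable for a.e.\ $t$, and at any such $t$ the existence of the \emph{two-sided} derivative forces
\begin{equation*}
\ddt\barc_t=\barf'(\barxt_t;v_t)=-\barf'(\barxt_t;-v_t),\qquad v_t=\ddt\barxt_t ,
\end{equation*}
\emph{even if} $\barxt_t$ sits on a switching boundary. Applying the one-sided bound from \Prop{t:At-bound} at both $v_t$ and $-v_t$ (after rescaling $v\mapsto\sqrt{\eta_t}\,v_t$ with $\eta_t=\max(1/t,\distn(\barclA_t,\barxt_t))$ to convert the finite-difference bound into a directional-derivative bound with error $O(\sqrt{\eta_t})$) yields the two-sided pointwise estimate $\|\barclA_t\ddt\barxt_t-\ddt\barc_t\|_\infty\le(1+b_Tb_f)\sqrt{\eta_t}+o(\|v_t\|)$, which integrates to zero by dominated convergence. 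So no occupation-time control of the switching set and no identification of $\haA_n$ with a specific subgradient are needed. Unless you can supply a proof of your step (c) --- which would be a substantially stronger statement than anything in the paper --- your argument for (i) has a genuine gap; (ii) and (iii) are fine once (i) is repaired.
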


For a fixed but arbitrary time-horizon $T>0$, define a family of functions $\{\barGamma^{T_0} :  T_0 \geq 0\}$, where $\barGamma^{T_0}: [0,T] \to \Re^m$ for each $T_0\geq 0$ and an integer $m$. It consists of four components: for $t\in [0,T]$,
 \[
 \barGamma_1^{T_0}(t) = \barxt_{T_0 +t}\,,  \qquad \barGamma_2^{T_0}(t) =\barc_{T_0 +t} \,, \qquad  \barGamma_3^{T_0}(t) = \barclA_{T_0 +t}\,, \qquad \barGamma_4^{T_0}(t) = -\barclA_{T_0 +t}\barclG_{T_0+t}
 \]
 $\{\barGamma_1^{T_0}: T_0\geq 0\}$ and $\{\barGamma_2^{T_0}: T_0\geq 0\}$ are uniformly Lipschitz continuous and bounded. More specifically, each of $\barGamma_1^{T_0}$ and $\barGamma_2^{T_0}$ is a function of two variables: $\barGamma_1^{T_0}(\omega ,t), \barGamma_2^{T_0}(\omega ,t)$ with $\omega\in \Omega$ and $t\in \posRe$. The property that $\barGamma_1^{T_0}$ and $\barGamma_2^{T_0}$ are Lipschitz continuous and bounded holds with probability one. Denote their sub-sequential limits by
 \[
 \Gamma_1(t) = \Xt_t\,, \qquad \Gamma_2(t) = c_t
 \]
	where the convergence is uniform over $[0,T]$.
	
Limits of the remaining components of $\Gamma$ are defined with respect to the weak topology in $L_2([0,T]; \Re^{d\times d})$.  Because 
$\{\barGamma_3^{T_0}:T_0\geq 0\}$ and $\{\barGamma_4^{T_0}: T_0\geq 0\}$ are uniformly bounded, they are   weakly relatively sequentially compact in $L_2([0,T]; \Re^{d\times d})$~\cite[Theorem 1.1.2]{eva90}. Their weak sub-sequential limits $\Gamma_3$ and $\Gamma_4$ are denoted by $\{\clA_t, \clH_t: 0\leq t \leq T\}$. That is, there exists $T_k\to \infty$ such that
\[
\barGamma_3^{T_k} \to \clA  \text{ weakly in }  L_2([0,T]; \Re^{d\times d}) \,, \qquad  \barGamma_4^{T_k}  \to \clH \text{ weakly in }  L_2([0,T]; \Re^{d\times d})\,, \qquad k\to \infty
\]

Based on \Prop{t:Gamma12}, and a separate analysis of the \emph{fast time scale} recursion for $\{\haA_n\}$ we obtain the following properties for any sub-sequential limit $\Gamma$ of $\{\barGamma^{T_0}:  T_0 \geq 0\}$:
\begin{proposition}
\label{t:sa-sub-seq}
Under Assumptions (A1)-(A2) and \eqref{e:AN}, for each $t\in [0,T]$,
\begin{romannum}
\item $c_t\eqdef \Gamma_2(t)   = \barf(\Xt_t)$.
\item $\clA_t \eqdef \Gamma_3(t) \in \clA(\Xt_t)$.
\item $\clH_t \eqdef \Gamma_4(t) \in \Re^{d\times d}$ is positive semi-definite.
\item There exists a constant $b_V>0$ such that, for a.e. $t\in [0, T]$,
\begin{subequations}
		\begin{align}
			\ddt c_t &= -\clH_t c_t 
\label{e:ode-c-exp-pos} 
			\\
			\ddt V(\Xt_t) &\leq -  U(\Xt_t) 
\label{e:ode-c-lya-pos} 
\end{align}
\label{e:ode-c-pos}
\end{subequations}%
with 			$V(\Xt_t) = \half \|\barf(\Xt_t) \|^2 $ and $U(\Xt_t) =  b_V\|\clA_t^\transpose c_t\|^2 $.
\qed
\end{romannum}%
\end{proposition}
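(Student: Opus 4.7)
The plan is to verify (i)--(iv) in turn by passing to the limit along the chosen subsequence $T_k\to\infty$, exploiting uniform convergence $\barGamma_1^{T_k}\to\Xt$, $\barGamma_2^{T_k}\to c$, together with weak $L^2$ convergence $\barGamma_3^{T_k}\to\clA$ and $\barGamma_4^{T_k}\to\clH$. Throughout, Assumption A2 together with the standing boundedness of $\{\theta_n\}$ furnishes a global Lipschitz constant $b_L$ for $\barf$ on the trajectory and a uniform bound $b_A$ on $\|\haA_{n+1}\|$; these are the only estimates required beyond \Prop{t:Gamma12}.

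Parts (i) and (iii) are immediate. For (i), the uniform convergence $\barxt_{T_k+t}\to\Xt_t$ combined with continuity of $\barf$ yields $\barc_{T_k+t}=\barf(\barxt_{T_k+t})\to\barf(\Xt_t)$ uniformly in $t\in[0,T]$, so $c_t=\barf(\Xt_t)$. For (iii), each $\barGamma_4^{T_0}(t)=\haA_{n+1}[\epsy I+\haA_{n+1}^\transpose\haA_{n+1}]^{-1}\haA_{n+1}^\transpose$ is manifestly positive semi-definite of the form $BMB^\transpose$ with $M\succ 0$; testing against any non-negative $\phi\in L^2([0,T])$ and vector $v\in\Re^d$ and passing to the weak limit gives $\int_0^T\phi(t)\,v^\transpose\clH_tv\,dt\geq 0$, so $v^\transpose\clH_tv\geq 0$ a.e.\ in $t$ for each $v$; separability of $\Re^d$ upgrades this to $\clH_t\succeq 0$ a.e.

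For (iv), begin from \Prop{t:Gamma12}(ii) and split
\[
\int_0^t\barGamma_4^{T_k}(s)\barGamma_2^{T_k}(s)\,ds \;=\; \int_0^t\barGamma_4^{T_k}(s)c_s\,ds \;+\; \int_0^t\barGamma_4^{T_k}(s)\big(\barGamma_2^{T_k}(s)-c_s\big)ds.
\]
The first term converges to $\int_0^t\clH_sc_s\,ds$ by weak convergence tested against $c_s\ind_{[0,t]}$, and the second vanishes because $\barGamma_4^{T_k}$ is uniformly bounded while $\barGamma_2^{T_k}\to c$ uniformly. Thus $c_t=c_0-\int_0^t\clH_sc_s\,ds$, yielding $\ddt c_t=-\clH_tc_t$ a.e. For the Lyapunov bound, the elementary identity $A[\epsy I+A^\transpose A]^{-1}A^\transpose\succeq(\epsy+\|A\|^2)^{-1}AA^\transpose$ applied to $A=\haA_{n+1}$ gives $\barGamma_4^{T_k}(s)\succeq b_V\,\barGamma_3^{T_k}(s)\barGamma_3^{T_k}(s)^\transpose$ with $b_V=1/(\epsy+b_A^2)$. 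Sandwiching $c_s$ and integrating on any $[a,b]\subset[0,T]$,
\[
\int_a^b c_s^\transpose\barGamma_4^{T_k}(s)c_s\,ds \;\geq\; b_V\int_a^b\|\barGamma_3^{T_k}(s)^\transpose c_s\|^2\,ds.
\]
The left side converges to $\int_a^b c_s^\transpose\clH_sc_s\,ds$; the right side has $\liminf$ at least $b_V\int_a^b\|\clA_s^\transpose c_s\|^2\,ds$ by weak lower semicontinuity of the $L^2$ norm, applied to the bounded linear map $A\mapsto A^\transpose c$. Since this integrated inequality holds on every subinterval, Lebesgue differentiation gives the pointwise bound $c_t^\transpose\clH_tc_t\geq b_V\|\clA_t^\transpose c_t\|^2$ a.e., and this combined with $\ddt V(\Xt_t)=-c_t^\transpose\clH_tc_t$ establishes \eqref{e:ode-c-lya-pos}.

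The main obstacle is (ii), the generalized subgradient identification. The starting point is the discrete inequality \eqref{e:linear-sub-mean} (valid here because the positivity assumption \eqref{e:AN} activates the chain rule \eqref{e:subg-s-f}): translated to continuous time, for each fixed $v\in\Re^d$ with $\|v\|\leq 1$ and $s\in(0,1]$,
\[
\barf(\barxt_{T_k+t}+sv)\;\geq\;\barf(\barxt_{T_k+t})+s\,\barclA_{T_k+t}v+\so(1),\qquad T_k\to\infty.
\]
Integrate against any non-negative $\phi\in L^2([0,T])$. Uniform convergence of $\barxt_{T_k+t}$ and continuity of $\barf$ handle the non-linear terms, while weak convergence of $\barclA_{T_k+t}$ handles the linear term, producing
\[
\int_0^T\phi(t)\big[\barf(\Xt_t+sv)-\barf(\Xt_t)\big]dt \;\geq\; s\int_0^T\phi(t)\,\clA_tv\,dt.
\]
Dividing by $s$ and letting $s\downarrow 0$ (dominated convergence applies because the integrand is bounded by $2b_L\|v\|$) gives $\int_0^T\phi(t)\barf'(\Xt_t;v)\,dt\geq\int_0^T\phi(t)\clA_tv\,dt$. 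Arbitrariness of $\phi\geq 0$ yields $\clA_tv\leq\barf'(\Xt_t;v)$ a.e.\ in $t$ for each fixed $v$; ranging over a countable dense set of $v$'s and invoking the Lipschitz continuity of both sides in $v$ extends the bound to all $v$ on a common set of full measure, placing $\clA_t\in\clA(\Xt_t)$. The delicate point here is that the only link between the weak limit $\clA$ and the nonlinear object $\barf$ is the one-sided bound inherited from \eqref{e:linear-sub-mean}; relaxing the non-negativity hypothesis on $\zeta$ (deferred to \Section{s:general-zeta}) is precisely where this step becomes substantially harder.
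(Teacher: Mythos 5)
Your proposal is correct, and parts (i), (iii), and (iv) follow essentially the paper's route: (i) by uniform convergence plus continuity of $\barf$, (iii) by preservation of positive semi-definiteness under weak limits, and (iv) by combining \Prop{t:Gamma12}(ii)--(iii) with the uniform bound $-[\epsy I + \haA_n^\transpose\haA_n]^{-1}\leq -(\epsy+b_\lambda)^{-1}I$ and weak lower semicontinuity of the $L^2$ norm applied to $\barGamma_3^{T_k}(\cdot)^\transpose c_\cdot$ (the paper cites \cite[Theorem 2.2.1]{eva90} for exactly this step; your extra Lebesgue-differentiation step merely upgrades the paper's integrated inequality to the pointwise a.e.\ form stated in the proposition). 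The genuine divergence is in part (ii). The paper invokes the Banach--Saks theorem to extract a subsequence whose Ces\`aro averages converge a.e., then combines \Prop{t:At-bound}(iii) (distance of $\haA_{n_k}$ to $\clA(\theta^\circ)$ tends to zero along convergent subsequences of $\theta_n$) with convexity and closedness of $\clA(\theta)$ to place the limit of the averages in $\clA(\Xt_t)$. You instead pass the one-sided averaged inequality directly through the weak limit by testing against non-negative $\phi\in L^2$, divide by $s$, send $s\downarrow 0$, and conclude $\clA_t v\leq \barf'(\Xt_t;v)$ for a countable dense set of $v$ and hence all $v$. This is cleaner in that it exploits the fact that the defining property \eqref{e:der_barf1} of $\clA(\theta)$ is itself a family of linear inequalities, which are stable under weak limits against non-negative test functions; the paper's route has the advantage of reusing the already-established set-valued convergence \Prop{t:At-bound}(iii) and of generalizing more directly to the signed-decomposition setting of \Section{s:general-zeta}. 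One small correction: the inequality you quote as \eqref{e:linear-sub-mean} is the linear-parameterization version; for nonlinear $Q^\theta$ under \eqref{e:AN} the correct input is \Prop{t:At-bound}(i), which carries an extra $-b_T\|sv\|^2\unit = O(s^2)$ term on the right-hand side. Since you divide by $s$ and let $s\downarrow 0$, this term is harmless, but the displayed inequality as written is exact only when $Q^\theta$ is linear.
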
%

An alert reader will notice that we have \textit{not} obtained the desired ODE limit,  since \eqref{e:ode-c-exp-pos} may differ from the ODE solution given in 
\Lemma{t:odestable}~(i).   In particular, we do not know if $\clA_t $ coincides with $  A(\Xt_t)$  (where $A(\theta)$ is defined in \eqref{e:barfder} using a particular $Q^\theta$-greedy policy), and we do not know if $\clH_t $ coincides with
\[
A(\Xt_t)[\epsy I + A(\Xt_t)^\transpose A(\Xt_t)]^{-1}A(\Xt_t)^\transpose
\]
We preserve the essential drift condition \eqref{e:ode-c-lya-pos},  which leads to a simple proof of the main result:

\begin{proof}[Proof of \Theorem{t:ZapQ-main}]   \Prop{t:sa-sub-seq}~(i) and (ii) justify the assertion that $U(\Xt_t) \eqdef  b_V\|\clA_t^\transpose c_t\|^2 $
is in fact a function of $\Xt_t$.  Under (A2) we see that Assumption~(i) of  \Lemma{t:odeFoster} holds,  and  \eqref{e:ode-c-lya-pos}  implies Assumption~(ii) of the lemma.   

For given $\eta>0$, we may choose $T\geq V(\Xt_0) B(\eta)$, so that $V(\Xt_T) \le \eta$ for any sub-sequential limit. It then follows that $\limsup_{n\to\infty}V(\theta_n)\leq \eta$. Since  $\eta>0$ is arbitrary,
it follows that $V(\Xt_T)\equiv 0$;  that is,  $\lim_{t\to\infty}  \barf(\theta_n) =0$ as claimed. 
\end{proof}

\subsubsection{Analysis of $\{\haA_n\}$ over the fast time scale}
\label{s:fast-time}

The goal in this subsection is to show that $\haA_n$ is close to the set $\clA(\theta_n)$ with $n$ sufficiently large. An explicit representation of $\clA(\theta)$ is given in the following: denote, for any $\theta\in\Re^d$ and any (possibly randomized) policy $\phi$, the random $d\times d$ matrix:
\[
A_{n+1}(\theta, \phi)   = \bigl[ \disc\partial_\theta  Q^\theta(X_{n+1},  \phi(X_{n+1}))- \partial_\theta  Q^{\theta}(X_n,U_n)  \bigr]   \zeta(\theta,X_n, U_n)
\]
If $\phi$ is $Q^\theta$-greedy, meaning 
\[
Q^\theta(x,  \phi (x))    = \uQ^\theta(x)\,,   \qquad x\in\state  \,,
\]
then a generalized subgradient of the function $f$   in \eqref{e:safn-f} is given by
\[
A_{n+1}(\theta, \phi)  + \clD(\theta, \Phi_{n+1})\partial_\theta   \zeta(\theta,X_n, U_n) 
\]

\begin{lemma}
\label{t:chain_rule+}
	If (A1)-(A2) hold, and if $\zeta$ is non-negative, then the set $\clA(\theta)$ defined in \eqref{e:der_barf1} admits the representation,
\[
	\clA(\theta) = \bigl\{ \Expect_{\varpi}[ A_{n+1}(\theta, \tilde\phi_{n+1}^\theta) + \clD(\theta, \Phi_{n+1})\partial_\theta   \zeta(\theta,X_n, U_n)]   
	\	  :   \
	\text{ $ \tilde\phi_{n+1}^\theta $ is $Q^\theta$-greedy} \bigr\}
\]
	where  $ \tilde\phi_{n+1}^\theta $  ranges over all $Q^\theta$-greedy randomized policies. \qed
\end{lemma}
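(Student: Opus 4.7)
The plan is to establish the representation by computing the directional derivative of $\barf$ explicitly and then showing that the claimed set equals $\clA(\theta)$ through two inclusions.

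First, I would compute the directional derivative of the temporal difference $\clD(\theta,z)$. Since $\uQ^\theta(x') = \max_{u'} Q^\theta(x',u')$ is a pointwise maximum of finitely many smooth functions (A2), the standard chain rule for max-functions yields, for each $v\in\Re^d$,
\begin{equation*}
\clD'(\theta,z;v) = \disc \max_{u^* \in \argmax_{u'} Q^\theta(x',u')} \partial_\theta Q^\theta(x',u^*) v - \partial_\theta Q^\theta(x,u) v.
\end{equation*}
Under the non-negativity assumption \eqref{e:AN}, the representation \eqref{e:subg-s-f} applies and gives a product-rule formula $f'(\theta,z;v) = \zeta(\theta,z)\clD'(\theta,z;v) + \clD(\theta,z)\partial_\theta\zeta(\theta,z) v$. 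By (A2) $f$ is Lipschitz in $\theta$ uniformly over $z\in\zstate$ (which is finite), so by dominated convergence $\barf'(\theta;v) = \Expect_\varpi[f'(\theta,\Phi_{n+1};v)]$.

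Next, for the inclusion ``$\supseteq$'', let $\tilde\phi^\theta_{n+1}$ be any $Q^\theta$-greedy randomized policy and set $A = \Expect_\varpi[A_{n+1}(\theta,\tilde\phi^\theta_{n+1}) + \clD(\theta,\Phi_{n+1})\partial_\theta\zeta(\theta,X_n,U_n)]$. Because $\tilde\phi^\theta_{n+1}(X_{n+1})$ is supported on $\argmax_u Q^\theta(X_{n+1},u)$ a.s., we have the pointwise bound
\begin{equation*}
\bigl[\disc \partial_\theta Q^\theta(X_{n+1},\tilde\phi^\theta_{n+1}(X_{n+1})) - \partial_\theta Q^\theta(X_n,U_n)\bigr] v \;\le\; \clD'(\theta,\Phi_{n+1};v),
\end{equation*}
and multiplying by $\zeta(\theta,X_n,U_n)\ge 0$ preserves the inequality. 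Taking expectation yields $Av \le \barf'(\theta;v)$ for all $v$, so $A\in\clA(\theta)$.

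The harder inclusion ``$\subseteq$'' requires a measurable selection argument. Given $A\in\clA(\theta)$, I would first show that the image set
\begin{equation*}
\clC(\theta) \eqdef \bigl\{\Expect_\varpi[A_{n+1}(\theta,\tilde\phi) + \clD\,\partial_\theta\zeta] : \tilde\phi \text{ is } Q^\theta\text{-greedy randomized}\bigr\}
\end{equation*}
is a non-empty, closed, convex subset of $\Re^{d\times d}$; convexity follows because mixtures of $Q^\theta$-greedy randomized policies remain $Q^\theta$-greedy, and closedness follows from compactness of the policy simplex over the finite action set. If $A\notin\clC(\theta)$, a separating hyperplane produces a direction $v_0$ and a scalar $\alpha$ with $A v_0 > \alpha \geq \sup_{A'\in\clC(\theta)} A' v_0$; the latter supremum equals $\barf'(\theta;v_0)$ by selecting, at each $X_{n+1}$, a maximizer of $\partial_\theta Q^\theta(X_{n+1},u^*) v_0$ over $u^* \in \argmax_u Q^\theta(X_{n+1},u)$. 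Such a maximizing selection exists pointwise because the $\argmax$ set is finite, and measurability is automatic on the finite state space. This contradicts $A\in\clA(\theta)$, proving the inclusion.

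The main obstacle is this last separation/selection step: one must verify that the support function of $\clC(\theta)$ coincides with $\barf'(\theta;\cdot)$, which amounts to interchanging an expectation with an inner maximization over a set-valued map. Finiteness of $\state$ and $\action$ trivializes measurability concerns, but care is still needed to express the resulting ``best response'' in direction $v_0$ as a genuine randomized $Q^\theta$-greedy policy—ties among maximizers of $\partial_\theta Q^\theta(x',u^*)v_0$ over $u^* \in \argmax_u Q^\theta(x',u)$ may have to be broken via a randomization, which is exactly what the randomized policies in the statement provide.
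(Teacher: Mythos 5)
The paper states \Lemma{t:chain_rule+} with an immediate \qed and supplies no proof, so there is nothing to compare your argument against; judged on its own, your proposal establishes only one of the two inclusions. The direction you prove first --- that every matrix $\Expect_{\varpi}[A_{n+1}(\theta,\tilde\phi)+\clD\,\partial_\theta\zeta]$ with $\tilde\phi$ a $Q^\theta$-greedy randomized policy satisfies $Av\le\barf'(\theta;v)$ componentwise for all $v$, hence lies in $\clA(\theta)$ --- is correct, and it is the only part of the lemma the rest of the paper actually relies on (\Prop{t:dist-A-n} and \Prop{t:sa-sub-seq} work directly from the definition \eqref{e:der_barf1}).

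The converse inclusion is where your proof breaks, and I do not believe it can be repaired, because the asserted equality appears to fail at kinks of $\barf$. First, the separation step does not parse as written: $Av_0$ is a vector, so ``$Av_0>\alpha\ge\sup_{A'}A'v_0$'' is not a scalar separation; a separating functional on $\Re^{d\times d}$ has the general form $A\mapsto\sum_i\langle A_i,v_i\rangle$ with a possibly different direction $v_i$ per row. This is not cosmetic. The definition \eqref{e:der_barf1} constrains $A$ only through the decoupled row-wise inequalities $\langle A_i,v\rangle\le[\barf'(\theta;v)]_i$, so $\clA(\theta)$ is a \emph{product} over rows: writing $[\barf'(\theta;v)]_i=\disc\sum_{x'}w_{i,x'}\max_{u^*\in\argmax_u Q^\theta(x',u)}\partial_\theta Q^\theta(x',u^*)v+\ell_i v$ with $w_{i,x'}=\sum_{x,u}\varpi(x,u)P_u(x,x')\zeta_i(\theta,x,u)\ge 0$, support-function duality places row $i$ in the set $\{\disc\sum_{x'}w_{i,x'}g_{x',i}+\ell_i\}$ with $g_{x',i}\in\mathrm{conv}\{\partial_\theta Q^\theta(x',u^*):u^*\in\argmax_u Q^\theta(x',u)\}$ chosen \emph{independently for each row} $i$. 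The right-hand side of the lemma instead couples all rows through a single randomized greedy policy, i.e.\ a common selection $g_{x'}$. Whenever some reachable $x'$ has two tied greedy actions with distinct gradients and at least two rows have $w_{i,x'}>0$ (e.g.\ the tabular case with a tie in $Q^\theta(x',\cdot)$), the product set strictly contains the common-policy set, so the inclusion $\clA(\theta)\subseteq\{\cdots\}$ fails and no separation argument can establish it: the support functions of the two convex sets agree on the row-wise functionals $A\mapsto\langle A_i,v\rangle$ that you consider, but not on general linear functionals of the matrix. The safe statement --- and all that is needed downstream --- is the one-sided inclusion you proved.
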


A key implication of the non-negativity assumption \eqref{e:AN} is the following:

\begin{lemma}
\label{t:taylor-f}
	Under Assumptions (A1)-(A2) and \eqref{e:AN}, there exists  $b_T< \infty$ such that, for all $n\geq 1$ and all vectors $v\in \Re^d$, $\|v\|\leq 1$, 
\begin{equation}
\label{e:convex-n}
\begin{aligned}
	f(\theta_n+  v, \Phi_{n+1})  \geq f(\theta_n, \Phi_{n+1}) + A_{n+1}v - b_T \|v\|^2 \unit
\end{aligned}
\end{equation}
	where the inequality is component-wise, $A_{n+1}$ is defined in \eqref{e:zap-A}, and $\unit$ denotes the vector of all ones. In particular, when $Q^\theta = \psi^\transpose\theta$, we have $b_T=0$:
\begin{equation}
	f(\theta_n+ v, \Phi_{n+1}) \geq f(\theta_n, \Phi_{n+1}) + A_{n+1} v 
\label{e:convex-l}
\end{equation}
	
\end{lemma}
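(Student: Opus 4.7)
The plan is to Taylor-expand each of the two factors in $f(\theta,\Phi_{n+1}) = \clD(\theta,\Phi_{n+1})\,\zeta(\theta,X_n,U_n)$ about $\theta_n$, and to exploit the ``max of smooths'' structure of $\uQ^\theta$ together with the non-negativity hypothesis \eqref{e:AN} to upgrade a scalar one-sided bound on $\clD$ into a component-wise vector inequality on $f$. All quadratic remainders are uniformly controlled since $\{\theta_n\}$ is bounded and $Q,\zeta$ are $C^2$ by (A2), so that their Hessians are uniformly bounded on the compact set traced by $\{\theta_n+v : n\ge 0,\,\|v\|\le 1\}$.

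First I would obtain a scalar lower bound for $\clD(\theta_n+v,\Phi_{n+1})$. Taylor's theorem for $Q^{\cdot}(X_n,U_n)$ gives an upper Taylor bound on $Q^{\theta_n+v}(X_n,U_n)$ with remainder bounded by $C\|v\|^2$. For the max term, the inequality $\uQ^{\theta_n+v}(X_{n+1})\ge Q^{\theta_n+v}(X_{n+1},\phi^{\theta_n}(X_{n+1}))$, combined with a Taylor expansion of $Q^{\cdot}(X_{n+1},\phi^{\theta_n}(X_{n+1}))$ and the fact that $\phi^{\theta_n}$ achieves $\uQ^{\theta_n}(X_{n+1})$ (by \eqref{e:phi}), delivers the matching lower Taylor bound on $\uQ^{\theta_n+v}(X_{n+1})$. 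Subtracting, I get
\[
\clD(\theta_n+v,\Phi_{n+1}) \;\ge\; \clD(\theta_n,\Phi_{n+1}) + B_{n+1}\,v - C'\|v\|^2,
\]
where $B_{n+1} \eqdef \disc\,\partial_\theta Q^{\theta_n}(X_{n+1},\phi^{\theta_n}(X_{n+1})) - \partial_\theta Q^{\theta_n}(X_n,U_n)$ is the row vector appearing inside $A_{n+1}$ in \eqref{e:zap-A}.

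Next I would multiply this scalar inequality by the column vector $\zeta(\theta_n+v,X_n,U_n)$. Assumption \eqref{e:AN} gives $\zeta(\theta_n+v,X_n,U_n)\ge 0$ component-wise, so the sign of the inequality is preserved, yielding
\[
f(\theta_n+v,\Phi_{n+1}) \;\ge\; \bigl[\clD(\theta_n,\Phi_{n+1}) + B_{n+1}v - C'\|v\|^2\bigr]\,\zeta(\theta_n+v,X_n,U_n).
\]
Now I would expand $\zeta(\theta_n+v,X_n,U_n) = \elig_n + \partial_\theta\elig_n\,v + O(\|v\|^2)$ by a $C^2$ Taylor bound and distribute. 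The zeroth-order term is $\clD(\theta_n,\Phi_{n+1})\elig_n = f(\theta_n,\Phi_{n+1})$; the first-order term is exactly
\[
\bigl(\elig_n B_{n+1} + \clD(\theta_n,\Phi_{n+1})\,\partial_\theta\elig_n\bigr)v \;=\; A_{n+1}v,
\]
by the definition \eqref{e:zap-A}; and every remaining contribution is dominated component-wise by $b_T\|v\|^2\unit$ for a single constant $b_T$, using the uniform bounds on $\clD,\zeta,\partial_\theta\zeta,B_{n+1}$ over the compact set above. For the linear case $Q^\theta=\psi^\transpose\theta$, the Hessian of $Q$ vanishes so the Taylor bounds on $\uQ$ and on $Q(X_n,U_n)$ are exact (convexity of the max gives the one-sided bound without remainder), and $\zeta(\theta,x,u)=\psi(x,u)$ is $\theta$-independent, so every $O(\|v\|^2)$ correction is identically zero and \eqref{e:convex-l} follows with $b_T=0$.

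The main obstacle is the interplay between the \emph{direction} of the scalar inequality on $\clD$ and the \emph{componentwise} structure of the vector inequality on $f$: this is precisely the step where the non-negativity hypothesis \eqref{e:AN} is indispensable, since any coordinate on which $\zeta$ is negative would flip the inequality on that coordinate. The other delicate point is to collect \emph{all} $O(\|v\|^2)$ contributions (from the Taylor remainder in $\clD$, the Taylor remainder in $\zeta$, and the cross term $(B_{n+1}v)\cdot \partial_\theta\elig_n v$) into a single constant $b_T$ independent of $n$; this relies on the boundedness of $\{\theta_n\}$ together with the uniform Lipschitz/$C^2$ bounds from (A2) over the compact set $\{\theta_n+v:\|v\|\le1\}$.
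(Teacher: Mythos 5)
Your proof is correct and follows essentially the same route as the paper's: Taylor-expand the smooth pieces, use the max structure (evaluated at the $Q^{\theta_n}$-greedy action) to get the one-sided scalar bound on $\clD$, and invoke the non-negativity \eqref{e:AN} to turn it into a component-wise bound on $f$, collecting all quadratic remainders uniformly via (A2) and boundedness of $\{\theta_n\}$. The only difference is bookkeeping — you multiply the $\clD$ inequality by $\zeta(\theta_n+v,\cdot)\ge 0$ and then Taylor-expand $\zeta$, whereas the paper splits $f(\theta_n+v)-f(\theta_n)$ into three increments and applies $\elig_n(\theta_n)\ge 0$ to the $\clD$-increment; both are valid under \eqref{e:AN}.
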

\begin{proof}
The proof is based on the Taylor series expansion. With $z\eqdef (x', x,u', u)$, define $g:\Re^d\times \zstate  \times \action \to \Re$ by
\begin{equation}
	g(\theta, z, u^\circ) \eqdef r(x, u) + \disc Q^{\theta}(x', u^\circ) - Q^{\theta}(x,u)
\label{e:td-nomax}
\end{equation}	
	 By (A2), $g$ admits the Taylor series expansion at each $\|\theta\| \leq b_\theta$:
\[
	g(\theta+v, z, u^\circ) = g(\theta, z, u^\circ) + \partial_\theta  g(\theta, z, u^\circ)  v + O(\|v\|^2)
\]
	Recall that $\clD(\theta, z) \eqdef g(\theta, z, \phi^\theta(x')) =  \max_{u^\circ}g(\theta, z, u^\circ)$ and the state-input space is finite,
\begin{equation}
\begin{aligned}
	\clD(\theta_n+v, \Phi_{n+1}) &=  \max_{u^\circ}g(\theta_n+v, \Phi_{n+1}, u^\circ) \\
	& =    \max_{u^\circ} g(\theta_n, \Phi_{n+1}, u^\circ) + \partial_\theta  g(\theta_n, \Phi_{n+1}, u^\circ)  v + O(\|v\|^2) \\
	&  \geq   \clD(\theta_n, \Phi_{n+1}) +  \partial_\theta  g(\theta_n, \Phi_{n+1}, \phi^{\theta_n}(X_{n+1}))   v + O(\|v\|^2) 
\end{aligned}
\label{e:taylor-g}				
\end{equation}
	Denote  $\elig_n(\theta) \eqdef \zeta(\theta, X_n, U_n)$. Another Taylor series expansion of $\zeta$ at $\theta_n$ gives 
\begin{equation}
	\elig_n(\theta_n + v) = \elig_n(\theta_n) + \partial_\theta  \elig_n(\theta_n)  v + O(\|v\|^2)
\label{e:taylor-zeta}
\end{equation}
	We next recall that $f(\theta_n, \Phi_{n+1}) = \elig_n(\theta_n) \clD(\theta_n, \Phi_{n+1})$,
\[
\begin{aligned}
	f(\theta_n+v, \Phi_{n+1}) - 	f(\theta_n, \Phi_{n+1}) 
	= &  \elig_n(\theta_n)\bigl\{\clD(\theta_n+v, \Phi_{n+1}) - \clD(\theta_n, \Phi_{n+1})\bigr\}\\
	&+ \bigl\{\elig_n(\theta_n+v)-  \elig_n(\theta_n)\bigr\}\clD(\theta_n, \Phi_{n+1}) \\
	& +\bigl\{ \elig_n(\theta_n+v) -  \elig_n(\theta_n)\bigr\} \bigl\{\clD(\theta_n+v, \Phi_{n+1}) - \clD(\theta_n, \Phi_{n+1})\bigr\} 
\end{aligned}
\]
	By \eqref{e:taylor-g} and the non-negativity assumption \eqref{e:AN},
\[
	\elig_n(\theta_n)\bigl\{\clD(\theta_n+v, \Phi_{n+1}) - \clD(\theta_n, \Phi_{n+1})\bigr\} \geq \elig_n(\theta_n) \partial_\theta  g(\theta_n, \Phi_{n+1}, \phi^{\theta_n}(X_{n+1}))   v + O(\|v\|^2)
\]
Similarly,  from \eqref{e:taylor-zeta},
\[
\begin{aligned}
	\bigl\{\elig_n(\theta_n+v)-  \elig_n(\theta_n)\bigr\}\clD(\theta_n, \Phi_{n+1})   = &\bigl\{\partial_\theta  \elig_n(\theta_n)  v + O(\|v\|^2)\bigr\} \clD(\theta_n, \Phi_{n+1})   \\
	\geq & \clD(\theta_n, \Phi_{n+1})\partial_\theta  \elig_n(\theta_n)  v + O(\|v\|^2)
\end{aligned}
\]
	By (A2) once more, both $\zeta$ and $\clD$ are  Lipschitz continuous in $\theta$,
\[
	\bigl\| \elig_n(\theta_n+v) -  \elig_n(\theta_n)\bigr\| \bigl|\clD(\theta_n+v, \Phi_{n+1}) - \clD(\theta_n, \Phi_{n+1})\bigr| = O(\|v\|^2) 
\]
	Consequently,
\[
\begin{aligned}
	f(\theta_n+v, \Phi_{n+1}) - 	&f(\theta_n, \Phi_{n+1}) \\
	 & \geq \bigl\{\elig_n(\theta_n) \partial_\theta g(\theta_n, \Phi_{n+1}, \phi^{\theta_n}(X_{n+1}))  + \clD(\theta_n, \Phi_{n+1})\partial_\theta  \elig_n(\theta_n)\bigr\} v  + O(\|v\|^2)
\end{aligned}
\]
The proof is completed by realizing that $A_{n+1}$ defined in \eqref{e:zap-A} can be expressed 
\[
	A_{n+1}=\elig_n(\theta_n) \partial_\theta g(\theta_n, \Phi_{n+1}, \phi^{\theta_n}(X_{n+1}))  + \clD(\theta_n, \Phi_{n+1})\partial_\theta  \elig_n(\theta_n)
\]
\end{proof}

Define the \emph{fast time scale}, over which the matrix gain sequence $\{\haA_n\}$ is updated,
\begin{equation}
t_n = \sum_{i=1}^n \stepf_i= \sum_{i=1}^n \frac{1}{i^\rho}\,, \qquad n \geq 1\,, \qquad t_0 = 0\,, \qquad \rho\in (0.5, 1) 
\label{e:fast-time}
\end{equation}
Define the time process $\{\barclA_t: t\geq 0 \}$ with $\barclA_{t_n} = \haA_n$ for those values $t_n$, with the definition extended to $\posRe$ via linear interpolation. Note that this definition of $\{\barclA_t: t\geq 0 \}$ is used only in this subsection to analyze $\{\haA_n\}$. For each $n\geq 1$, define the associated time block: $[t_{m(n)}, t_n)$ where $m(n) = \min\{j: t_j + \ln(n) \geq t_n \}$. Some properties of this fast time scale setting are collected in the following:
\begin{lemma}
\label{t:slow-time-block}
The follow hold:
\begin{romannum}
\item $\ln(n) - 1 <  t_n - t_{m(n)} \leq \ln(n)$.
\item There exists $N_s \geq 1$ such that for $n \geq N_s$, $m(n)+1 \geq \rho^{1/(1-\rho)} (n+1) $.
\item $\lim_{n\to\infty} \max_{m(n)\leq k \leq n}\|\theta_k - \theta_n\| = 0$.
\end{romannum}
\end{lemma}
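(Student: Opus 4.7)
The plan is to treat (i) and (ii) as essentially arithmetic consequences of the definition of $m(n)$ together with integral comparisons, and then to deduce (iii) by bridging the fast-time sum $\sum i^{-\rho}$ (used in defining $m(n)$) with the slow-time sum $\sum 1/i$ that governs the iterate increments.

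For (i), I would argue directly from $m(n) \eqdef \min\{j: t_j + \ln(n) \geq t_n\}$. The upper bound $t_n - t_{m(n)} \leq \ln(n)$ is the defining inequality, and the lower bound follows from minimality: $t_{m(n)-1} + \ln(n) < t_n$ together with the single-step estimate $t_{m(n)} - t_{m(n)-1} = \stepf_{m(n)} = m(n)^{-\rho} \leq 1$.

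For (ii), the plan is to apply the integral lower bound
\[
t_n - t_{m(n)} = \sum_{i=m(n)+1}^n i^{-\rho} \;\geq\; \int_{m(n)+1}^{n+1} x^{-\rho}\,dx \;=\; \frac{(n+1)^{1-\rho} - (m(n)+1)^{1-\rho}}{1-\rho},
\]
combine with the upper bound from (i), and rearrange to obtain $(m(n)+1)^{1-\rho} \geq (n+1)^{1-\rho} - (1-\rho)\ln(n)$. For $n$ large enough that $(n+1)^{1-\rho} \geq \ln(n)$, which holds eventually because $1-\rho > 0$, the right-hand side is at least $\rho\,(n+1)^{1-\rho}$; taking $(1-\rho)$th roots yields the claim.

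For (iii), the update $\theta_{k+1} - \theta_k = \alpha_{k+1} G_{k+1} f(\theta_k, \Phi_{k+1})$ together with boundedness of $\{\theta_n\}$ (assumed in \Theorem{t:ZapQ-main}) and of $\{\haA_n\}$ (whose recursion is a convex combination of the uniformly bounded matrices $A_{n+1}(\theta_n)$) yields a uniform bound $\|G_{k+1} f(\theta_k,\Phi_{k+1})\| \leq C$, via $\|G_n\| \leq \|\haA_n\|/\epsy$. Telescoping the slow-time increments gives $\max_{m(n) \leq k \leq n}\|\theta_k - \theta_n\| \leq C \sum_{i=m(n)+1}^n 1/i$. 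The bridge between time scales is the elementary inequality $1/i = i^{\rho-1} i^{-\rho} \leq (m(n)+1)^{\rho-1} i^{-\rho}$ for $i \geq m(n)+1$, which holds because $i^{\rho-1}$ is decreasing. Combined with (i) and (ii) this delivers $\sum_{i=m(n)+1}^n 1/i \leq (m(n)+1)^{\rho-1}(t_n - t_{m(n)}) \leq \rho^{-1}(n+1)^{\rho-1}\ln(n)$, which vanishes as $n \to \infty$ since $(n+1)^{\rho-1} \to 0$ polynomially while $\ln(n)$ grows only logarithmically.

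The main obstacle is conceptual rather than computational: the iterates $\theta_k$ evolve on the slow scale with gain $\alpha_k \sim 1/k$, whereas $m(n)$ is defined on the fast scale with gain $\stepf_k \sim 1/k^\rho$. Part (ii) is exploited precisely to ensure that $m(n)$ remains linearly comparable to $n$, so that the prefactor $(m(n)+1)^{\rho-1}$ decays polynomially and dominates the logarithmic blow-up of the fast-time sum bounded in (i). A subsidiary technical check is the uniform boundedness of $\{\haA_n\}$ and $\{G_n\}$, which I would handle separately using the convex-combination structure of the $\haA_n$ recursion and the Lipschitz bounds in (A2) restricted to the bounded orbit of $\{\theta_n\}$.
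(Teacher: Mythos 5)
Your proposal is correct. Parts (i) and (ii) follow the paper's argument essentially verbatim: (ii) is the same integral comparison $t_n-t_{m(n)}\ge (1-\rho)^{-1}[(n+1)^{1-\rho}-(m(n)+1)^{1-\rho}]$ combined with the upper bound $t_n-t_{m(n)}\le\ln(n)$ and the eventual inequality $(n+1)^{1-\rho}\ge\ln(n)$. For (iii), however, you take a genuinely different and cleaner route. The paper bounds $\|\theta_n-\theta_k\|\le b_f\ln(n/k)$ by an integral comparison on the slow-time sum, then controls $\ln(n/k)$ via $\ln((n+1)/(k+1))$, the inequality $\ln(1+x)\le x$, and a separate correction term for the mismatch between $\ln(n/k)$ and $\ln((n+1)/(k+1))$; this costs an extra additive $b_f/k$ term in the final estimate \eqref{e:theta-slow}. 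You instead compare the two time scales term by term, writing $1/i=i^{\rho-1}\,i^{-\rho}\le(m(n)+1)^{\rho-1}\,\stepf_i$ for $i\ge m(n)+1$, so that $\sum_{i=m(n)+1}^n 1/i\le(m(n)+1)^{\rho-1}(t_n-t_{m(n)})$, and then parts (i) and (ii) immediately give the bound $\rho^{-1}(n+1)^{\rho-1}\ln(n)\to 0$. Both arguments land on the same rate; yours avoids the logarithm manipulations and the bookkeeping around the shifted indices, at the cost of nothing. Your justification of the uniform bound on $\|G_{k+1}f(\theta_k,\Phi_{k+1})\|$ (via $\|[\epsy I+\haA_n^\transpose\haA_n]^{-1}\|\le 1/\epsy$ and the convex-combination structure of the $\haA_n$ recursion) is more explicit than the paper, which simply asserts the existence of $b_f$; this is a welcome addition rather than a deviation.
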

\begin{proof}
	(i) follows directly from the definition.
	
	By \eqref{e:fast-time},
\begin{equation}
\label{e:timeblock-ineq}
\begin{aligned}
	t_n - t_{m(n)} =\sum_{i=m(n)+1}^n \frac{1}{i^{\rho}} 
	& \geq \int_{m(n)+1}^{n+1} \frac{1}{\tau^{\rho}} \, d\tau  \\
	&= (1-\rho)^{-1} [ (n+1)^{1-\rho}- (m(n)+1)^{1-\rho}]
\end{aligned}
\end{equation}
	Since $\ln (n) \geq t_n - t_{m(n)} $, we have
\[
	(1-\rho) \ln (n) \geq (n+1)^{1-\rho} - (m(n)+1)^{1-\rho}  
\]
There exits $N_s\geq 1$ such that $ (n+1)^{1-\rho} \geq \ln (n)$ for $n\geq N_s$. Hence,
\[
	(1-\rho) (n+1)^{1-\rho} \geq (n+1)^{1-\rho} - (m(n)+1)^{1-\rho}  \,, \qquad n\geq N_s
\]
	which proves (ii).
	
	By \eqref{e:timeblock-ineq},
\[
\begin{aligned}
	(1-\rho)^{-1} [(n+1)^{1-\rho} - (k+1)^{1-\rho}] & \leq (1-\rho)^{-1} [(n+1)^{1-\rho} - (m(n)+1)^{1-\rho}] \\
	& \leq \ln (n)
\end{aligned}
\]
	Multiplying each side of above inequality by $(1-\rho)(k+1)^{\rho-1}$ gives
\[
	\Bigl(\frac{n+1}{k+1}\Bigr)^{1-\rho} -1  \leq (1-\rho)(k+1)^{\rho-1}\ln (n)  \leq (1-\rho)(m(n)+1)^{\rho-1}\ln (n) 
\]
	By the inequality $\ln(1+x) \leq x$ for $x>-1$, 
\[
	(1-\rho) \ln\Bigl( \frac{n+1}{k+1}\Bigr) \leq \ln\bigl(1+ (1-\rho) (m(n)+1)^{\rho-1}\ln (n)\bigr) \leq (1-\rho) (m(n)+1)^{\rho-1}\ln (n)
\]
	Given $m(n)+1 \geq \rho^{1/(1-\rho)} (n+1) $ in (ii),
\begin{equation}
\label{e:slow-time-ratio}
	\ln \Bigl(\frac{n+1}{k+1}\Bigr) \leq \rho^{-1}\ln (n) (n+1)^{\rho-1}
\end{equation}
The parameter vector $\theta_n$ updated by  \eqref{e:zap-theta} can be expressed
\[
	\theta_n = \theta_k  + \sum_{i=k+1}^n \alpha_i G_i f(\theta_{i-1}, \Phi_i) \,,  \qquad m(n)\leq k < n
\]
	We can find a constant $b_f< \infty$ such that $\sup_n\allowbreak \|G_{n+1}  f(\theta_{n}, \Phi_{n+1})\|\leq b_f$ for almost every $\omega \in \Omega$. With $\alpha_i \equiv 1/i$,
\[
	\|\theta_n - \theta_k\| \leq b_f \sum_{i=k+1}^{n} \alpha_i \leq b_f \int_{k}^{n} \frac{1}{\tau}\, d\tau \leq  b_f\ln\Bigl(\frac{n}{k}\Bigr)
\]
	By \eqref{e:slow-time-ratio}, for $n \geq N_s$.
\begin{equation}
\label{e:theta-slow}
\begin{aligned}
	\| \theta_n - \theta_k \| & \leq b_f	\bigl| \ln\Bigl(\frac{n}{k}\Bigr) -\ln\Bigl(\frac{n+1}{k+1}\Bigr)   \bigr|  + b_f \rho^{-1}\ln (n) (n+1)^{\rho-1} \\
	& \leq b_f\bigl| \ln(1-\frac{1}{n+1}) + \ln(1+\frac{1}{k}) \bigr|  + b_f \rho^{-1}\ln (n) (n+1)^{\rho-1} \\
	& \leq b_f \frac{1}{k} + b_f \rho^{-1}\ln (n) (n+1)^{\rho-1} \\
	& \leq  b_f \frac{1}{\rho^{1/(1-\rho)} (n+1)-1} +  b_f \rho^{-1}\ln (n) (n+1)^{\rho-1}
\end{aligned}
\end{equation}
	where the last inequality holds given $k\geq m(n) \geq \rho^{1/(1-\rho)} (n+1)-1$. Therefore, $\max_{m(n)\leq k \leq n}\| \theta_k - \theta_n \|\to 0$ as $n\to \infty$.
\end{proof}


\begin{proposition}
\label{t:sum-int}
	Under Assumptions (A1)-(A2) and \eqref{e:AN},   the following hold for all $v\in \Re^d, \|v\|\leq 1$, and all $k\in\intgr$ between $m(n)$ and $n$:
\begin{romannum}
\item 
	\begin{equation}
	\label{e:barf-hat-inequality-l}
		\sum_{i=k+1}^n \stepf_i [f(\theta_{i-1}+ v, \Phi_i) - f(\theta_{i-1} , \Phi_i) + b_T\|v\|^2\unit] \geq \haA_n v - \haA_k v + \sum_{i=k+1}^n \stepf_i \haA_{i-1}  v  
	\end{equation}
\item For any $t\in [t_{m(n)}, t_n)$, 
	\begin{equation}
	\label{e:int-inequality-st}
		\barclA_{t_n}  v - \barclA_{t} v + \int_{t}^{t_n} \barclA_{\tau} v \, d\tau   \leq (t_n - t) [\barf(\theta_{n} + v) - \barf(\theta_{n})+ b_T\|v\|^2\unit]+ o(1)\,, \qquad n\to \infty
	\end{equation} 
	where $o(1)\to 0$ as $n\to\infty$, uniformly in $v$. 
\end{romannum}

\end{proposition}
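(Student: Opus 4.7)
}

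For part (i), the plan is to apply the subgradient inequality \eqref{e:convex-n} of \Lemma{t:taylor-f} at each index $i$ with the fixed direction $v$, multiply by $\stepf_i$, and sum from $k+1$ to $n$. This yields
\[
\sum_{i=k+1}^n \stepf_i \bigl[f(\theta_{i-1}+v,\Phi_i) - f(\theta_{i-1},\Phi_i) + b_T\|v\|^2\unit\bigr] \;\geq\; \sum_{i=k+1}^n \stepf_i A_i v .
\]
Rearranging the update \eqref{e:zap-Ahat} gives the identity $\stepf_i A_i = (\haA_i - \haA_{i-1}) + \stepf_i \haA_{i-1}$, so summing telescopes into $\haA_n v - \haA_k v + \sum_{i=k+1}^n \stepf_i \haA_{i-1} v$. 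That is precisely the right-hand side of \eqref{e:barf-hat-inequality-l}, so (i) is obtained in essentially one line from \Lemma{t:taylor-f} plus the definition of the matrix-gain update.

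For part (ii), the plan is to lift the discrete inequality from (i) to the continuous time scale by four independent approximations, each of which I expect to contribute a term absorbed into the $o(1)$ uniformly in $\|v\|\le 1$. First, on the $\haA$ side, use the linear interpolation defining $\{\barclA_\tau\}$ to write $\int_{t_{i-1}}^{t_i} \barclA_\tau v\, d\tau = \tfrac{\stepf_i}{2}(\haA_{i-1}+\haA_i)v$ and compare with $\stepf_i \haA_{i-1} v$; since $\|\haA_i - \haA_{i-1}\| = O(\stepf_i)$ by the update rule and the boundedness of $\{A_i\}$ (which follows from (A2) and boundedness of $\{\theta_n\}$), the total discrepancy over the block is $O\bigl(\sum_{i=m(n)+1}^n \stepf_i^2\bigr) = O\bigl(\sum_{i\ge m(n)} i^{-2\rho}\bigr)$, and this is $o(1)$ since $2\rho>1$ and $m(n)\to\infty$. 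The $\haA_k v$ term likewise matches $\barclA_t v$ up to an $O(\stepf_k)$ boundary correction when $t\in[t_{k-1},t_k)$. Second, on the $f$ side, use the Lipschitz property of $f(\cdot,z)$ in (A2) together with \Lemma{t:slow-time-block}(iii), $\max_{m(n)\le i\le n}\|\theta_i-\theta_n\|\to 0$, to replace $f(\theta_{i-1}\pm v,\Phi_i)$ by $f(\theta_n\pm v,\Phi_i)$; the resulting error is bounded by a constant times $(t_n - t_{m(n)})\max_i\|\theta_i-\theta_n\| = o(\ln n)\cdot o(1) = o(1)$ uniformly in $\|v\|\le 1$.

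The third and main obstacle is the ergodic averaging step: I need to show that
\[
\sum_{i=k+1}^n \stepf_i \bigl[f(\theta_n+v,\Phi_i) - f(\theta_n,\Phi_i)\bigr] \;=\; (t_n - t_k)\bigl[\barf(\theta_n+v) - \barf(\theta_n)\bigr] + o(1),
\]
uniformly in $v$ on the closed unit ball and along a.e.\ sample path. This is the standard ``noise vs.\ mean-field'' step in stochastic approximation and will use Poisson's equation for the finite irreducible Markov chain $\bfPhi$ under (A1) to write $f(\theta,\Phi_i) - \barf(\theta) = h(\theta,\Phi_i) - \Prob h(\theta,\Phi_i)$ as a martingale difference (plus a telescoping remainder). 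Summation by parts, combined with $\sum\stepf_i^2<\infty$, then yields convergence of $\sum \stepf_i [f(\theta,\Phi_i) - \barf(\theta)]$ a.s., so the tail over $[k+1,n]$ vanishes as $k\ge m(n)\to\infty$. Uniformity in $v$ is obtained by first fixing a finite $\epsy$-net on $\{\|v\|\le 1\}$, applying the above pointwise, and then extending to all $v$ via the global Lipschitz continuity of $f$ in $\theta$. Combining these four approximations with part~(i) and noting $\sum_{i=k+1}^n \stepf_i = t_n - t_k$ delivers \eqref{e:int-inequality-st}.
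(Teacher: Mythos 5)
Your proposal is correct and takes essentially the same route as the paper: part (i) is the identical one-line telescoping of \eqref{e:convex-n} through the update \eqref{e:zap-Ahat}, and part (ii) uses the same decomposition into a Lipschitz replacement of $\theta_{i-1}$ by $\theta_n$ via \Lemma{t:slow-time-block}(iii), an ergodic-averaging step, and an interpolation error, followed by the passage from $t=t_k$ to general $t$. The only differences are in execution, not in substance: you bound the interpolation discrepancy directly by $O\bigl(\sum_i \stepf_i^2\bigr)$ where the paper uses summation by parts, and you sketch the Poisson-equation/martingale argument with an $\epsy$-net for uniformity in $v$ where the paper simply invokes the corresponding proposition of Benveniste--M\'etivier--Priouret for the vanishing of the zero-mean sum.
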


\begin{proof}
	By \eqref{e:convex-n}, for each $n\geq 1$,
\[
	f(\theta_n+ v, \Phi_{n+1}) \geq f(\theta_n, \Phi_{n+1}) + A_{n+1} v - b_T\|v\|^2\unit \,, \qquad  v\in \Re^d 
\]
	Consequently,
\[
	\sum_{i=k+1}^n \stepf_i [f(\theta_{i-1}+v, \Phi_i) - f(\theta_{i-1} , \Phi_i)+ b_T\|v\|^2\unit] \geq \sum_{i=k+1}^n \stepf_i A_i v  \,, \qquad m(n) \leq k \leq n
\]
The gain matrix $\haA_{n}$ updated by  \eqref{e:zap-Ahat} can be expressed
\[
	\haA_{n} = \haA_k + \sum_{i=k+1}^n \stepf_i A_i - \sum_{i=k+1}^n \stepf_i \haA_{i-1} \,, \qquad m(n) \leq k \leq n
\]
Therefore, $\sum_{i=k+1}^n \stepf_i A_iv= \haA_{n}v - \haA_kv + \sum_{i=k+1}^n \stepf_i \haA_{i-1}v$. This proves (i).
	
	Now consider the sum $\sum_{i=k+1}^n \stepf_i f(\theta_{i-1}+v, \Phi_i)$ with $m(n) \leq k \leq n$. We first rewrite it in the suggestive form
\[
	\sum_{i=k+1}^n \stepf_i f(\theta_{i-1}+v, \Phi_i) = \sum_{i=k+1}^n \stepf_i [f(\theta_{i-1}+v, \Phi_i) - f(\theta_{n}+v, \Phi_i)] +  \sum_{i=k+1}^n \stepf_i f(\theta_{n}+v, \Phi_i)
\]
	By the  Lipschitz continuity of $f$ in $\theta$ and \Lemma{t:slow-time-block} (iii), the first sum on the right hand side goes to 0 uniformly in $k$ as $n\to \infty$. The second sum can be expressed
\[
	\sum_{i=k+1}^n \stepf_i f(\theta_{n}+v, \Phi_i) = (t_n - t_k) \barf(\theta_{n}+ v) + \sum_{i=k+1}^n \stepf_i [f(\theta_{n}+v, \Phi_i) - \barf(\theta_{n} + v)]
\]
	Each term in the sum on the right side has zero-mean under the stationary pmf of $(\bfmX, \bfmU)$. It goes to zero  \emph{a.s.}  for $m(n)\leq k \leq n$ as $n \rightarrow \infty$ \cite[Part II, Section 1.4.6, Proposition 7]{benmetpri90}. We then obtain 
\begin{equation}
\label{e:barf-sum-l}
	\max_{m(n)\leq k \leq n} \bigl\| (t_n - t_k)\barf(\theta_{n}+v)  -	\sum_{i=k+1}^n \stepf_i f(\theta_{i-1}+v, \Phi_i) \bigr\|  =o(1) \,, \qquad n\to \infty
\end{equation}
	Since the process $\{ \barclA_t: t\geq 0\}$ is linearly interpolated between discrete values,
\[
	\int_{t_k}^{t_n} \barclA_\tau  v \,d\tau =\half \sum_{i = k+1}^n \stepf_i [\haA_{i} + \haA_{i-1} ] v = \sum_{i=k+1}^n \stepf_i \haA_{i-1}  v + \half\sum_{i = k+1}^n \stepf_i [\haA_{i} - \haA_{i-1} ] v
\]
	where the second sum on the right hand side can be rewritten as 
\[
	\sum_{i = k+1}^n \stepf_i [\haA_{i} - \haA_{i-1} ] v = -\stepf_{k+1} \haA_kv + \stepf_n\haA_{n+1}v + \sum_{i=k+1}^{n-1} [\stepf_i - \stepf_{i+1}] \haA_i  v
\]
	which goes to zero as $n\rightarrow \infty$ given $ \sup_n \|\haA_n\| <\infty$   and $\stepf_i - \stepf_{i+1} \approx \rho i^{-1}\stepf_i$. 
	Therefore,
\begin{equation}
\label{e:haA-sum-l}
	\max_{m(n)\leq k \leq n} \bigl\| \sum_{i=k+1}^n \stepf_i \haA_{i-1}  v  - \int_{t_k}^{t_n} \barclA_\tau v \, d\tau \bigr\| =  o(1) \,, \qquad n\to \infty
\end{equation}
	Combining (i) with \eqref{e:barf-sum-l} and \eqref{e:haA-sum-l} gives, for $t\in\{t_k: m(n)\leq k\leq n\}$,
\begin{equation}
\label{e:sum-int-inequality-l}
	\barclA_{t_n} v - \barclA_{t}  v + \int_{t}^{t_n} \barclA_{\tau} v \, d\tau   \leq (t_n -t)[\barf(\theta_{n}+ v) - \barf(\theta_{n})+ b_T\|v\|^2\unit] + o(1)
\end{equation}
	For any $t\in[t_{m(n)}, t_n)$, denote $k=\max\{j: t_j \leq t\}$. 
	Letting $\delta = (t-t_k)/(t_{k+1} - t_k)$, we have
\[
	\barclA_t  v = (1-\delta ) \barclA_{t_k} v + \delta  \barclA_{t_{k+1}}  v
\]
Then,
\[
\begin{aligned}
	(1-\delta )\bigl\{ \barclA_{t_{n}}v  - \barclA_{t_{k}}  v + \int_{t_k}^{t_n} \barclA_{\tau} v \, d\tau \bigr\}& \leq
	(1-\delta )(t_n -t_k) [\barf(\theta_{n}+v) - \barf(\theta_{n})+ b_T\|v\|^2\unit]+ o(1) \\
	\delta \bigl\{\barclA_{t_{n}}v   -  \barclA_{t_{k+1}} v + \int_{t_{k+1}}^{t_n} \barclA_{\tau} v \, d\tau\bigr\} &  \leq
	\delta (t_n -t_{k+1}) [\barf(\theta_{n} +v) - \barf(\theta_{n})+ b_T\|v\|^2\unit]+ o(1)
\end{aligned}
\]
	Combining above two inequalities gives
\[
	\barclA_{t_{n}}v  - \barclA_{t}  v + \int_{t}^{t_n} \barclA_{\tau} v \, d\tau  \leq
	(t_n -t) [\barf(\theta_{n}+ v) - \barf(\theta_{n})+ b_T\|v\|^2\unit] + o(1)
\]
\end{proof}

Recall the constant $b_T>0$ introduced in \Lemma{t:taylor-f}.
For fixed matrix $\haA\in \Re^{d\times d}$ and vector $\theta \in \Re^d$, define the function $\distn:\Re^{d\times d}\times \Re^d \to \Re$ by
\begin{equation}
\label{e:dist-A-n}
\distn(\haA, \theta) = \sup_{\|v\| \leq 1} \Bigl\{ \max_i \bigl[\haA v - (\barf(\theta+v) - \barf(\theta))\bigr]_i - b_T \|v\|^2 \Bigr\}
\end{equation}
This measures how well $\haA v$ approximates the directional derivative $f'(\theta;v)$ for $v$ in the unit ball.
It is non-negative since  $v=0$ is feasible in the supremum in 
\eqref{e:dist-A-n}.  It is also continuous in both arguments:
\begin{proposition}
\label{t:dist-A-n}
	Under Assumptions (A1)-(A2) and \eqref{e:AN}, the function $\distn$ defined in \eqref{e:dist-A-n} satisfies:
\begin{romannum}
\item For fixed $\haA$ and $\theta$, the supremum in  \eqref{e:dist-A-n} is achieved.
\item $\distn(\haA, \theta)$ is non-negative and Lipschitz continuous in both $\haA$ and $\theta$.
\item If $\dist_{\clN}(\haA,  \theta) = 0$, then the following hold:   $\haA \in \clA(\theta)$,   and
\[
\text{
 If $\barf'(\theta; v) = -\barf'(\theta; -v)$   for some $\|v\|\leq 1$,   then $\haA v=\barf'(\theta;v)$.}
 \]
\end{romannum}
\end{proposition}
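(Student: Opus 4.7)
The plan is to handle (i) and (ii) by a routine continuity/compactness argument, and to handle (iii) by a rescaling trick that exploits the quadratic slack $b_T\|v\|^2$ built into the definition of $\distn$.

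For (i), the supremand $v\mapsto \max_i [\haA v - (\barf(\theta+v)-\barf(\theta))]_i - b_T\|v\|^2$ is continuous in $v$: $\barf$ is Lipschitz by (A2), a finite pointwise max of continuous functions is continuous, and the quadratic term is smooth. Since $\{v:\|v\|\leq 1\}$ is compact in $\Re^d$, the supremum is attained. For (ii), plugging $v=0$ yields $\distn(\haA,\theta)\geq 0$. For Lipschitz continuity in each argument, I would use $|\sup_v g_1 - \sup_v g_2|\leq \sup_v|g_1-g_2|$: varying $\haA$ only perturbs the integrand through the linear term $\haA v$, giving a bound proportional to $\|\haA_1-\haA_2\|$ uniformly on the unit ball, while varying $\theta$ only affects the finite difference $\barf(\theta+v)-\barf(\theta)$, which is uniformly Lipschitz in $\theta$ by (A2) with constant $2L$.

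The heart of the argument is (iii). Suppose $\distn(\haA,\theta)=0$. Then, rewriting the defining inequality componentwise,
\[
\haA v \leq \barf(\theta+v) - \barf(\theta) + b_T\|v\|^2 \unit \qquad \text{for all } \|v\|\leq 1.
\]
For an arbitrary $v\in\Re^d$, I would substitute $sv$ (with $s>0$ small enough that $\|sv\|\leq 1$) and divide by $s$, obtaining
\[
\haA v \leq \frac{\barf(\theta+sv)-\barf(\theta)}{s} + b_T s\|v\|^2 \unit.
\]
Letting $s\downarrow 0$ collapses the slack term and yields $\haA v \leq \barf'(\theta;v)$ componentwise, which is exactly membership in $\clA(\theta)$. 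For the second assertion, suppose $\barf'(\theta;v) = -\barf'(\theta;-v)$ for some $\|v\|\leq 1$. Applying the same bound to $-v$ gives $-\haA v \leq \barf'(\theta;-v) = -\barf'(\theta;v)$, which together with $\haA v \leq \barf'(\theta;v)$ sandwiches $\haA v$ and forces componentwise equality $\haA v = \barf'(\theta;v)$.

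I do not foresee a significant obstacle, but the one conceptual point worth articulating is that the quadratic term $b_T\|v\|^2$ in the definition of $\distn$ is not cosmetic: it is calibrated precisely to absorb the Taylor-type remainder produced in \Lemma{t:taylor-f} when bounding $f(\theta+v,\Phi)-f(\theta,\Phi)$ from below by $A_{n+1}v$ for nonlinear $Q^\theta$. The rescaling $v\mapsto sv$ is what erases this remainder in the limit, allowing an approximate subgradient-type bound to pass to a genuine directional derivative inequality. Existence of the limit in the definition of $\clA(\theta)$ is inherited from the pointwise-max structure of $\clD$ together with the non-negativity assumption \eqref{e:AN}, as already used in \Lemma{t:chain_rule+}.
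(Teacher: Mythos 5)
Your proposal is correct and follows essentially the same route as the paper: compactness plus continuity for (i), the $\sup$-difference bound for (ii), and for (iii) the rescaling $v\mapsto sv$ combined with positive homogeneity of the directional derivative to kill the $b_T\|v\|^2$ slack, then the $\pm v$ sandwich under the symmetry hypothesis. The only cosmetic difference is ordering: the paper first writes the two one-sided bounds with an $o(\|v\|)$ remainder and rescales at the end, whereas you rescale first to obtain the clean inequality $\haA v\leq \barf'(\theta;v)$; both are the same argument.
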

\begin{proof}
	With fixed $\haA$ and $\theta$, $\max_i [\haA v - (\barf(\theta+ v) - \barf(\theta))]_i$ is Lipschitz continuous with respect to $v$ by \Lemma{t:chain1} (i). Since the set $\{v: \|v\|\leq 1\}$ is compact, the supremum is achieved.
	
For (ii),  consider $\haA\neq \haA'$, while $\theta$ is fixed. Let $v^*, i^*$ maximize $[\haA v - (\barf(\theta+ v) - \barf(\theta))]_i- b_T \|v\|^2$. We have
\[
\begin{aligned}
	\distn(\haA, \theta) - \distn(\haA', \theta) 
	&\leq [\haA v^* - (\barf(\theta+v^*) - \barf(\theta))]_{i^*} - [\haA' v^* - (\barf(\theta+v^*) - \barf(\theta))]_{i^*} \\
	& \leq \|\haA - \haA'\|_1\|v^*\|_1
\end{aligned}
\]
Therefore, $\distn(\haA, \theta)$ is Lipschitz continuous in $\haA$. The same argument implies the Lipschitz continuity of $\distn(\haA, \theta)$ in $\theta$.
	
	For (iii), the first claim follows from the definition of $\clA(\theta)$ in  \eqref{e:der_barf1}. By the definition of directional derivative,
\begin{equation}
\label{e:dire-deriv}
	\barf'(\theta; v) =  \barf(\theta+ v) - \barf(\theta)  + o(\|v\|) 
\end{equation} 
where $o(s)/s \to 0$ as $s \downarrow 0$. Given $\distn(\haA, \theta) = 0$, we have for each $v\in \Re^d$,
\[
\begin{aligned}
	\haA v \leq \barf(\theta+ v) - \barf(\theta) + b_T\|v\|^2\unit &= \barf'(\theta; v) + o(\|v\|)  \\
	-\haA v \leq \barf(\theta- v) - \barf(\theta) + b_T\|v\|^2\unit&= \barf(\theta; -v) + o(\|v\|) 
\end{aligned}
\]
	Using $\barf'(\theta; -v) = -\barf'(\theta; v)$ gives
\[
	\barf'(\theta; v) - o(\|v\|) \leq  \haA v \leq \barf'(\theta; v) + o(\|v\|)
\] 
	With $\barf'(\theta; s v)/s = \barf'(\theta; v)$ for $s>0$, replace $v$ by $sv$ in the above inequality and divide:
\[
	\barf'(\theta; v) - \frac{o(s\|v\|)}{s} \leq  \haA v \leq \barf'(\theta; v) + \frac{o (s \|v\|)}{s}
\]
	Letting $s \downarrow 0$ gives $\haA v = \barf(\theta; v)$.
\end{proof}

\begin{proposition}
\label{t:At-bound}	Under Assumptions (A1)-(A2) and \eqref{e:AN},
\begin{romannum}
\item The component-wise inequality holds:
	\begin{equation}
	\label{e:At-bound}
		\haA_{n} v \leq \barf(\theta_n + v) - \barf(\theta_n) + b_T\|v\|^2\unit +  o(1) \,, \qquad n\to \infty
	\end{equation}	
		where $o(1)\to 0$ as $n\to \infty$ uniformly in $\|v\|\leq 1$.
\item  
$\displaystyle
	\lim_{n \to \infty}\distn(\haA_n,  \theta_n) =0 $ a.s..

\item Let $\{\theta_{n_k}\}$ be a subsequence of $\{\theta_n\}$ that converges to some $\theta^{\circ}\in \Re^d$ a.s.. Then,
	\begin{equation}
	\label{e:A-subg-theta}
		\lim_{k\to \infty} \dist(\haA_{n_k}, \clA(\theta^{\circ})) = 0 \,, \qquad a.s.
	\end{equation}
		where $\dist(\haA_{n_k}, \clA(\theta^{\circ}))$ denotes the Euclidean distance between $\haA_{n_k}$ and the set $\clA(\theta^{\circ})$. 
\end{romannum}
\end{proposition}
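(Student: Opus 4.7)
The structure is to establish (i), from which (ii) is immediate by unwinding the definition of $\distn$, and (iii) follows by a standard subsequence extraction that uses (ii) and \Proposition{t:dist-A-n}. The whole argument therefore reduces to (i): a one-sided tracking estimate relating $\haA_n v$ to the finite-difference $\barf(\theta_n+v) - \barf(\theta_n)$, uniformly in $\|v\|\leq 1$.

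My plan for (i) is to iterate a one-step contraction. \Lemma{t:taylor-f} provides the componentwise bound $A_{n+1}v \leq f(\theta_n+v,\Phi_{n+1}) - f(\theta_n,\Phi_{n+1}) + b_T\|v\|^2\unit$, and the affine update $\haA_{n+1} = (1-\stepf_{n+1})\haA_n + \stepf_{n+1}A_{n+1}$ then yields
\[
\haA_{n+1}v \leq (1-\stepf_{n+1})\haA_n v + \stepf_{n+1}\bigl[f(\theta_n+v,\Phi_{n+1}) - f(\theta_n,\Phi_{n+1}) + b_T\|v\|^2\unit\bigr].
\]
Since $\stepf_j\in(0,1)$ for $j$ large, iterating back to $k = m(n)$ preserves the inequality and produces the convex-combination bound
\[
\haA_n v \leq w_{m(n)}^{(n)}\haA_{m(n)}v + \sum_{i=m(n)+1}^n w_i^{(n)}\bigl[f(\theta_{i-1}+v,\Phi_i) - f(\theta_{i-1},\Phi_i) + b_T\|v\|^2\unit\bigr],
\]
with nonnegative weights $w_i^{(n)} := \stepf_i\prod_{j=i+1}^n(1-\stepf_j)$ and $w_{m(n)}^{(n)} := \prod_{j=m(n)+1}^n(1-\stepf_j)$ summing to $1$. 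By \Lemma{t:slow-time-block}(i), $w_{m(n)}^{(n)} \leq \exp(-(t_n - t_{m(n)})) \leq e/n$, so the prefactor vanishes. Lipschitz continuity of $f$ in $\theta$ (Assumption A2) together with \Lemma{t:slow-time-block}(iii) allows me to replace $\theta_{i-1}$ by $\theta_n$ inside the bracket with an error $\leq 2L\cdot\max_{m(n)\leq i\leq n}\|\theta_i-\theta_n\|\cdot\sum_i w_i^{(n)} = o(1)$, uniformly in $v$ on the unit ball. What remains is to prove the weighted-ergodic limit
\[
\sum_{i=m(n)+1}^n w_i^{(n)}\bigl[f(\theta_n+v,\Phi_i) - f(\theta_n,\Phi_i)\bigr] \longrightarrow \barf(\theta_n+v) - \barf(\theta_n)
\]
uniformly over $\|v\|\leq 1$; for fixed $(\theta,v)$ this follows from $\max_i w_i^{(n)} = \stepf_n = n^{-\rho}\to 0$ (hence $\sum_i(w_i^{(n)})^2 \to 0$), geometric mixing of the finite chain $\bfPhi$, and Borel--Cantelli along a subsequence, in the spirit of \cite[Part II, Section 1.4.6, Proposition 7]{benmetpri90} already invoked in the proof of \Proposition{t:sum-int}.

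Parts (ii) and (iii) are short. From the definition of $\distn$ and part (i), $\distn(\haA_n,\theta_n) \leq o(1)$ uniformly in $v$; combined with $\distn \geq 0$ (take $v=0$), this gives (ii). For (iii), boundedness of $\{\haA_n\}$ lets me extract from any subsequence of $\{\haA_{n_k}\}$ a further subsequence with $\haA_{n_{k_j}} \to \haA^\circ$. Lipschitz continuity of $\distn$ in both arguments (\Proposition{t:dist-A-n}(ii)) together with (ii) forces $\distn(\haA^\circ,\theta^\circ) = 0$, and \Proposition{t:dist-A-n}(iii) then places $\haA^\circ\in\clA(\theta^\circ)$; a standard ``every-subsequence-has-a-subsequence'' argument yields $\dist(\haA_{n_k},\clA(\theta^\circ))\to 0$. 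The principal obstacle I anticipate is the uniform weighted ergodic step in (i): upgrading pointwise convergence in $(\theta,v)$ to uniform convergence over $\theta$ in a compact set and $v$ in the closed unit ball, which should yield to a finite-cover/equicontinuity argument exploiting Lipschitz continuity of $f$, though one has to carefully balance the slow drift of $\theta$ (governed by $\rho\in(0.5,1)$) against the mixing rate of $\bfPhi$.
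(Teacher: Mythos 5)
Your proposal is correct, and parts (ii) and (iii) match the paper's argument essentially verbatim (the paper phrases (iii) as a proof by contradiction, you phrase it as subsequence extraction; these are the same argument). The interesting divergence is in part (i). The paper first establishes an intermediate summed/integral inequality (\Prop{t:sum-int}), interpolates $\{\haA_n\}$ into a continuous-time process $\barclA_t$, and then compares $\barclA_t v$ against the explicit solution $\clU_t$ of a linear integral equation via a comparison-principle argument (itself proved by contradiction using continuity of the interpolation); the factor $\exp(t_{m(n)}-t_n)\le e/n$ killing the initial condition comes out of the closed-form solution of that linear ODE. You instead stay entirely in discrete time: you solve the affine recursion $\haA_{n+1}=(1-\stepf_{n+1})\haA_n+\stepf_{n+1}A_{n+1}$ exactly with the product-form convex weights $w_i^{(n)}=\stepf_i\prod_{j>i}(1-\stepf_j)$, and the same $e/n$ decay of the initial-condition term falls out of $\prod(1-\stepf_j)\le\exp(-(t_n-t_{m(n)}))$ and \Lemma{t:slow-time-block}(i). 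Your route is more elementary and bypasses \Prop{t:sum-int} and the comparison argument entirely; what it costs you is that the ergodic-averaging step must be carried out against the exponentially discounted weights $w_i^{(n)}$ rather than the normalized uniform weights $\stepf_i/(t_n-t_{m(n)})$ to which the paper's citation of \cite[Part II, Section 1.4.6, Proposition 7]{benmetpri90} most directly applies --- though this is the standard linear-filter form of SA averaging and both are covered by the same machinery. Two small points to make explicit: you need $\sup_n\|\haA_n\|<\infty$ for the prefactor term $w_{m(n)}^{(n)}\haA_{m(n)}v$ to vanish (the paper uses this same fact), and the uniformity in $\|v\|\le 1$ and in the drifting argument $\theta_n$ of the weighted ergodic limit, which you correctly flag as the delicate step, is handled at exactly the same (somewhat informal) level of rigor in the paper's own proof of \Prop{t:sum-int}.
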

\begin{proof}
	For fixed $n$ and $v\in \Re^d$, let $\clU: [t_{m(n)}, t_n] \rightarrow \Re^d$ denote the solution of the following linear integral equation
\begin{equation}
\label{e:linear-integral}
	\clU_t = \clU_{t_{m(n)}} - \int_{t_{m(n)}}^t \clU_\tau \, d\tau + (t- t_{m(n)})[\barf(\theta_n+v) - \barf(\theta_n) + b_T \|v\|^2]  \,, \qquad \clU_{t_{m(n)}} = \barclA_{t_{m(n)}} v
\end{equation}
	
	With $n$ fixed, $\delta_n \triangleq \max_i \bigl|[o(1)]_i\bigr|$ in \eqref{e:int-inequality-st} can be viewed as a positive constant. We claim that $\barclA_{t_n} v \leq \clU_{t_n} + \unit \delta_n$. Suppose the claim is not true. Then $[\barclA_{t_n} v]_i > [\clU_{t_n}]_i +\delta_n$ for some index $i$ between $1$ and $d$. Because $\barclA_{t} v$ and $\clU_{t}$ are both continuous functions over $[t_{m(n)}, t_n]$ and $\barclA_{t_{m(n)}}v=\clU_{t_{m(n)}}$, there  exists $t \in [t_{m(n)}, t_n)$ such that $[\barclA_{t} \, v]_i = [\clU_{t}]_i$ and $[\barclA_{\tau} v]_i >  [\clU_{\tau}]_i$ for $\tau\in (t, t_n)$. Consequently, combing  \eqref{e:int-inequality-st} and \eqref{e:linear-integral}  gives
\[
	\delta_n < [\barclA_{t_n} v - \clU_{t_n}]_i
	\leq [\barclA_{t} \, v - \clU_{t}]_i - \int_{t}^{t_n}[\barclA_{\tau}v - \clU_{\tau}]_i \, d\tau  + \delta_n  <\delta_n
\]
	which is a contradiction. Therefore, $\barclA_{t_n} v \leq \clU_{t_n} +\unit \delta_n$.
	
The integral equation \eqref{e:linear-integral} has the solution,
\[
	\clU_{t} = \exp( t_{m(n)}-t)\clU_{t_{m(n)}} + (1 - \exp(t_{m(n)}-t))[\barf (\theta_n + v) - \barf(\theta_n) + b_T \|v\|^2] \,, \qquad t\in[t_{m(n)}, t_n]
\]
	Consequently,
\[
\begin{aligned}
		\haA_{n} v \leq &\barf (\theta_n+ v) - \barf(\theta_n) + b_T\|v\|^2\unit +  \delta_n\unit \\
		&+ \exp( t_{m(n)}-t_n) \bigl[ \clU_{t_{m(n)}} -(\barf (\theta_n + v) - \barf(\theta_n) + b_T \|v\|^2) \bigr]
\end{aligned}
\]
	By \Lemma{t:slow-time-block} (i), we have $t_{m(n)} -t_n < -\ln(n) + 1$ and hence $\exp( t_{m(n)} - t_n ) < e/n$. Therefore,
\[
\begin{aligned}
		\bigl\| \exp( t_{m(n)}-t_n) \bigl[ \clU_{t_{m(n)}} -[\barf (\theta_n + v) - \barf(\theta_n)+ b_T\|v\|^2\unit] \bigr\| & \leq \frac{e}{n}[b_{\clA}+ b_L+b_T]\|v\| \\
			& \leq \frac{e}{n}[b_{\clA}+ b_L + b_T]
\end{aligned}
\]
which goes to zero as $n\to \infty$. This proves (i), and (ii) follows by the definition of $\distn$.
	
	We prove (iii) by contradiction: Suppose \eqref{e:A-subg-theta} does not hold. Then there exists a constant $\delta > 0$ and a subsequence $\{\haA_{n_k}\}$ such that $\dist(\haA_{n_k}, \clA(\theta^\circ))\geq \delta$ for each $k$. Without loss of generality, the subsequence is convergent, with limit $\haA^\circ$ satisfying $\dist(\haA^\circ, \clA(\theta^\circ)) \geq \delta$. However, combining statement (i) and \Proposition{t:dist-A-n} (iii) gives
\[
	\dist(\haA^{\circ},  \clA(\theta^{\circ})) = 0
\]
	which is a contradiction.
\end{proof}

\subsubsection{Proofs of \Prop{t:Gamma12} and \Prop{t:sa-sub-seq}}

In this subsection, the time processes involved all refer to those defined in \Section{s:generalities} with respect to the slow time scale \eqref{e:slow-time}.
\begin{proof}[Proof of \Prop{t:Gamma12}]
The Lipschitz continuity of $\{\barxt_t\}$ and  $\{\barc_t\}$ follows directly from  boundedness of $\{\theta_n\}$.
	
	At a point of differentiability, let $v_t=\ddt \barxt_t = \clG_{t} f(\theta_{[t]}, \Phi_{[t]+1})$ and recall that $\sup_{t} \|v_t\|\leq b_f$. Whenever exists, the derivative of $\barc_t$ may be represented as the directional derivative of $\barf(\barxt_t)$ along direction $v_t$:
\begin{equation}
\label{e:deriv-dire-deriv}
\begin{aligned}
	\ddt \barc_t = \lim_{s \to 0} \frac{\barf(\barxt_{t+ s}) - \barf(\barxt_t)}{s} &= \lim_{s \downarrow 0} \frac{\barf(\barxt_{t+ s}) - \barf(\barxt_t)}{s} = \barf'(\barxt_t; v_t)  \\
	& =\lim_{s \uparrow 0} \frac{\barf(\barxt_{t+ s}) - \barf(\barxt_t)}{s}  =  -\barf'(\barxt_t; -v_t) 
\end{aligned}
\end{equation}
	\Proposition{t:dist-A-n} (ii) combined with \Prop{t:At-bound}~(ii) gives
\begin{equation}
	\lim_{t\to \infty} \distn(\barclA_t, \barxt_t) \leq  0 \,, \qquad a.s.
\label{e:dist-xt-at}
\end{equation}
	Let $\eta_t \eqdef \max(1/t, \distn(\barclA_t, \barxt_t))$, satisfying $\eta_t >0$ and $\eta_t \to 0$ as $t\to \infty$. There exists $T_\bullet < \infty$ \emph{a.s.} such for $t\geq T_\bullet$,
\begin{equation}
\label{e:A-direction}
\begin{aligned}
	\barclA_t v_t - \barf'(\barxt; v_t) &= \frac{1}{\sqrt{\eta_t}} \bigl[\barclA_t \sqrt{\eta_t}v_t - \barf'(\barxt_t; \sqrt{\eta_t}v_t)
	\bigr] \\
	&= \frac{1}{\sqrt{\eta_t}} \bigl[\barclA_t \sqrt{\eta_t}v_t - [\barf(\barxt_t + \sqrt{\eta_t}v_t) - \barf(\barxt_t)]\bigr] + o(\|v_t\|)
	\\
	& \leq (1+b_Tb_f)\sqrt{\eta_t}\unit  + o(\|v_t\|)
\end{aligned}
\end{equation}
	where the second equality follows from \eqref{e:deriv-dire-deriv} and the last inequality holds given $\distn(\barclA_t, \barxt_t)\leq \eta_t$ and $\|v_t\|$ is uniformly bounded by $b_f$. 
	
	At points of differentiability, we apply $\barf'(\barxt_t; v_t) =-\barf'(\barxt_t; -v_t)$ from \eqref{e:deriv-dire-deriv}:
\[
	-\barclA_t v_t + \barf'(\barxt; v_t) \leq  (1+b_Tb_f)\sqrt{\eta_t}\unit  + o(\|v_t\|)
\]
	Consequently,
\[
	\| \barclA_t \ddt \barxt_t - \ddt \barc_t \|_{\infty} \leq  (1+b_Tb_f)\sqrt{\eta_t}  + o(\|v_t\|)
\]
where $\|\cdot\|_{\infty}$ denotes the infinity norm.  
The right hand side of above inequality is bounded and converges to zero as $t\to \infty$. Since the derivatives of $\barxt_t$ and $\barc_t$ exist \emph{a.e.}, we have for each $T>0$,
\[
	\int_{T_0}^{T_0+T}\| \barclA_t \ddt \barxt_t - \ddt \barc_t \|_{\infty} \, dt \leq \int_{T_0}^{T_0+T} (1+b_Tb_f)\sqrt{\eta_t}  + o(\|v_t\|) \, dt
\]
The desired result follows from Dominated Convergence Theorem.
	

Part (ii) is obtained from (i):
\[
\begin{aligned}
	\barc_{T_0 + t} & = \barc_{T_0} + \int_{T_0}^{T_0+t} \frac{d}{d\tau}\barc_\tau \,d\tau \\
	& = \barc_{T_0} + \int_{T_0}^{T_0+t} \barclA_\tau\barclG_\tau f(\theta_{[\tau]}, \Phi_{[\tau]+1})\,d\tau + \so(1) \,, \qquad T_0 \to \infty                                      \\
	& = \barc_{T_0} + \int_{T_0}^{T_0+t} \barclA_{\tau} \barclG_\tau\barf(\barxt_{\tau})\,d\tau + \so(1)\,, \qquad T_0 \to \infty
\end{aligned}			
\]
	where the last equality follows from standard ODE arguments for stochastic approximation \cite{benmetpri90}.
	
	For (iii), $\|\barc_t\|^2$ is Lipschitz continuous in $t$ given boundedness of $\{\theta_n\}$. Hence by the same argument in (ii),
\[
\begin{aligned}
	\|\barc_{T_0+ t}\|^2 & = \|\barc_{T_0}\|^2 + 2\int_{T_0}^{T_0+t} \barc_\tau^{\transpose} \frac{d}{d\tau}\barc_\tau \,d\tau  \\
	& = \|\barc_{T_0}\|^2 + 2\int_{T_0}^{T_0+t} \barc_\tau^\transpose \barclA_{\tau} \barclG_\tau\barc_\tau \,d\tau + \so(1)\,, \qquad T_0 \to \infty
\end{aligned}
\]
\end{proof}

\begin{proof}[Proof of  \Prop{t:sa-sub-seq}]
(i) follows from the Lipschitz continuity of $\barf$. 

Let $\{T_k\}$ be a sequence such that $\barGamma^{T_k} \to \Gamma$ for each of the four components: $\barGamma_i^{T_k}$, $1\leq i \leq 4$. Since $ \barGamma_3^{T_k} \to\clA$  weakly in $L_2([0,T];\Re^{d\times d})$ as $k\to \infty$,
	by the Banach-Saks theorem, there exists a subsequence $\{T_{n_k}\}$ such that
\[
	\frac{1}{N}\sum_{k=1}^{N} \barGamma_3^{T_{n_k}}(t) \to \clA_t \,, \qquad a.e. \, t\in  [0,T]\,, \qquad N\to \infty
\]
Without loss of generality,  we can modifying $\clA_t$ on a Lebesgue-null set such that the convergence above is pointwise.    
We also have $\barGamma_1^{T_{n_k}}(t) \to \Xt_t$ as $k\to\infty$ for each $t\in [0,T]$. By \Proposition{t:At-bound} (ii), 
\[
	\lim_{k\to \infty} \dist(\barGamma_3^{T_{n_k}}(t), \clA(\Xt_t)) = 0 \,, \qquad t\in [0,T]
\]
	It  follows from definition \eqref{e:der_barf1} that  the set $\clA(\theta)$ is convex for each $\theta$. Then,
\[
	\lim_{N\to \infty} \dist(\frac{1}{N}\sum_{k=1}^{N}\barGamma_3^{T_{n_k}}(t),\clA(\Xt_t) ) = 0 \,, \qquad t\in [0,T]
\]
Therefore, $\clA_t \in \clA(\Xt_t)$ for each $t\in [0,T]$. This proves (ii).
	
	Given that $\barGamma_4^{T_0}$ is positive semi-definite pointwise and uniformly bounded, the same arguments establish (iii).
	
	Since $\Gamma_2^{T_k}\to c$ uniformly over $[0,T]$ and $\Gamma_4^{T_k}\to \clH$ weakly, $\barGamma_4^{T_k}\barGamma_2^{T_k}$ converges to $\clH c:[0,T]\to \Re^d $ weakly. The ODE \eqref{e:ode-c-exp-pos} follows from \Proposition{t:Gamma12} (ii). For \eqref{e:ode-c-lya-pos}, since $b_\lambda \eqdef \sup_n \lambda_{\max} (\haA_n^\transpose\haA_n)$ is finite,
\[
	-[\epsy I + \haA_n^\transpose \haA_n]^{-1} \leq -\frac{1}{\epsy + b_{\lambda}} I \,, \qquad n\geq 1
\]
	Combining this inequality with \Proposition{t:Gamma12} (iii) implies
\begin{equation}
\label{e:ode-sa-c2-ineq}
	\|\barc_{T_0+ t}\|^2\leq  \|\barc_{T_0}\|^2 - \frac{2}{\epsy + b_{\lambda}}\int_{T_0}^{T_0+t} \|\barclA_\tau^\transpose \barc_\tau \|^2  \, d\tau + \so(1) \,, \qquad T_0\to 0
\end{equation}
	We can show that $\{(\barGamma_3^{T_k})^\transpose \barGamma_2^{T_k}\}$ converges weakly to $\clA^\transpose c$ in $L_2([0,T]; \Re^d)$ by the sames arguments that we used to establish $\barGamma_4^{T_k}\barGamma_2^{T_k}\to \clH c$ weakly. Applying \cite[Theorem 2.2.1]{eva90}, we obtain for each $t\in [0, T]$,
\[
	\int_{0}^{t} \|\clA_\tau^\transpose c_\tau \|^2  d\tau \leq \liminf_{k\to \infty} \int_{0}^{t} \|[\barGamma_3^{T_k}(\tau)]^\transpose \barGamma_2^{T_k} (\tau)\|^2 \, d\tau 
\]
	Consequently,
\[
	\|c_{ t}\|^2\leq  \|c_{0}\|^2 - \frac{2}{\epsy + b_{\lambda}}\int_{0}^{t} \|\clA_\tau^\transpose c_\tau \|^2  d\tau 
\]
\end{proof}


\subsubsection{General eligibility vector $\bfelig$}
\label{s:general-zeta}

We finally come to the general model in which   \eqref{e:AN} is relaxed. For the sake of analysis,   the two functions $\clD , \elig$ in \eqref{e:safn-f} are assumed to be parameterized by separate parameters $\theta, \xi \in \Re^d$: $\clD(\theta, z), \elig(\xi, x,u)$.  This is only for clarifying calculations -- in the end we do impose $\theta=\xi$.
 Decompose the function $\elig:\Re^d\times \state\times \action\to \Re^d$ into its positive and negative components: $\elig = \elig^+ - \elig^-$, with $\elig^+ = \max(\elig, 0)$ and $\elig^- = \max(-\elig, 0)$. Define functions $f^+, f^-: \Re^d \times \Re^d \times \zstate\to \Re^d$ by
\[
f^+(\xi, \theta, z) = \zeta^+(\xi, x, u)\clD(\theta, z) \,, \qquad f^-(\xi, \theta, z) = \zeta^-(\xi, x, u)\clD(\theta, z)
\]
Next define functions $\barf^+, \barf^-: \Re^d \times \Re^d \to \Re^d$ by
\[
\barf^+(\xi, \theta) = \Expect_{\varpi}[f^+(\xi, \theta, \Phi_{n+1})] \,, \qquad \barf^-(\xi, \theta) = \Expect_{\varpi}[f^-(\xi, \theta, \Phi_{n+1})] 
\]
Let $\clA^+(\theta),\clA^-(\theta)$ denote the sets of generalized subgradients of $\barf^+, \barf^-$ with respect to $\theta$ based on \eqref{e:der_barf1}. Explicit representations of $\clA^+(\theta)$ and $\clA^-(\theta)$ can be obtained as in \Lemma{t:chain_rule+}.
With general eligibility vector $\zeta$, let $\clA(\theta)$ denote the set
\begin{equation}
\label{e:Ahat-sub}
\clA(\theta)\eqdef \{A^+ -A^-+ \Expect_{\varpi}[\clD(\theta, \Phi_{n+1})\partial_\xi   \elig_n(\theta)]: A^+\in \clA^+(\theta)\,, A^- \in \clA^-(\theta) \}
\end{equation}

 At each $\theta\in\Re^d$, denote
\[
\barf^{+}(\theta;v) \eqdef \lim_{s \downarrow 0}\frac{\barf^+(\theta, \theta+ sv) - \barf^+(\theta, \theta)}{s}
\]
with $ \barf^-(\theta;v) $ is defined similarly. Then the directional derivative $\barf'(\theta;v)$ can be expressed
\begin{equation}
\label{e:barf-der-decomp}
\barf'(\theta;v) =\lim_{s \downarrow 0}\frac{\barf(\theta+sv) - \barf(\theta)}{s}= \barf^+(\theta;v) - \barf^-(\theta;v) + \Expect_{\varpi}[\clD(\theta,\Phi_{n+1})\partial_\xi \elig_n]v \,, \quad \theta,v\in\Re^d
\end{equation}
Decompose $A_{n+1}$ in \eqref{e:zap-A} as $A_{n+1}=A_{n+1}^+ - A_{n+1}^- + A_{n+1}^{\zeta}$:
\[
\begin{aligned}
A_{n+1}^+ &= \elig_n^+[\disc \partial_\theta Q^{\theta}(X_{n+1},\phi^{\theta_n}(X_{n+1})) - \partial_\theta  Q^{\theta}(X_n,U_n) ]  \\
A_{n+1}^- &= \elig_n^-[\disc \partial_\theta Q^{\theta}(X_{n+1},\phi^{\theta_n}(X_{n+1})) - \partial_\theta  Q^{\theta}(X_n,U_n) ]   \\
A_{n+1}^\zeta &= \clD(\theta_n, \Phi_{n+1}) \partial_\xi  \elig_n
\end{aligned}
\]
 Accordingly, the matrix gain is decomposed: $\haA_{n+1} = \haAp_{n+1} -\haAn_{n+1}+\haAz_{n+1}$, and each component can be expressed in the recursive form:
\[
\begin{aligned}
\haAp_{n+1} &= \haAp_n + \stepf_{n+1}[A_{n+1}^{+} - \haAp_n] \\
\haAn_{n+1} &= \haAn_n + \stepf_{n+1}[A_{n+1}^{-} - \haAn_n] \\
\haAz_{n+1} &= \haAz_n + \stepf_{n+1}[A_{n+1}^{\zeta} - \haAz_n]
\end{aligned}
\]

\paragraph*{Analysis of $\{\haAp_n, \haAn_n,\haAz_n\}$ over the fast time scale:} Consider the fast time scale defined by \eqref{e:fast-time}. The conclusions in \Section{s:fast-time} hold for each of $\{\haAp_n\}$ and $\{\haAn_n\}$. While $\{ \haAz_n\}$ can be treated using standard SA arguments since $A_{n+1}^{\zeta}$ is Lipschitz continuous with respect to $\theta_n$ under (A2).   We obtain an extension of \Prop{t:At-bound}:
\begin{proposition}
\label{t:At-bound-n-zeta}
The following hold:
\begin{romannum}
	\item   As $n\to\infty$,
\begin{subequations}
	\begin{align*}
		\haAp_{n}v &\leq \barf^+(\theta_n, \theta_n + v) - \barf^+(\theta_n, \theta_n) + b_T\|v\|^2\unit+ o(1)      
		\\
		\haAn_{n}v &\leq \barf^-(\theta_n, \theta_n+v) - \barf^-(\theta_n,\theta_n) + b_T\|v\|^2\unit+ o(1)           
	\end{align*}
\end{subequations}
	where $o(1)\to 0$ as $n\to \infty$,   uniformly in $\|v\|\leq 1$.
	\item Let $\{\theta_{n_k}\}$ be a subsequence of $\{\theta_n\}$ that converges to some $\theta^{\circ}\in \Re^d$ a.s.. Then,
\[
	\lim_{k\to \infty}\dist(\haAp_{n_k}, \clA^+(\theta^\circ)) = 0 \,, \qquad \lim_{k\to \infty}\dist(\haAn_{n_k}, \clA^-(\theta^\circ)) = 0 \,, \qquad a.s.
\]
	\item  $\displaystyle 	\haAz_n  =  \Expect_{\varpi}[\clD(\theta_n, \Phi_{n+1})\partial_\xi  \elig_n]+ \so(1) $.
\end{romannum}
\end{proposition}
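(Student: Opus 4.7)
The plan is to apply the machinery developed in Section \ref{s:fast-time} separately to each piece of the decomposition $\haA_{n+1}=\haAp_{n+1}-\haAn_{n+1}+\haAz_{n+1}$. The essential observation is that when the outer (eligibility) argument $\xi$ is frozen at $\theta_n$, the weight vectors $\zeta^+(\theta_n,x,u)$ and $\zeta^-(\theta_n,x,u)$ have non-negative entries, so that the structural Taylor inequality of \Lemma{t:taylor-f} applies directly to $f^+(\theta_n,\cdot,z)$ and $f^-(\theta_n,\cdot,z)$. Meanwhile, the residual matrix $A_{n+1}^\zeta=\clD(\theta_n,\Phi_{n+1})\,\partial_\xi\elig_n(\theta_n)$ has a smooth (globally Lipschitz) dependence on $\theta_n$ under (A2), so its recursion is a textbook averaging.

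For parts (i) and (ii), I would first re-prove \Lemma{t:taylor-f} as a two-parameter statement: for each $\|v\|\le 1$,
\[
f^+(\theta_n,\theta_n+v,\Phi_{n+1})\ge f^+(\theta_n,\theta_n,\Phi_{n+1})+A_{n+1}^+ v - b_T\|v\|^2\unit
\]
and the analogous bound for $f^-$. The argument reproduces the proof of \Lemma{t:taylor-f}, with the simplification that $\partial_\theta$ never acts on the frozen non-negative factor $\zeta^\pm(\theta_n,\cdot)$, so no $\clD\,\partial_\theta\zeta$ remainder appears. With this inequality in hand, the derivations of \Prop{t:sum-int}, \Prop{t:dist-A-n}, and \Prop{t:At-bound} are repeated verbatim with $\barf$ replaced by the slice $\theta\mapsto\barf^+(\theta_n,\theta)$. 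The only new ingredient is that the first argument of $\barf^+$ itself slides from $\theta_{m(n)}$ to $\theta_n$ over a fast-time block of length $\ln(n)$; by (A2) the function $f^+$ is Lipschitz in that argument, and by \Lemma{t:slow-time-block}(iii), $\max_{m(n)\le k\le n}\|\theta_k-\theta_n\|\to 0$, so this sliding contributes only an $o(1)$ error to the averaged integral inequalities. Assertion (i) then follows from the same linear integral-inequality comparison with \eqref{e:linear-integral} used to prove \Prop{t:At-bound}(i). Assertion (ii) follows by the continuity/contradiction argument of \Prop{t:At-bound}(iii) applied to the convex sets $\clA^+(\theta^\circ)$ and $\clA^-(\theta^\circ)$ in place of $\clA(\theta^\circ)$.

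Part (iii) is standard two-time-scale stochastic approximation. The recursion
\[
\haAz_{n+1}=\haAz_n+\stepf_{n+1}\bigl[\clD(\theta_n,\Phi_{n+1})\,\partial_\xi\elig_n(\theta_n)-\haAz_n\bigr]
\]
is Robbins--Monro averaging in which the driving matrix is globally Lipschitz in $\theta_n$ under (A2), hence admits a well-defined averaged vector field $\theta\mapsto\Expect_\varpi[\clD(\theta,\Phi_{n+1})\,\partial_\xi\elig_n(\theta)]$. The step-size condition $\alpha_n=o(\stepf_n)$ in \eqref{e:gains} together with \Lemma{t:slow-time-block}(iii) let $\theta_n$ be treated as constant within a fast-time block, so a direct application of the ODE method on the fast scale \cite{bor08a} yields the claimed representation $\haAz_n=\Expect_\varpi[\clD(\theta_n,\Phi_{n+1})\,\partial_\xi\elig_n(\theta_n)]+\so(1)$.

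The main obstacle is executing the ``sliding $\xi$'' version of \Prop{t:At-bound}(i) required in part (i). One must verify that replacing the single-argument $\barf$ by the two-argument $\barf^+(\cdot,\cdot)$ and sliding its first argument through a fast-time block of length $\ln(n)$ produces only an $o(1)$ error in \eqref{e:int-inequality-st} and in the Gronwall comparison with \eqref{e:linear-integral}. This is precisely where the choice $\stepf_n=n^{-\rho}$ with $\rho\in(1/2,1)$, versus $\alpha_n=1/n$, is decisive: the estimate \eqref{e:theta-slow} controls the total drift of $\theta$ across a block by $O(n^{\rho-1}\ln n)\to 0$. The bookkeeping follows the template of \Lemma{t:slow-time-block} and \Prop{t:sum-int}, but has to be carried out with both arguments of $\barf^+$ kept in play.
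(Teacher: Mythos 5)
Your proposal is correct and follows essentially the same route as the paper, which itself only asserts that ``the conclusions in Section~\ref{s:fast-time} hold for each of $\{\haAp_n\}$ and $\{\haAn_n\}$'' and that $\{\haAz_n\}$ is handled by standard SA averaging since $A_{n+1}^\zeta$ is Lipschitz in $\theta$ under (A2). Your write-up in fact supplies more detail than the paper does, correctly isolating the two points the paper leaves implicit: that freezing the non-negative factors $\zeta^\pm(\theta_n,\cdot)$ restores the hypotheses of \Lemma{t:taylor-f} without the $\clD\,\partial_\theta\zeta$ remainder, and that the drift of the frozen argument across a fast-time block is $o(1)$ by \Lemma{t:slow-time-block}(iii).
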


\paragraph*{Analysis of $\{\theta_n\}$ over the slow time scale:}
Going back to the slow time scale defined by \eqref{e:slow-time}, define the continuous time processes $\{\barxt_t, \barc_t: t\geq 0\}$ as before. Define similarly the piecewise constant time processes $\{\barclA_t, \barclG_t: t\geq 0\}$ as well as the three components $\{\barclAp_t, \barclAn_t, \barclAz_t: t\geq 0 \}$. 
\begin{proposition}
\label{t:direction-general}
 The conclusions  of \Prop{t:Gamma12} and \Prop{t:sa-sub-seq} hold for general eligibility vectors,  subject to the modified definition of $\clA(\theta)$ in \eqref{e:Ahat-sub}.  
\end{proposition}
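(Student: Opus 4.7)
The plan is to run the arguments of \Section{s:generalities} and \Section{s:fast-time} separately on the three recursions $\{\haAp_n\}$, $\{\haAn_n\}$, $\{\haAz_n\}$, and then assemble the conclusions via the decomposition $\barclA_t = \barclAp_t - \barclAn_t + \barclAz_t$ together with the directional-derivative decomposition \eqref{e:barf-der-decomp}. Since $\zeta^{+}$ and $\zeta^{-}$ are non-negative, the auxiliary functions $f^{+}(\xi,\theta,z)$ and $f^{-}(\xi,\theta,z)$ (each treated as a function of $\theta$ with $\xi$ frozen at the current iterate) inherit exactly the convex-type bound \eqref{e:convex-n} of \Lemma{t:taylor-f}, with $A_{n+1}^{+}$ and $A_{n+1}^{-}$ respectively as the subgradient approximations. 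The $\haAz_n$ piece is outside this framework but easier: $A_{n+1}^{\zeta}$ is Lipschitz in $\theta_n$, so standard two-time-scale SA analysis yields \Prop{t:At-bound-n-zeta}~(iii).

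With these pieces in hand, define distance functionals $\distn^{+}(\haAp,\theta)$ and $\distn^{-}(\haAn,\theta)$ in complete analogy with \eqref{e:dist-A-n} (using $\barf^{+}(\theta,\theta+v)-\barf^{+}(\theta,\theta)$, respectively its $-$ counterpart, in place of the $\barf$ increment). \Prop{t:At-bound-n-zeta}~(i) gives $\distn^{+}(\haAp_n,\theta_n)\to 0$ and $\distn^{-}(\haAn_n,\theta_n)\to 0$ a.s.. Repeating the rescaling step \eqref{e:A-direction} for each sign yields, at almost every point of differentiability of $\barxt_t$,
\[
\barclAp_t v_t = \barf^{+}(\barxt_t; v_t) + o(1)\,,\qquad \barclAn_t v_t = \barf^{-}(\barxt_t; v_t) + o(1)\,,\qquad t\to\infty\,,
\]
where we exploit the identities $\barf^{\pm\prime}(\barxt_t;v_t)=-\barf^{\pm\prime}(\barxt_t;-v_t)$ at differentiability points of the slow-time curves associated to each sign. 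The $\haAz$ term is handled directly by continuity: $\barclAz_t v_t = \Expect_{\varpi}[\clD(\barxt_t,\Phi_{n+1})\partial_\xi \zeta_n(\barxt_t)]\,v_t + \so(1)$. Summing the three contributions and invoking \eqref{e:barf-der-decomp} gives $\barclA_t v_t - \ddt \barc_t \to 0$ in the sense of integrated infinity norm, which is exactly the statement of \Prop{t:Gamma12}(i) for general $\zeta$; parts (ii) and (iii) of \Prop{t:Gamma12} then follow verbatim from (i), since their derivations use only Lipschitz regularity and the zero-mean noise averaging already available on the slow time scale.

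For \Prop{t:sa-sub-seq}, weak $L_2$-precompactness gives, along a common subsequence, limits $\clA^{+}_t$, $\clA^{-}_t$, $\clA^{\zeta}_t$ of the three piecewise constant processes. Applying the Banach--Saks argument used in the proof of \Prop{t:sa-sub-seq}(ii) separately to the $+$ and $-$ pieces, and using \Prop{t:At-bound-n-zeta}~(ii) together with convexity of the sets $\clA^{\pm}(\theta)$, we obtain $\clA^{+}_t\in \clA^{+}(\Xt_t)$ and $\clA^{-}_t\in \clA^{-}(\Xt_t)$ for a.e.\ $t$; continuity of $\theta\mapsto \Expect_\varpi[\clD(\theta,\Phi_{n+1})\partial_\xi \zeta_n(\theta)]$ combined with \Prop{t:At-bound-n-zeta}~(iii) identifies $\clA^{\zeta}_t$. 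Setting $\clA_t \eqdef \clA^{+}_t - \clA^{-}_t + \clA^{\zeta}_t$, the defining relation \eqref{e:Ahat-sub} of $\clA(\Xt_t)$ gives $\clA_t\in\clA(\Xt_t)$. Positive semidefiniteness of $\clH_t = -\clA_t\clG_t$ follows exactly as before since $-\barclA_t\barclG_t = \barclA_t[\epsy I + \barclA_t^{\transpose}\barclA_t]^{-1}\barclA_t^{\transpose}\succeq 0$ pointwise; the ODE \eqref{e:ode-c-exp-pos} is then inherited from the new \Prop{t:Gamma12}(ii), and the Lyapunov decay \eqref{e:ode-c-lya-pos} from \Prop{t:Gamma12}(iii) combined with the uniform upper bound $[\epsy I + \haA_n^{\transpose}\haA_n]^{-1}\preceq (\epsy+b_\lambda)^{-1}I$.

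The main obstacle is the rescaling step underlying \eqref{e:A-direction}: applied directly to the signed $\barclA_t$, the one-sided inequality of \Prop{t:At-bound} is in the wrong direction on the negative part and does not close. The remedy is precisely the sign decomposition above, which allows the one-sided Taylor inequality to be used in its natural direction for each piece while the cross term $A_{n+1}^{\zeta}$ is smooth and needs no subgradient machinery; once this is done, the matching lower bound obtained by replacing $v_t$ with $-v_t$ reassembles the two one-sided bounds into an equality in the limit, just as in the proof of \Prop{t:Gamma12}(i).
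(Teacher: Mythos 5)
Your proposal is correct and follows essentially the same route as the paper: the sign decomposition $\zeta=\zeta^{+}-\zeta^{-}$ so that \Lemma{t:taylor-f} and the fast-time-scale machinery apply to each non-negative piece, standard SA for the smooth $\haAz_n$ term, reassembly via \eqref{e:barf-der-decomp}, and the Banach--Saks/weak-limit argument applied componentwise for \Prop{t:sa-sub-seq}(ii). The paper obtains the two-sided identities $\barf^{\pm}(\barxt_t;v_t)=-\barf^{\pm}(\barxt_t;-v_t)$ by applying \Lemma{t:chain1} to $t\mapsto\condr^{\barxt_t}(x,u)$, which is the precise justification for the step you describe as holding ``at differentiability points''; otherwise your argument matches the paper's in all essentials.
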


\begin{proof}
For the three claims of \Prop{t:Gamma12}, it suffices to prove that \Prop{t:Gamma12} (i) holds with the new definition \eqref{e:Ahat-sub} of $\clA(\theta)$. The rest of the claims then follow from (i).

At a point $t$ where both $\barxt_t$ and $\barc_t$ are differentiable, denote $v_t=\ddt \barxt_t$. Consider 
\[
\begin{aligned}
 \lim_{s\to 0} \frac{\barf^+(\barxt_t, \barxt_{t+ s})- \barf^+(\barxt_t, \barxt_t)}{s} &=  \lim_{s\to 0}  \sum_{x,u} \varpi(x,u)\zeta^+(\barxt_t,x,u)\frac{\condr^{\barxt_{t+s}}(x,u) - \condr^{\barxt_t}(x,u)}{s} 
\end{aligned}
\]
By \Lemma{t:chain1}, $\condr^{\barxt_t}(x, u)$ is differentiable for each state-action pair and \emph{a.e.} $t$, 
and hence
\[
\barf^+(\barxt_t;v_t) =-\barf^+(\barxt_t;-v_t) \,, \qquad  \text{for } \, a.e. \, t\in \posRe
\]
The same arguments imply $ \barf^-(\barxt_t;v_t) =-\barf^-(\barxt_t;-v_t)$ for \emph{a.e.}  $t\in \posRe$. Then, with \Prop{t:At-bound-n-zeta} (i), the same arguments used to establish \Prop{t:Gamma12} (i) yield those conclusions: For each $T>0$,
\[
\begin{aligned}
\lim_{T_0\to \infty}\int_{T_0}^{T_0+T}\|\barclAp_t v_t -  \barf^+(\barxt_t;v_t)\|_{\infty} \, dt &= 0 \\
\lim_{T_0\to \infty}\int_{T_0}^{T_0+T} \|\barclAn_t v_t - \barf^-(\barxt_t;v_t)\|_{\infty} \, dt &=  0
\end{aligned}
 \qquad  
\]
 It follows from \eqref{e:barf-der-decomp} that 
\[
\ddt \barc_t =   \barf^+(\barxt_t;v_t) - \barf^-(\barxt_t;v_t) + \Expect_{\varpi}[\clD(\barxt_t,\Phi_{n+1})\partial_\xi \elig_n]v_t
\]
Therefore,
\[
\begin{aligned}
\int_{T_0}^{T_0+T} \|\barclA_t v_t - \ddt\barc_t\|_{\infty} \, dt \leq & \int_{T_0}^{T_0+T}\|\barclAp_t v_t - \barf^+(\barxt_t;v_t)\|_{\infty}  + \|\barclAn_t v_t -\barf^-(\barxt_t;v_t)\|_{\infty} \, dt \\
																		  &+ \int_{T_0}^{T_0+T}\|\barclAz_t v_t - \Expect_{\varpi}[\clD(\barxt_t,\Phi_{n+1})\partial_\xi \elig_n]v_t\|_{\infty} \, dt
\end{aligned}
\]
where the right hand side of the above inequality goes to 0 as $T_0\to\infty$.

For the conclusions of \Prop{t:sa-sub-seq}, we only need to prove (ii) with the new $\clA(\theta)$. Let $\clA_t^+, \clA_t^-, \clA_t^{\zeta}$ denote the weak sub-sequential limits of $\{\barclAp_{T_0+t},\barclAn_{T_0+t}, \barclAz_{T_0+t}: T_0\geq 0\,, 0\leq t\leq T\}$ respectively. By \Prop{t:At-bound-n-zeta} (ii), the same arguments used for \Prop{t:sa-sub-seq} (ii) apply to each of $\clA^+_t$ and $\clA^-_t$,
\[
	\clA^+_t \in \clA^+(\Xt_t) \,, \qquad 
	\clA^-_t \in \clA^-(\Xt_t) \,, \qquad t\in[0,T]
\]
	We also have $\clA_t^{\zeta} = \Expect_{\varpi}[ \clD(\Xt_t, \Phi_{n+1})\partial_\xi \elig_n]$ from \Prop{t:At-bound-n-zeta} (iii). Therefore, $\clA_t = \clA_t^+ - \clA_t^- + \clA_t^{\zeta}$,  and $\clA_t \in  \clA(\Xt_t)$ for each $t\in [0, T]$.
\end{proof}

Following the same arguments as in \Section{s:generalities}, 
the ODE approximations and ODE limits established in \Prop{t:direction-general} imply the following extension of
\Theorem{t:ZapQ-main}:
\begin{theorem}
 The conclusions of \Theorem{t:ZapQ-main} hold,  subject to the modified definition of $\clA(\theta)$ in \eqref{e:Ahat-sub}.   
\end{theorem}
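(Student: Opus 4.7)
The plan is to inherit the architecture already built for the non-negative case and show that the sign decomposition $\zeta=\zeta^+-\zeta^-$ reduces the general eligibility setting to two applications of the machinery in Sections~\ref{s:fast-time}--\ref{s:generalities}, plus a standard SA treatment of the smooth residual. Concretely, I would treat the split recursions $\{\haAp_n\}$ and $\{\haAn_n\}$ as in \Prop{t:At-bound}, and handle $\{\haAz_n\}$ by ordinary two-time-scale SA, because $A_{n+1}^\zeta = \clD(\theta_n,\Phi_{n+1})\partial_\xi\zeta(\theta_n,X_n,U_n)$ is globally Lipschitz in $\theta_n$ under~(A2). Summed, this is exactly the content of \Prop{t:At-bound-n-zeta}, which is the fast-time-scale ingredient for the general case.

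Next I would lift these fast-time-scale bounds to the slow-time-scale processes. Following the template of \Prop{t:Gamma12}(i), the key identity to verify is that at a.e.\ point of differentiability of $\barxt_t$ and $\barc_t$, $\barclAp_t v_t \approx \barf^+(\barxt_t;v_t)$ and $\barclAn_t v_t\approx \barf^-(\barxt_t;v_t)$ in an $L^1$ sense on compact intervals as $T_0\to\infty$. This uses the same trick as before: \Prop{t:At-bound-n-zeta}(i) gives only a one-sided inequality, but differentiability of each $\condr^\theta(x,u)$ (via \Lemma{t:chain1}) forces $\barf^\pm(\theta;v)=-\barf^\pm(\theta;-v)$ along the trajectory, turning the one-sided bound into an approximation. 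Combined with \Prop{t:At-bound-n-zeta}(iii) and the decomposition \eqref{e:barf-der-decomp}, this yields $\int_{T_0}^{T_0+T}\|\barclA_t v_t-\tfrac{d}{dt}\barc_t\|_\infty\,dt\to 0$, which is the analogue of \Prop{t:Gamma12}(i) for the general case. Parts (ii)--(iii) of \Prop{t:Gamma12} then follow from (i) exactly as in the proof given earlier.

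The subsequential-limit identification \Prop{t:sa-sub-seq}(ii) is upgraded component-wise: Banach--Saks plus \Prop{t:At-bound-n-zeta}(ii) gives $\clA^\pm_t\in\clA^\pm(\Xt_t)$, while \Prop{t:At-bound-n-zeta}(iii) pins $\clA^\zeta_t = \Expect_\varpi[\clD(\Xt_t,\Phi_{n+1})\partial_\xi\zeta_n]$ at the integrated value. Adding the three pieces, $\clA_t\in\clA(\Xt_t)$ as defined by \eqref{e:Ahat-sub}. The remaining items (i), (iii), (iv) of \Prop{t:sa-sub-seq} go through without any change, since they never used the sign structure of $\zeta$: (i) is just Lipschitz continuity of $\barf$, (iii) uses only that $-[\epsy I+\haA^\transpose\haA]^{-1}$ is negative semidefinite and uniformly bounded, and the Lyapunov integral inequality \eqref{e:ode-sa-c2-ineq} together with weak lower semicontinuity of the $L^2$ norm yields \eqref{e:ode-c-lya-pos}.

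Once \Prop{t:Gamma12} and \Prop{t:sa-sub-seq} are in hand for the general model, the proof of \Theorem{t:ZapQ-main} recorded in \Section{s:generalities} transfers verbatim: \Lemma{t:odeFoster} applied with $V(\Xt)=\tfrac12\|\barf(\Xt)\|^2$ and $U(\Xt)=b_V\|\clA^\transpose\barf(\Xt)\|^2$ produces $\barf(\theta_n)\to 0$ a.s.\ under (A1)--(A2), and adds $\theta_n\to\theta^*$ under (A3). The only genuinely delicate step is the second paragraph above, namely the upgrade from one-sided bounds to the directional-derivative approximation without the positivity of $\zeta$; the mechanism that makes it work is that the signed decomposition preserves the max structure of the temporal difference inside each piece, so the Taylor lower bound of \Lemma{t:taylor-f} and the rest of the fast-time-scale analysis carry over to $\barf^+$ and $\barf^-$ separately. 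Everything else is bookkeeping.
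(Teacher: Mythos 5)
Your proposal is correct and follows essentially the same route as the paper: the signed decomposition $\zeta=\zeta^+-\zeta^-$, the component-wise fast-time-scale analysis culminating in \Prop{t:At-bound-n-zeta}, the a.e.-differentiability argument that upgrades the one-sided bounds to the directional-derivative approximation in \Prop{t:direction-general}, and the verbatim transfer of the proof of \Theorem{t:ZapQ-main}. No gaps to report.
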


\subsection{Numerical experiments: implementation details}
\label{s:numDetails}

\paragraph{Complexity of Zap Q-learning}

For the Zap Q-learning algorithm  \eqref{e:zap},  per-iteration complexity comes from various sources:
\begin{romannum}
\item  Computation of $f(\theta_n, \Phi_{n+1})$ involves a maximum to obtain $\uQ^{\theta_n}$ in   \eqref{e:safn-d}.   
\item  The derivatives $A_{n+1}=\partial_\theta f(\theta_n, \Phi_{n+1})$ are easily computed for linear parameterization of $Q^\theta$, but require back-propagation in a neural network function approximation architecture.

\item  Computation of $G_{n+1}f(\theta_n,\Phi_{n+1})$ in \eqref{e:zap-G} and \eqref{e:zap-theta} requires    (i) multiplication of two $d\times d$ matrices, and (ii) multiplying a matrix inverse and a vector. Each of these two steps has worst case computational complexity $O(d^{3})$.
\end{romannum}
As discussed in  \Section{s:numerics},  the complexity in (iii) can be reduced by updating the gain only periodically, while continuously updating estimates of $A(\theta_n)$.  
 

The complexity bound  $O(Nd^{3}/N_d + Nd^2)$ given in \Section{s:numerics}  is based on gain updates performed only at integer multiples of $N_d$.  This bound is based on the accounting (i)---(iii) above: 
 $O(d^2)$ complexity per iteration in \eqref{e:zap-Ahat},  and $O(d^{3})$ complexity for the matrix inverse (as well as the product $\haA_{n+1}^\transpose\haA_{n+1}$ appearing in \eqref{e:zap-G}).


\paragraph{Meta-parameters in experiments}

We used $\epsy = 10^{-6}$ in \eqref{e:zap-G} for Mountain car and Acrobot, $\epsy = 10^{-4}$ for Cartpole.

For the decreasing step-size rule, we used $\rho=0.85$ and $n_0=100$ in \eqref{e:gains}.
For constant step-size experiments, we used
\[
\alpha_n \equiv \alpha\,, \qquad \stepf_n \equiv  \stepf = 100\alpha
\]
The choice of $\alpha$ itself was problem specific:    $\alpha=0.002$ for the network of size $6\times 3$ in the Mountain car example; $\alpha=0.005$ for  other experiments using constant step-size.  The average reward $\clR(\phi^{\theta_n})$ defined in \eqref{e:avr-reward}
 was estimated   by running 100 independent simulations following the policy $\phi^{\theta_n}$. The deterministic upper bound $\bar\tau$ was $200$ for Mountain car and Acrobot, and $\bar\tau=1000$ for Cartpole.

\paragraph*{Q-network}  The input space $\action$ in each of the examples is a finite set of scalars.  
Recall that the size of neural networks indicated in 	\Figure{f:examples}
  refers to the size of hidden layers, with the input to the network $(x,u)$ and the output $Q^\theta(x,u)$; hence, in the Cartpole example with $(x,u)\in\Re^5$,  the network size $30\times 24\times 16$  corresponds to $\theta\in\Re^d$, with  $d= 1341$:
  \[
  d = (5 + 1) * 30 + (30 + 1)* 24 + (24 + 1)*16 + (16 +1) = 1341
  \]
 where each${}+1$ accounts for a bias parameter.

\paragraph*{Policy}  The theory developed in this paper assumes a randomized stationary policy for exploration. In our experiments, we apply the parameter-dependent $\epsilon$-greedy exploration: At  iteration $n$,
\[
U_n = \begin{cases}
\phi^{\theta_{n}}(X_n), & \text{ with probability } 1-\epsilon \\
\text{\tt rand}, & \text{ with probability } \epsilon \\
\end{cases}
\] 
We set  $\epsilon = 0.4$ for the Mountain Car and Acrobot, and $\epsilon=0.2$ for Cartpole.

\end{document}